%% file: main.tex
\documentclass{article}

 \usepackage[preprint]{neurips_2026}

\usepackage[utf8]{inputenc} 
\usepackage[T1]{fontenc}    
\usepackage{hyperref}       
\usepackage{url}            
\usepackage{booktabs}       
\usepackage{amsfonts}       
\usepackage{nicefrac}       
\usepackage{microtype}      
\usepackage{xcolor}         
\usepackage{graphicx}
\usepackage{amsmath}
\usepackage{amssymb}
\usepackage{mathtools}
\usepackage{amsthm}
\usepackage{multirow}
\usepackage{algorithm}
\usepackage{algpseudocode}
\usepackage{comment}
\usepackage{wrapfig}
\usepackage{graphicx}
\usepackage[labelformat=empty]{subfig}
\usepackage[capitalize]{cleveref}
\Crefname{section}{Sec.}{Secs.}
\Crefname{equation}{Eq.}{Eqs.}
\Crefname{figure}{Fig.}{Figs.}
\Crefname{tabular}{Tab.}{Tabs.}

\usepackage{pifont}
\newcommand{\cmark}{\ding{51}}
\newcommand{\xmark}{\ding{55}}
\newcommand{\halfmark}{\ding{109}}

\newcommand{\myparagraph}[1]{\vspace{0pt}\noindent{\bf #1}}

\title{Can Circuit Alignment Predict OOD Generalization?}

\author{%
  Ayan Banerjee$^{1,\S}$,
    Abhra Chaudhuri$^{2,\S}$, 
  Josep Llados$^1$,
  Umapada Pal$^3$,
  Anjan Dutta$^4$, \\
  $^1$Computer Vision Center, Universitat Autònoma de Barcelona,
  $^2$Fujitsu Research of Europe, \\$^3$Indian Statistical Institute, Kolkata,
  $^4$ University of Surrey \\
  $^1$\texttt{\{abanerjee,josep\}@cvc.uab.cat}, $^2$\texttt{abhra.chaudhuri@fujitsu.com} \\
  $^3$\texttt{umapada@isical.ac.in}, $^4$\texttt{anjan.dutta@surrey.ac.uk}
}

\theoremstyle{plain}
\newtheorem{theorem}{Theorem}
\newtheorem{proposition}{Proposition}
\newtheorem{corollary}{Corollary}

\theoremstyle{definition}
\newtheorem{definition}{Definition}
\newtheorem{assumption}{Assumption}
\newtheorem{property}{Property}

\theoremstyle{remark}

\crefname{property}{Property}{Properties}
\Crefname{property}{Property}{Properties}

\newcommand{\ie}{\textit{i.e.}}

\begin{document}

\maketitle
\def\thefootnote{\S}\footnotetext{Equal Contribution.}

\input{neurips_tex/00_abs}
\input{neurips_tex/01_intro_org_bkup_6May}
\input{neurips_tex/02_01_related}
\input{neurips_tex/04_01_requirements}

\input{neurips_tex/04_02_cas_construction}
\input{neurips_tex/04_03_consistency}
\input{neurips_tex/05_exp}
\input{neurips_tex/06_conc}

\clearpage
\bibliographystyle{plain}
\bibliography{main}

\clearpage
\appendix
\input{neurips_tex/07_supply}


\end{document}

%% file: neurips_tex/00_abs.tex
\begin{abstract}

    Can out-of-distribution (OOD) generalization be predicted from a trained model's weights alone, without any target-domain data? Existing representational similarity metrics (CKA, SVCCA, RSA) compare activations rather than forecast generalization. We show they are provably insensitive to structural rerouting in the computational graph, the very change distribution shift induces. We close this gap with the Circuit Alignment Score (CAS), which compares class-specific circuits across domains via graph kernels, decomposed into same-class coherence and cross-class confusion. Casting CAS as a Lebesgue integral over the domain distribution, we prove its Monte Carlo estimate recovers the ground-truth ranking of learners by OOD accuracy, with pairwise inversion error vanishing at rate $O(1/M)$, where $M$ is the number of sampled domains. Across $48$ learners on PACS, CAS attains $0.88$ rank correlation with OOD accuracy, versus $0.58$ (CKA), $0.23$ (SVCCA), and $0.14$ (RSA), with similar trends on other benchmarks and even against data-dependent methods, making it the first provably consistent predictor of distributional robustness requiring neither target-domain data nor labels. The code is available at: \url{https://github.com/ayanban011/ACE}.

\end{abstract}

%% file: neurips_tex/01_intro_org_bkup_6May.tex
\section{Introduction}
\label{sec:introduction}

When can we predict whether a neural network will generalize under distribution shift, given only its trained weights? The question is not mere curiosity: in deployment settings ranging from clinical diagnostics to multilingual NLP, target-domain labels are unavailable by construction, and even unlabeled target samples may be scarce or arrive only after the model has been committed to. Yet, the field lacks a formally grounded answer. Methods that estimate target accuracy from unlabeled target data~\citep{baek2022agreement, garg2022leveraging, yu2022projection} presuppose access to the very distribution whose effect we wish to predict, and theoretical accounts of OOD generalization~\citep{ye2021towards, kaurmodeling} characterize \emph{when} generalization is possible but yield no computable diagnostics on weights. \textbf{No prior work, to our knowledge, derives the structural conditions any weight-only OOD predictor must satisfy, nor establishes a consistency guarantee for ranking learners by such a predictor.}

The natural candidates -- representational similarity metrics such as CKA~\citep{kornblith2019similarity}, SVCCA~\citep{raghu2017svcca}, and RSA~\citep{kriegeskorte2008representational}, operate on activation geometry and are provably blind to structural rerouting in the underlying computation. Two models can produce nearly identical penultimate-layer activations on a source domain while routing them through entirely different computational pathways, with sharply divergent OOD behavior. Rerouting is precisely what distribution shift induces; activation-level metrics measure the wrong object.

We propose to utilize a more expressive measure. Mechanistic interpretability has matured to the point where the computation a network performs for a given behavior can be localized to a sparse, causally-grounded subgraph of neurons and connections~\citep{ameisen2025circuit, cammarata2020thread, conmy2023towards, elhage2021mathematical, olah2020zoom, wanginterpretability}.
These \emph{circuits} are the natural locus at which to ask whether a model's computation is preserved under distribution shift~\citep{sharkey2025open}: we prove that, in the limit of perfect OOD accuracy, a learner is necessarily \emph{circuit-robust}, \ie, its class-specific circuits are preserved across domains and remain distinct between classes, grounding circuit invariance as a principled structural proxy for OOD robustness rather than a heuristic.

\begin{wrapfigure}[19]{r}{0.5\textwidth}
\vspace{-4mm}
\centering
\includegraphics[width=0.5\textwidth]{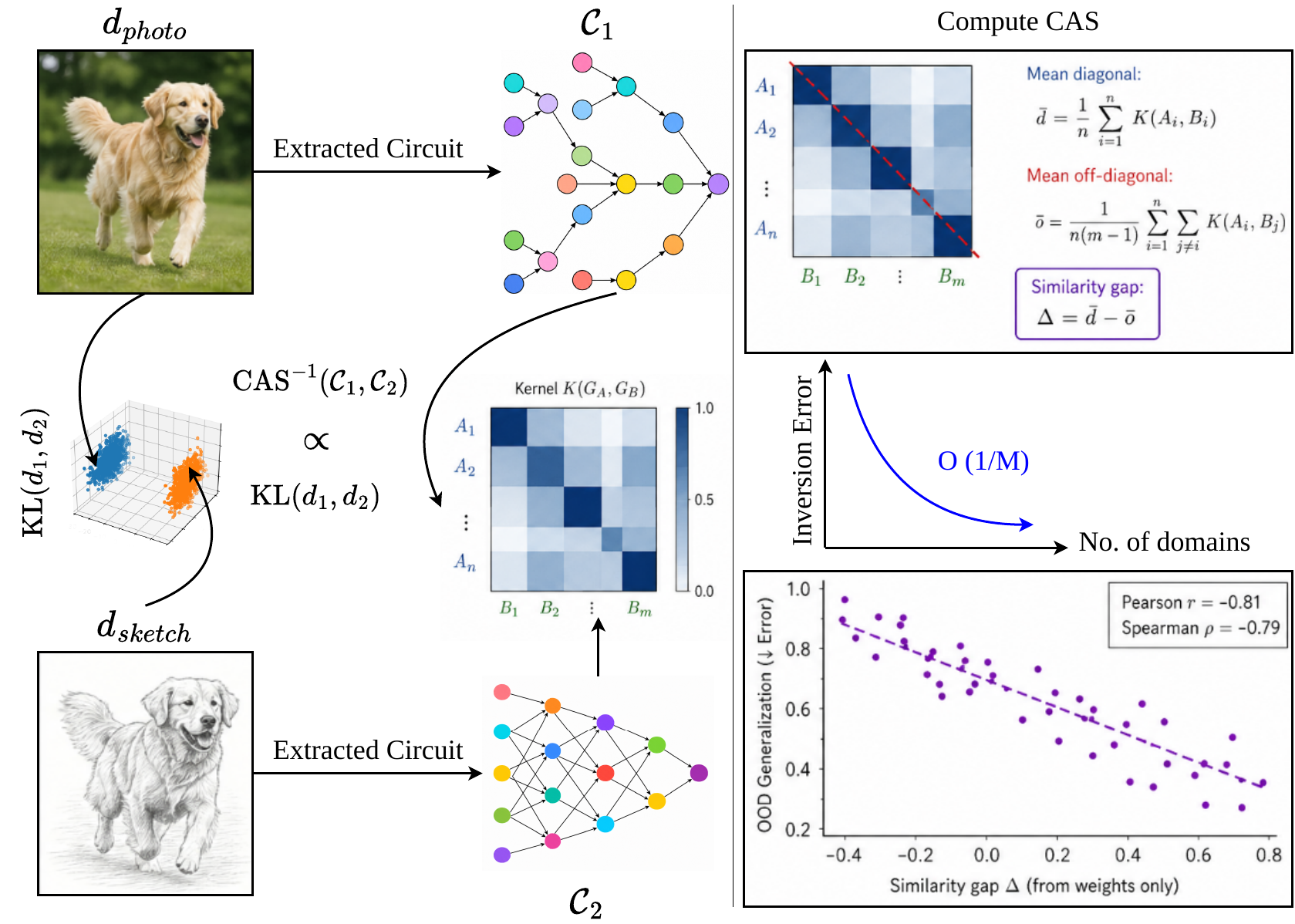}
\vspace{-2mm}
\caption{\textbf{Circuit Evolution prediction under distribution shift with CAS:} From weights alone, we extract class-specific circuits and compare them via graph kernels to score and rank learners based on OOD accuracy, without target data or labels.}
\label{fig:teaser}
\vspace{-4mm}
\end{wrapfigure}

Building on this, we ask: \emph{what must any weight-only metric satisfy to predict OOD generalization?} We answer with three necessary conditions, each derived via an impossibility theorem showing that any metric violating the condition provably conflates a circuit-robust learner with a non-robust one. The conditions require the metric to (i) be sensitive to structural rerouting, (ii) resolve same-class versus cross-class structure separately, and (iii) respond monotonically to graded perturbation. CKA, SVCCA, and RSA each fail at least two by construction. 

Illustrated in \cref{fig:teaser}, we then introduce the \textbf{Circuit Alignment Score (CAS)}, constructed in three steps, each \emph{necessitated} by the three conditions: comparing circuits via graph kernels, arranging the resulting pairwise similarities into a class-conditional matrix, and aggregating into the gap between mean-diagonal and mean-off-diagonal. They capture \emph{circuit drift} (same-class circuits change across domains) and \emph{circuit entanglement} (different-class circuits collapse onto each other), the two structurally distinct failure modes of distribution shift.

Casting the population CAS as a Lebesgue integral over the domain distribution, we prove that its $M$-sample Monte Carlo estimate ranks learners with pairwise inversion error vanishing at rate $O(1/M)$. Combined with an empirically verifiable monotonicity condition, supported by the limit-case correspondence above and the causal-invariance literature~\citep{chen2023understanding, kaurmodeling, wang2022out}, it yields convergence of the predicted ranking to the ground-truth OOD ranking. \textbf{To our knowledge, this is the first weights-only OOD predictor with both a structural justification and a consistency guarantee.}

Empirically, on a pool of $48$ learners spanning four architectures (MLP, ResNet50, MobileNetV2, ViT-B/16), four training objectives (ERM~\citep{vapnik2013nature}, IRM~\citep{arjovsky2019invariant}, CORAL~\citep{sun2016deep}, DANN~\citep{ganin2016domain}), and three regularizations on PACS~\citep{li2017deeper}, CAS attains Spearman $\rho_S = 0.88$ with leave-one-domain-out OOD accuracy, against $0.58$ (CKA), $0.23$ (SVCCA), and $0.14$ (RSA), and additionally even surpassing methods requiring target domain data and labels, directly verifying the monotonicity condition. The gap is consistent across target domains and extends to Office-Home~\citep{venkateswara2017deep} and DomainNet~\citep{peng2019moment}. A controlled LoRA-interpolation experiment in Stable Diffusion XL~\citep{podellsdxl} further shows CAS varies monotonically with continuous domain shift.

To summarize, (i) we prove that OOD robustness structurally implies circuit robustness in the limit; (ii) we derive three necessary conditions on weight-only OOD-predictive metrics via impossibility theorems and show that CKA, SVCCA, and RSA each fail at least two; (iii) we construct CAS to satisfy all three by design, with each construction step necessitated by one condition; (iv) we prove empirical CAS rankings recover the OOD ranking with pairwise inversion error $O(1/M)$; and (v) we empirically validate the full chain across 48 learners on three benchmarks.

\vspace{-2mm}

%% file: neurips_tex/02_01_related.tex
\section{Related Work}

\myparagraph{Predictors of OOD Generalization.}
Existing approaches include training-loss trends~\cite{brigatocan, kim2026spectral}, influence functions~\cite{ye2021out}, margin-based predictors~\cite{jiangpredicting, mouton2024input}, PAC-Bayes bounds~\cite{lotfi2022pac, picard2025good}, meta-learning~\cite{10825441}, and data-centric estimators~\cite{baek2022agreement, ding2021grounding, garg2022leveraging, yu2022projection}. All require either prediction-variance preservation across domains or unlabeled target data. CAS, in contrast, predicts ranking from source-domain circuits alone.

\myparagraph{Representation Similarity Metrics.}
\citep{ahujaempirical} shows that representational metrics correlate with OOD accuracy only inconsistently. Common measures such as CKA~\cite{kornblith2019similarity}, SVCCA~\cite{raghu2017svcca}, RSA~\cite{kriegeskorte2008representational} operate on activation geometry, and we prove (\cref{sec:properties}) that any such activation-factoring metric provably conflates circuit-robust and non-robust learners.

\myparagraph{Circuit Tracing.}
A growing body of work extracts the circuits a network uses for a target behavior, via attribution patching~\cite{syed2024attribution}, sparse autoencoders~\cite{thasarathan2025universal}, probing~\cite{salin2022vision}, causal tracing~\cite{palit2023towards}, and neuron-level analysis~\cite{schwettmann2023multimodal}, building on foundational frameworks for circuit-level interpretability~\cite{ameisen2025circuit, cammarata2020thread, conmy2023towards, elhage2021mathematical, olah2020zoom, wanginterpretability}. These methods implicitly assume a fixed data distribution, leaving open whether circuits are stable, transferable, or comparable across domains~\cite{sharkey2025open}. We address this gap by defining a metric directly on extracted circuits, agnostic to the choice of extractor.

%% file: neurips_tex/04_01_requirements.tex
\section{Circuit Alignment Score (CAS)}

\subsection{What Does Predicting OOD Generalization from Weights Require?}
\label{sec:properties}

We derive necessary conditions on any metric that predicts OOD generalization from a trained model's weights alone. For each condition, we prove an impossibility theorem (deferred to \Cref{app:property-proofs}): any metric violating the condition assigns identical values to a (distributionally) circuit-robust learner and a non-robust one, and is therefore unable to distinguish them. These conditions are not posited but necessitated by the structural premise established in the causal-invariance literature~\citep{chen2023understanding, kaurmodeling, wang2022out}: that OOD generalization is governed by the preservation of class-conditional computational structure across domains. \Cref{sec:cas-construction} constructs CAS to satisfy all three conditions, and \Cref{sec:consistency} proves that ranking learners by CAS recovers their OOD-accuracy ranking.

\myparagraph{Problem setup:} Let $f$ denote a trained model, $D_1, D_2 \in \Omega_D$ two domains drawn from a domain space with distribution $P_D$, and $\mathcal{C}^{(i)}(f, D)$ the sparse weighted directed subgraph of $f$'s computation mediating predictions for class $i$ on domain $D$. We write $\mathcal{C}(f, D) = \{\mathcal{C}^{(i)}(f, D)\}_{i=1}^c$. A metric $\mu$ takes two such circuit families and returns a scalar.


\myparagraph{Circuit robustness as a proxy for OOD robustness:}
Following standard formalizations~\citep{ye2021towards, gulrajanisearch, kaurmodeling}, the \emph{OOD generalization score} of a learner $\ell$ is its expected accuracy under the domain distribution:
\begin{equation}
\label{eq:ood-score}
g(\ell) \;=\; \int_{\Omega_D} \mathrm{Acc}(\ell; d)\, dP_D(d),
\end{equation}
where $\mathrm{Acc}(\ell; d) \in [0, 1]$ is the classification accuracy of $\ell$ on domain $d$. We say $\ell_i$ is more OOD-robust than $\ell_j$ if $g(\ell_i) > g(\ell_j)$. We show that OOD robustness structurally implies a corresponding form of circuit-level invariance: a learner $\ell$ achieving perfect OOD accuracy must have class-specific circuits that are preserved across domains and remain distinct between classes (\Cref{cor:robustness-implication}). This justifies working with circuit robustness as a structural proxy for OOD robustness. Formally, a learner $\ell$ is called \emph{circuit-robust} between $D_1, D_2$ if:
\begin{equation}
\label{eq:robust}
\kappa(\mathcal{C}_1^{(i)}, \mathcal{C}_2^{(i)}) \geq \alpha \;\; \forall i, \qquad \kappa(\mathcal{C}_1^{(i)}, \mathcal{C}_2^{(j)}) \leq \beta \;\; \forall i \neq j,
\end{equation}
for some $0 \leq \beta < \alpha \leq 1$. The two failure modes of this structure -- \emph{circuit drift} (some same-class similarity falls below $\alpha$) and \emph{circuit entanglement} (some cross-class similarity exceeds $\beta$), are the two ways class-conditional invariance can break down, and a predictive metric must resolve both.

\myparagraph{\cref{prop:structural} -- Structural sensitivity:} A metric must distinguish models that compute the same function via different circuits. Activation-level metrics, those that factor through any embedding into activation space, including CKA~\citep{kornblith2019similarity}, SVCCA~\citep{raghu2017svcca}, and RSA~\citep{kriegeskorte2008representational}, are blind to internal routing: two circuits with identical activation footprints yield identical metric values regardless of how they implement the function. By the universal approximation property of overparameterized networks~\citep{hornik1989multilayer, kawaguchi2016deep}, such functionally-equivalent rerouted circuits provably exist, and one can be circuit-robust while the other is not. \Cref{thm:factoring} (\Cref{app:property-proofs}) shows that any activation-factoring metric assigns the same value to both, hence cannot distinguish robust from non-robust learners.

\myparagraph{\Cref{prop:class-cond} -- Class-conditional resolution:} A metric must report same-class preservation and cross-class entanglement separately. Drift and entanglement are structurally distinct failure modes: drift reduces diagonal similarity while leaving classes distinguishable, whereas entanglement collapses different-class circuits onto each other while diagonals can remain nominally high. A metric that aggregates over all class pairs into a single scalar -- equivalently, one whose value depends only on the multiset of pairwise similarities, not on which pairs are same-class versus cross-class -- cannot separate these regimes. \Cref{thm:aggregating} (\Cref{app:property-proofs}) shows that any such aggregating metric assigns identical values to a circuit-robust configuration and a maximally entangled one, by a permutation argument on the entries of the class-conditional similarity matrix.

\myparagraph{\Cref{prop:semantic} -- Semantic consistency:} A metric must respond monotonically to graded perturbation: if one perturbation simultaneously reduces same-class similarity and increases cross-class similarity relative to another, it must yield a smaller metric value. Without monotonicity, the metric's numerical value carries no ordinal information about shift magnitude, and any ranking of learners or domains derived from it is incoherent. \Cref{thm:non-monotone} (\Cref{app:property-proofs}) shows that non-monotone metrics provably invert perturbation orderings, assigning higher similarity to a strictly more-perturbed configuration, making them unsuitable for ranking-based OOD prediction.


\myparagraph{Summary:} \Cref{tab:properties-failure} reports condition satisfaction for each metric. CKA, SVCCA, and RSA fail P1 (none operate on circuit graphs) and P2 (none form a class-conditional similarity matrix), with P3 thereby vacuous; RSA partially satisfies P2 via its block structure. CAS satisfies all three by construction (\Cref{sec:cas-construction}).

\begin{wraptable}{r}{0.5\textwidth}
\vspace{-4mm}
\centering
\caption{Necessary conditions for OOD-predictive metrics derived in this section. \cmark{} = satisfied by construction; \xmark{} = fails by impossibility theorem (\cref{app:property-proofs}); \halfmark{} = partially satisfied.}
\label{tab:properties-failure}
\resizebox{0.5\textwidth}{!}{
\begin{tabular}{lcccc}
\toprule
Condition & CKA & SVCCA & RSA & CAS \\
\midrule
P1: Structural sensitivity        & \xmark & \xmark & \xmark & \cmark \\
P2: Class-conditional resolution  & \xmark & \xmark & \halfmark & \cmark \\
P3: Semantic consistency          & \xmark & \xmark & \xmark & \cmark \\
\bottomrule
\end{tabular}}
\vspace{-4mm}
\end{wraptable}

%% file: neurips_tex/04_02_cas_construction.tex
\subsection{Construction of Circuit Alignment Score (CAS)}
\label{sec:cas-construction}

We now construct the Circuit Alignment Score, a metric over circuit families satisfying the three necessary conditions derived in \Cref{sec:properties}. The construction proceeds in three steps, each necessitated by one of the conditions: graph-kernel comparison of circuits (P1), arrangement of pairwise similarities into a class-conditional matrix (P2), and decomposition into mean-diagonal minus mean-off-diagonal (P3).

\myparagraph{Step 1 -- Comparing circuits structurally:} P1 forbids any metric that factors through an embedding into activation space. The natural alternative is to compare circuits as the structured objects they are, \ie, sparse weighted directed graphs, using a kernel defined on graph topology rather than activation outputs. We equip circuit space with a graph kernel $\kappa : \mathcal{C} \times \mathcal{C} \to [0, 1]$, normalized so $\kappa(C, C) = 1$, that compares circuits via their nodes and edges. We use treelet \citep{gauzere2012two}, random walk \citep{nikolentzos2020random}, and optimal-transport \citep{petric2019got} graph kernels; \Cref{app:kernel-choice} ablates this choice.

\myparagraph{Step 2 -- Resolving class-conditional structure:} P2 forbids any metric that aggregates pairwise similarities without distinguishing same-class from cross-class pairs. Given two circuit families $\mathcal{C}_1 = \{C_1^{(i)}\}_{i=1}^c$ and $\mathcal{C}_2 = \{C_2^{(i)}\}_{i=1}^c$, we form the class-conditional similarity matrix:
\begin{equation}
\label{eq:S-matrix}
S \in \mathbb{R}^{c \times c}, \qquad S_{ij} = \kappa\!\left(C_1^{(i)}, C_2^{(j)}\right).
\end{equation}
Diagonal entries $S_{ii}$ measure preservation of each class's circuit across domains (low $S_{ii}$ indicates \emph{circuit drift}); off-diagonal entries $S_{ij}$ measure cross-class similarity (high $S_{ij}$ for $i \neq j$ indicates \emph{circuit entanglement}). The two failure modes from \cref{eq:robust} are now read off the matrix at distinct positions, exactly as P2 requires.

\myparagraph{Step 3 -- Aggregating into a scalar:} P3 forbids non-monotone aggregation of $S$. The simplest aggregation that is monotone non-decreasing in each $S_{ii}$ and non-increasing in each $S_{ij}$ ($i \neq j$), and class-permutation invariant, is the difference of uniformly-weighted means:
\begin{equation}
\label{eq:cas}
\mathrm{CAS}(\mathcal{C}_1, \mathcal{C}_2) \;=\; \underbrace{\frac{1}{c}\sum_{i=1}^{c} S_{ii}}_{\text{diagonal coherence}} \;-\; \underbrace{\frac{1}{c(c-1)}\sum_{i \neq j} S_{ij}}_{\text{off-diagonal confusion}}.
\end{equation}
CAS is high when same-class circuits are preserved and different-class circuits remain distinct, i.e., precisely when the learner is circuit-robust (\cref{eq:robust}).

\begin{theorem}[CAS Soundness]
\label{thm:cas-soundness} The Circuit Alignment Score (CAS) defined in \cref{eq:cas} satisfies \cref{prop:structural,prop:class-cond,prop:semantic}.
\end{theorem}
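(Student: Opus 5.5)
The plan is to verify the three conditions separately, each against the contrapositive form used in the impossibility theorems of \Cref{app:property-proofs}. For each property, I would exhibit the structural feature of CAS that blocks the failure mode, and where needed build a separating pair of configurations. Throughout, I use only the definition in \cref{eq:cas}, the normalization $\kappa(C,C)=1$ with $\kappa\in[0,1]$, and the circuit-robustness inequalities \cref{eq:robust}.

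\textbf{P1 (structural sensitivity).} I would show that CAS does not factor through any activation embedding. Take the functionally-equivalent rerouted pair guaranteed by the universal-approximation argument invoked for \Cref{thm:factoring}. The two families have identical activation footprints but different edge sets, one circuit-robust and one not.

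Because $\kappa$ is a graph kernel on nodes and edges (treelet, random-walk, or OT), it can separate graphs with different topology. Concretely, $\kappa(C,C)=1$, while $\kappa(C,C')<1$ for a rerouted $C'$ whenever the kernel is characteristic or injective on the relevant graph class. I would state this injectivity as a mild assumption on $\kappa$.

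Then the robust learner has $S_{ii}\ge\alpha$ and $S_{ij}\le\beta$. The rerouted learner has at least one diagonal entry below $\alpha$ or one off-diagonal entry above $\beta$. I must check that this changes the scalar, and that is the delicate point, handled in P3 below. For P1 itself it suffices to show that CAS takes different values on some such pair. I would get this by choosing the rerouted circuit so that only one diagonal entry drops. By strict monotonicity, CAS then drops by exactly $\Delta/c$ with $\Delta>0$.

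\textbf{P2 (class-conditional resolution).} The impossibility theorem \Cref{thm:aggregating} targets metrics depending only on the multiset of entries of $S$. I would show CAS is not of this type by exhibiting two matrices with the same multiset but different CAS:
\begin{itemize}
\item the robust matrix $S=I$, with $\mathrm{CAS}=1$;
\item a permuted version in which the $1$'s are moved off the diagonal, for example the cyclic shift $S_{i,\sigma(i)}=1$ for a fixed-point-free permutation $\sigma$, with $\mathrm{CAS}=0-\tfrac{c}{c(c-1)}=-\tfrac{1}{c-1}$.
\end{itemize}
Since $1\neq -1/(c-1)$, CAS separates the circuit-robust configuration from the maximally entangled one. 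It also reads drift (diagonal terms) and entanglement (off-diagonal terms) at disjoint index sets.

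\textbf{P3 (semantic consistency).} The partial derivatives of \cref{eq:cas} are $\partial\mathrm{CAS}/\partial S_{ii}=1/c>0$ and $\partial\mathrm{CAS}/\partial S_{ij}=-1/(c(c-1))<0$ for $i\neq j$. Hence if perturbation $B$ has $S^B_{ii}\le S^A_{ii}$ for all $i$ and $S^B_{ij}\ge S^A_{ij}$ for all $i\neq j$, with at least one strict inequality, linearity gives $\mathrm{CAS}(B)<\mathrm{CAS}(A)$. This is exactly the ordering required, so the inversion of \Cref{thm:non-monotone} cannot occur.

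As a corollary, circuit-robust learners satisfy $\mathrm{CAS}\ge\alpha-\beta>0$, which ties P3 back to \cref{eq:robust}.

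\textbf{Main obstacle.} The hard step is P1. Structural sensitivity ultimately rests on the graph kernel genuinely distinguishing rerouted circuits, which is a property of $\kappa$ rather than of the aggregation. I would handle this by making explicit an assumption that $\kappa(C,C')<1$ whenever $C\ne C'$ as weighted graphs. This holds for OT kernels with a metric ground cost and for random-walk kernels on distinct adjacency spectra. I would note that P1 holds relative to that assumption.
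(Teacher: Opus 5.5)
Your proof has the same structure and ingredients as the paper's. P1 comes from the kernel separating topologically distinct circuits, P2 from CAS treating diagonal and off-diagonal entries of $S$ differently, and P3 from the signs of $\partial\mathrm{CAS}/\partial S_{ij}$. In several places you are sharper than the paper:
\begin{itemize}
\item Your explicit P2 witness ($S=I$ versus a fixed-point-free permutation matrix, giving $1$ versus $-1/(c-1)$) replaces a bare assertion.
\item In P1 you rightly insist that a change in $S$ must actually move the scalar; the paper stops at $\kappa(C,C')<1$.
\item Your injectivity hypothesis is the one the argument actually needs. The paper's Condition~2, read literally, sets $\kappa=1$ on functionally redundant reroutings, which are exactly the circuits built in \cref{thm:factoring}.
\end{itemize}

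The step that does not go through as written is choosing the rerouting so that \emph{only one} diagonal entry drops. Rerouting $C^{(1)}$ changes the entire first column of $S$: each $\kappa(C_1^{(j)},C_3^{(1)})$, $j\neq 1$, generically moves too. Nothing in your setup lets you hold these fixed, and if some of them decrease, the off-diagonal term can offset the diagonal loss. So ``CAS drops by exactly $\Delta/c$'' is unsupported.

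The repair is to use the reference configuration of \cref{thm:factoring} itself. There $\mathcal{C}_2=\mathcal{C}_1$ is robust with $\alpha=1$, $\beta=0$, i.e., $S(\mathcal{C}_1,\mathcal{C}_2)=I$ and $\mathrm{CAS}(\mathcal{C}_1,\mathcal{C}_2)=1$. Take any rerouted $\mathcal{C}_3$ with $\phi(\mathcal{C}_3)=\phi(\mathcal{C}_2)$ and some $\kappa(C_1^{(i)},C_3^{(i)})<1$. Its off-diagonal entries can only rise from $0$ because $\kappa\ge 0$, so $\mathrm{CAS}(\mathcal{C}_1,\mathcal{C}_3)\le\frac{1}{c}\sum_i\kappa(C_1^{(i)},C_3^{(i)})<1$, which is the equality case of \cref{prop:boundedness}. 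Hence CAS cannot factor through $\phi$, and no control of the column is needed.

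Two smaller points. First, in P3, \cref{def:perturbation-order} compares off-diagonal \emph{row means}, not individual entries. Your entrywise hypothesis is therefore stronger than the one in the second clause of \cref{prop:semantic}. Add the line, as the paper does, that CAS sees the off-diagonal entries only through $\sum_{i\neq j}S_{ij}$, whose ordering follows by summing the row-mean inequalities over $i$.

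Second, keep injectivity strictly as a hypothesis, stated for non-isomorphic graphs since these kernels are isomorphism-invariant. Your claim that it holds for random-walk kernels on graphs with distinct spectra is false. Take $C_{100}$ and $C_{50}\sqcup C_{50}$ with identical attributes: they have different adjacency spectra but identical walk counts and identical treelet counts. So the normalized random-walk kernel and the treelet kernel (the paper's default) both return $1$ on this non-isomorphic pair.
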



\myparagraph{Sound by construction:} $\kappa$ acts on graph topology (P1), CAS decomposes into independently-reportable same-class and cross-class terms (P2), and is linear in $S$ with the correct monotonicity signs (P3). \cref{app:cas-soundness-proof} provides the full proof along with boundedness, symmetry, and self-similarity.


\myparagraph{Decomposability as a diagnostic:} Beyond the scalar score, $S$ itself exposes a \emph{class vulnerability profile} (low diagonal entries) and a \emph{confusion topology} (entangled off-diagonal pairs), enabling per-class failure analysis that aggregate accuracy obscures (\cref{sec:experiments}).

%% file: neurips_tex/04_03_consistency.tex
\subsection{Consistency: From CAS to OOD Ranking}
\label{sec:consistency}

We lift CAS from a domain comparator to a learner-level diagnostic via a population integral over the domain distribution, prove its Monte Carlo estimate converges with $O(1/M)$ pairwise inversion error, and combine this with a monotonicity condition to recover the true OOD accuracy ranking.

\myparagraph{Population CAS:} Let $\mathcal{L} = \{\ell_1, \ldots, \ell_N\}$ be a pool of learners trained on a common source distribution, and write $\mathrm{CAS}(\ell; d)$ for the CAS between $\ell$'s source and domain-$d$ circuits. The \emph{population CAS} is the Lebesgue integral~\citep{gordon1994integrals}:
\begin{equation}
\label{eq:population-cas}
\overline{\mathrm{CAS}}(\ell) \;=\; \int_{\Omega_D} \mathrm{CAS}(\ell; d) \, dP_D(d),
\end{equation}
well-defined because $\mathrm{CAS} \in [-1,1]$ (\cref{prop:boundedness}). This parallels the OOD score $g(\ell)$ in \cref{eq:ood-score}; \cref{assumption:monotonicity} connects the two.

\myparagraph{Monte Carlo estimator:} The population integral is approximated by drawing $M$ domains $\{d_j\}_{j=1}^M \stackrel{\text{i.i.d.}}{\sim} P_D$ and averaging:
\vspace{-1mm}
\begin{equation}
\label{eq:mc-estimator}
\widehat{\mathrm{CAS}}_M(\ell) \;=\; \frac{1}{M}\sum_{j=1}^M \mathrm{CAS}(\ell; d_j).
\end{equation}
By the strong law of large numbers, $\widehat{\mathrm{CAS}}_M(\ell) \to \overline{\mathrm{CAS}}(\ell)$ almost surely as $M \to \infty$. The remaining question is the rate at which this convergence preserves \emph{pairwise rankings} between learners -- which is what determines whether finite-sample CAS estimates produce the right ordering.

\myparagraph{Pairwise inversion error:} For two learners $\ell_i, \ell_j$ with $\overline{\mathrm{CAS}}(\ell_i) > \overline{\mathrm{CAS}}(\ell_j)$, define the discrepancy $\delta_{ij}(d) = \mathrm{CAS}(\ell_i; d) - \mathrm{CAS}(\ell_j; d)$, with population mean $m_{ij} = \overline{\mathrm{CAS}}(\ell_i) - \overline{\mathrm{CAS}}(\ell_j) > 0$ and finite variance $\sigma_{ij}^2 = \int_{\Omega_D} (\delta_{ij}(d) - m_{ij})^2 \, dP_D(d)$. A pairwise \emph{inversion} occurs when the empirical estimate flips the population ordering. The inversion probability under $M$ samples is:
\[
p^{(M)}_{\mathrm{inv}}(\ell_i, \ell_j) \;=\; \Pr\!\left[\,\frac{1}{M}\sum_{j=1}^M \delta_{ij}(d_j) \leq 0 \,\right], \qquad d_j \stackrel{\text{i.i.d.}}{\sim} P_D.
\]

\begin{theorem}[Monte Carlo consistency of CAS rankings]
\label{thm:mc-consistency}
For any two learners $\ell_i, \ell_j$ with $\overline{\mathrm{CAS}}(\ell_i) \neq \overline{\mathrm{CAS}}(\ell_j)$, the inversion probability satisfies:
\vspace{-2mm}
\[
p^{(M)}_{\mathrm{inv}}(\ell_i, \ell_j) \;\leq\; \frac{\sigma_{ij}^2}{M \cdot m_{ij}^2} \;=\; O\!\left(\frac{1}{M}\right).
\]
Aggregated over all pairs in a learner pool $\mathcal{L}$ of size $N$, the total inversion probability $P^{(M)}_{\mathrm{inv}} = \binom{N}{2}^{-1}\sum_{i<j} p^{(M)}_{\mathrm{inv}}(\ell_i, \ell_j) \to 0$ as $M \to \infty$.
\end{theorem}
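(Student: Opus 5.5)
The plan is to reduce the bound to a one-sided Chebyshev inequality applied to the sample mean of the discrepancies. Without loss of generality we label the pair so that $\overline{\mathrm{CAS}}(\ell_i) > \overline{\mathrm{CAS}}(\ell_j)$; otherwise we swap the roles of $\ell_i$ and $\ell_j$, which is allowed because the hypothesis is symmetric. This gives $m_{ij}>0$.

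\textbf{Integrability.} By \cref{prop:boundedness}, each $\mathrm{CAS}(\ell;d)\in[-1,1]$. Hence $\delta_{ij}(d)\in[-2,2]$ is a bounded measurable function on $\Omega_D$. It follows that $m_{ij}$ is a well-defined Lebesgue integral and that $\sigma_{ij}^2\le 4<\infty$.

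\textbf{Moments of the estimator.} Set $\bar\delta_M=\frac1M\sum_{k=1}^M\delta_{ij}(d_k)$ with $d_k$ i.i.d. $\sim P_D$. By linearity, $\mathbb{E}\bar\delta_M=m_{ij}$. By independence the cross terms vanish, so $\mathrm{Var}(\bar\delta_M)=\sigma_{ij}^2/M$.

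\textbf{Tail bound.} The inversion event $\{\bar\delta_M\le 0\}$ is contained in $\{|\bar\delta_M-m_{ij}|\ge m_{ij}\}$, since $\bar\delta_M\le 0$ forces $m_{ij}-\bar\delta_M\ge m_{ij}$. Chebyshev's inequality then gives
\[
p^{(M)}_{\mathrm{inv}}(\ell_i,\ell_j)\le \frac{\mathrm{Var}(\bar\delta_M)}{m_{ij}^2}=\frac{\sigma_{ij}^2}{M\,m_{ij}^2}.
\]
This is $O(1/M)$ because $\sigma_{ij}^2$ and $m_{ij}$ do not depend on $M$. Cantelli's inequality would give the slightly sharper bound $\sigma^2/(\sigma^2+Mm^2)$, but that is not needed.

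\textbf{Aggregation.} For a finite pool of size $N$, every pair with distinct population CAS satisfies the bound above. Averaging over the $\binom N2$ pairs gives
\[
P^{(M)}_{\mathrm{inv}}\le \frac1M\max_{i<j}\frac{\sigma_{ij}^2}{m_{ij}^2},
\]
and this tends to $0$ because the maximum is taken over finitely many terms, each finite. The main subtlety is pairs with $m_{ij}=0$, which the theorem excludes. For the aggregate statement I would therefore either assume all population CAS values in $\mathcal{L}$ are distinct, or restrict the average to pairs with $m_{ij}\neq 0$. Finiteness of $N$ is also essential here. If $\min m_{ij}$ is allowed to shrink with $N$, the rate degrades to $O\bigl(1/(M\min m_{ij}^2)\bigr)$, and I would state this dependence explicitly.
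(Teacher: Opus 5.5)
Your proposal is correct and follows the paper's proof essentially step for step: boundedness of $\mathrm{CAS}$ gives $\delta_{ij}\in[-2,2]$ and a finite $\sigma_{ij}^2$; the inversion event lies inside $\{|\bar\delta_M - m_{ij}|\ge m_{ij}\}$; Chebyshev gives the per-pair bound; and averaging over finitely many pairs gives $O(1/M)$. Your caveat about pairs with $m_{ij}=0$ is well taken --- the paper assumes such pairs away implicitly by bounding the constants through $\min_{i<j} m_{ij}^2$ --- and its ``union bound'' in the aggregate step is really just the averaging you do.
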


\noindent The proof (\cref{app:mc-consistency-proof}) is an application of Chebyshev's inequality to the i.i.d.\ Monte Carlo estimator of the integral $m_{ij}$. The bound depends on $\sigma_{ij}^2 / m_{ij}^2$, which is small whenever the gap between learners' population CAS values is large relative to their domain-wise variability.
\cref{app:inter-vs-intra} discusses why this gap is typically larger between learners trained by different domain-generalization algorithms than within a single algorithm family, yielding faster convergence on the cross-algorithmic pairs that matter most for model selection.

\myparagraph{From CAS rankings to OOD rankings:} \cref{thm:mc-consistency} establishes that empirical CAS rankings converge to population CAS rankings. To translate this into a guarantee about OOD-accuracy rankings, we need the population CAS to be ordinally aligned with the OOD generalization score:

\begin{assumption}[Monotonicity of CAS in OOD accuracy]
\label{assumption:monotonicity}
For all $\ell_i, \ell_j \in \mathcal{L}$,
\[
g(\ell_i) > g(\ell_j) \;\;\Longrightarrow\;\; \overline{\mathrm{CAS}}(\ell_i) > \overline{\mathrm{CAS}}(\ell_j).
\]
\end{assumption}

\noindent This assumption is the connection between the structural premise of \cref{sec:properties} (circuit invariance underlies OOD generalization) and the metric defined in \cref{sec:cas-construction}. We do not prove it from first principles; instead, we verify it empirically across all 48 learners in our pool on PACS, Office-Home, and DomainNet (\cref{sec:experiments}, and \cref{sec:refm} in the appendix), where the empirical rank correlation between $\overline{\mathrm{CAS}}$ and $g$ exceeds $\rho_S = 0.88$ on PACS and remains consistent across benchmarks. \cref{cor:robustness-implication} provides theoretical support: in the limit of perfect OOD accuracy, OOD robustness structurally implies circuit robustness, and CAS, by construction, is monotone in circuit robustness.
Combining \cref{thm:mc-consistency} with \cref{assumption:monotonicity} yields the predictive claim:

\begin{proposition}[Ranking recovery]
\label{prop:ranking-recovery}
Let $\mathcal{L}^* = (\ell_{\pi^*(1)}, \ldots, \ell_{\pi^*(N)})$ be the ground-truth ranking by $g$, and $\mathcal{L}'_M = (\ell_{\pi'_M(1)}, \ldots, \ell_{\pi'_M(N)})$ the predicted ranking by $\widehat{\mathrm{CAS}}_M$. Under \cref{assumption:monotonicity}, the Kendall rank correlation~\citep{abdi2007kendall} between the two satisfies:
\[
\lim_{M \to \infty} \tau(\mathcal{L}^*, \mathcal{L}'_M) = 1 \quad \text{in probability.}
\]
\end{proposition}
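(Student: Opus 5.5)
Since $\tau \le 1$ always, the plan is to show that $1-\tau(\mathcal{L}^*,\mathcal{L}'_M)$ tends to $0$ in probability, reducing everything to pairwise events covered by \cref{thm:mc-consistency}.

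\textbf{Step 1: write Kendall's $\tau$ through pairwise discordances.} Let $K_M$ denote the number of pairs $\{i,j\}$ on which the $g$-order and the $\widehat{\mathrm{CAS}}_M$-order disagree, with empirical ties counted as discordant. Then
\[
\tau = 1 - 2K_M\binom{N}{2}^{-1}.
\]
If $g$ itself has ties, I will use the $\tau$ convention that ignores pairs tied under $g$. I will state this convention explicitly, because \cref{assumption:monotonicity} says nothing about such pairs.

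\textbf{Step 2: transfer each discordance to a CAS inversion.} Fix a pair with $g(\ell_i) > g(\ell_j)$. By \cref{assumption:monotonicity}, $\overline{\mathrm{CAS}}(\ell_i) > \overline{\mathrm{CAS}}(\ell_j)$, so $m_{ij} > 0$. The pair is discordant exactly when $\widehat{\mathrm{CAS}}_M(\ell_i) \le \widehat{\mathrm{CAS}}_M(\ell_j)$, that is, when $\frac{1}{M}\sum_k \delta_{ij}(d_k) \le 0$. This event has probability $p^{(M)}_{\mathrm{inv}}(\ell_i,\ell_j)$.

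The variance $\sigma_{ij}^2$ is finite and in fact at most $4$, since $\delta_{ij} \in [-2,2]$ by \cref{prop:boundedness}. \cref{thm:mc-consistency} therefore gives
\[
p^{(M)}_{\mathrm{inv}}(\ell_i,\ell_j) \le \frac{\sigma_{ij}^2}{M m_{ij}^2}.
\]

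\textbf{Step 3: bound the expected discordance count.} By linearity of expectation,
\[
\mathbb{E}[K_M] \le \sum_{g(\ell_i)>g(\ell_j)} \frac{\sigma_{ij}^2}{M m_{ij}^2} \le \frac{C}{M},
\]
where $C = \binom{N}{2}\max \sigma_{ij}^2/m_{ij}^2$. This maximum is finite because $N$ is fixed and the pool $\mathcal{L}$ is finite, so there are finitely many pairs and $\min m_{ij} > 0$.

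\textbf{Step 4: conclude with Markov's inequality.} Because $K_M$ is a nonnegative integer, for any $\varepsilon > 0$,
\[
\Pr\big[1-\tau > \varepsilon\big] \le \Pr[K_M \ge 1] \le \mathbb{E}[K_M] \le \frac{C}{M} \to 0.
\]
This gives convergence in probability, and in fact the stronger statement $\Pr[\tau = 1] \to 1$. No union-bound subtlety arises beyond this.

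\textbf{Main obstacle.} The step needing care is the bookkeeping of ties in Steps 1--2. One part is confirming that \cref{assumption:monotonicity} yields a strict gap $m_{ij} > 0$ for every pair that $g$ distinguishes, which is needed to apply \cref{thm:mc-consistency}. The other part is handling pairs tied under $g$, whose population CAS values may differ. If the paper's Kendall convention counts such pairs, the limit $\tau = 1$ can fail. I would resolve this either by assuming $g$ is injective on $\mathcal{L}$ or by adopting a tie-excluding variant of $\tau$.
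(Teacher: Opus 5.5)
Your proof is correct and follows the paper's route. The per-pair Chebyshev bound of \cref{thm:mc-consistency} applies because \cref{assumption:monotonicity} gives $m_{ij}>0$, and a union bound over the $\binom{N}{2}$ pairs then yields $\Pr[\tau=1]\to 1$; your step $\Pr[K_M\ge 1]\le \mathbb{E}[K_M]$ is exactly that union bound. Your explicit handling of ties, both under $g$ (which the paper sidesteps by treating $\mathcal{L}^*$ as a strict ranking) and in the empirical estimates, tightens the same argument rather than changing it.
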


\noindent The proof (\cref{app:ranking-recovery-proof}) follows from \cref{thm:mc-consistency} via the union bound: the probability that any pair is inverted vanishes at $O(1/M)$, so the probability that all $\binom{N}{2}$ pairs are correctly ordered tends to 1, which is precisely $\tau \to 1$.

\myparagraph{Summary of the theoretical contribution:} \cref{sec:properties} derived three necessary conditions for OOD-predictive metrics; \cref{sec:cas-construction} constructed CAS to satisfy them (\cref{thm:cas-soundness}); and this section established that ranking learners by empirical CAS recovers their OOD-accuracy ranking, with $O(1/M)$ inversion error per pair (\cref{thm:mc-consistency}), under one empirically verifiable monotonicity assumption (\cref{assumption:monotonicity}). The remaining sections validate this chain experimentally.

%% file: neurips_tex/05_exp.tex
\section{Experimental Results}
\label{sec:experiments}

\myparagraph{Datasets:} We evaluate on three benchmarks of increasing complexity: (1) \textbf{PACS}~\cite{li2017deeper} introduces naturalistic style shifts across 4 domains and 7 classes; (2) \textbf{Office-Home}~\cite{venkateswara2017deep} tests discriminative resolution under fine-grained distributional shifts across 4 domains and 65 classes; (3) \textbf{DomainNet}~\cite{peng2019moment} is the largest and most challenging, with 6 domains, 345 classes, ${\sim}$600K images. Together, they span small-to-large scale domain shifts, ensuring our findings are not specific to one data regime.

\myparagraph{Evaluation metrics:} We evaluate CAS as an OOD predictor using two complementary metrics. \textbf{Spearman rank correlation} ($\rho_S$) \cite{zar2005spearman} evaluates how well predicted model rankings match true OOD rankings, independent of exact accuracy values, and is our primary metric. \textbf{Mean absolute error} (MAE) measures deviation between predicted and true \footnote{True OOD accuracy is taken from the DomainBed repository: https://github.com/facebookresearch/DomainBed} OOD accuracy in percentage points, and applies only to calibrated predictors (ATC, ALine-D, and ProjNorm). Statistical significance is assessed via permutation tests ($10$ permutations), with $95\%$ confidence intervals from bootstrap resampling ($1,000$ iterations).
We fix random seeds and enforce determinism throughout, ensuring that circuit divergence reflects domain shift rather than training stochasticity.

\myparagraph{Circuit extraction:} CAS operates on class-specific circuit graphs and is agnostic to the extractor; any procedure producing class-conditional directed subgraphs suffices. We use Adaptive Circuit Extraction (ACE) for tractability and uniformity across architectures (\cref{sec:ace}).

\subsection{CAS as OOD Predictors}
\label{sec:cas_vs_baselines}
This experiment empirically validates three claims jointly: (1) CAS satisfies the monotonicity required by \cref{assumption:monotonicity}, (2) high CAS values correspond to the circuit-separation regime of \cref{cor:robustness-implication}, and (3) activation-level baselines exhibit the conflation failure guaranteed by \cref{thm:factoring}. To do this, we compare CAS against CKA~\cite{kornblith2019similarity}, SVCCA~\cite{raghu2017svcca}, and RSA~\cite{kriegeskorte2008representational} across 48 learners spanning 4 architectures (MLP, ResNet50, MobileNetV2, and ViT-B/16), 4 training objectives (ERM, IRM, CORAL, and DANN), and 3 regularization settings (None, Dropout (rate~$0.3$), and weight decay ($10^{-4}$)), all trained on PACS. For each learner, we compute: (i) a cross-domain similarity score, obtained by averaging pairwise similarities across all source domain pairs, and (ii) the corresponding leave-one-domain-out OOD accuracy, computed by training on three domains and evaluating on the held-out fourth domain. CKA, SVCCA, and RSA are computed on circuit representations obtained via graph embeddings~\cite{dutta2018stochastic}, while CAS operates directly on ACE-extracted circuit graphs and measures structural similarity via graph kernels~\cite{gauzere2012two,nikolentzos2020random,petric2019got}. We visualize these relationships in \cref{fig:casvsood}, where each point represents one learner coloured by training objective, with a least-squares trend line overlaid.

\begin{figure*}[!htbp]
    \centering
    \includegraphics[width=\linewidth]{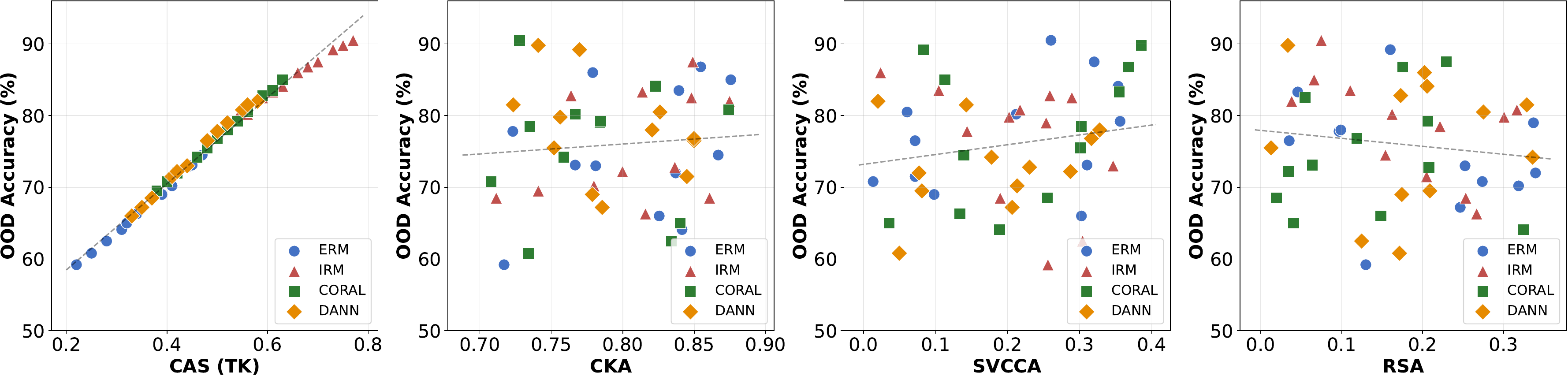}
    \vspace{-5mm}
    \caption{CAS vs OOD accuracy shows a monotonous trend ($\rho_S = 0.93$), whereas CKA ($\rho_S = 0.58$), SVCCA ($\rho_S = 0.23$), and RSA ($\rho_S = 0.14$) show a diffuse scatter, confirming that representational similarity fails to capture the circuit-level signal (Target class: cartoon of the PACS dataset).}
    \label{fig:casvsood}
    \vspace{-4mm}
\end{figure*}

CAS exhibits a strong near-monotone relationship with OOD accuracy. Learners with higher circuit alignment consistently achieve better OOD performance, and training objectives cluster coherently: ERM in the low-CAS, low-accuracy region, IRM in the high-CAS, high-accuracy region, and CORAL and DANN in between. In contrast, CKA, SVCCA, and RSA display weak positive trends with high variance and often assign high similarity to poorly performing ERM models. The near-monotone ordering recovered by CAS is precisely what \cref{assumption:monotonicity} demands, while the high-CAS, high-accuracy IRM learners (red dots in \cref{fig:casvsood}) approach the kernel-separation regime of \cref{cor:robustness-implication}, indicating genuinely domain-invariant computational pathways rather than spuriously aligned representations. The failure of CKA, SVCCA, and RSA matches the conflation predicted by \cref{thm:factoring}: they cannot distinguish circuit-robust learners from non-robust learners with circuit drift that produces similar activation statistics. Together, these results confirm that CAS provides a mechanistically grounded and empirically reliable signal for OOD robustness, while activation-level similarity metrics remain blind to structural differences in computation.

\subsection{Consistency of CAS Rankings}
\label{sec:mc_validation}

In this experiment, we evaluate the prediction of \Cref{thm:mc-consistency}, which states that the pairwise rank inversion probability $P_{\mathrm{inv}}^{(M)}$ between two learners with distinct population CAS values decays as $O(1/M)$. In practice, $M$ is fixed by the benchmark: $M{=}3$ for PACS and Office-Home, and $M{=}5$ for DomainNet, leaving one domain out for OOD accuracy. To validate the theorem, we subsample $M \in \{1, 2, 3\}$ domains for PACS and Office-Home, and $M \in \{1, 2, 3, 4, 5\}$ for DomainNet, using real held-out source domains without any synthetic augmentation. For each $M$, we repeat CAS computation 10 times, rank the 48 learners by $\widehat{\mathrm{CAS}}_M$, and measure: (i) the fraction of learner-pair rank inversions relative to the maximum-$M$ ranking $\binom{48}{2}{=}1128$ pairs, and (ii) the Spearman correlation $\rho_S$ between the $M$-domain CAS ranking and true OOD accuracy ranking. We also overlay the theoretical $O(1/M)$ Chebyshev bound to verify both the rate and the constant.

\begin{figure}[!htbp]
\centering
\includegraphics[width=\textwidth]{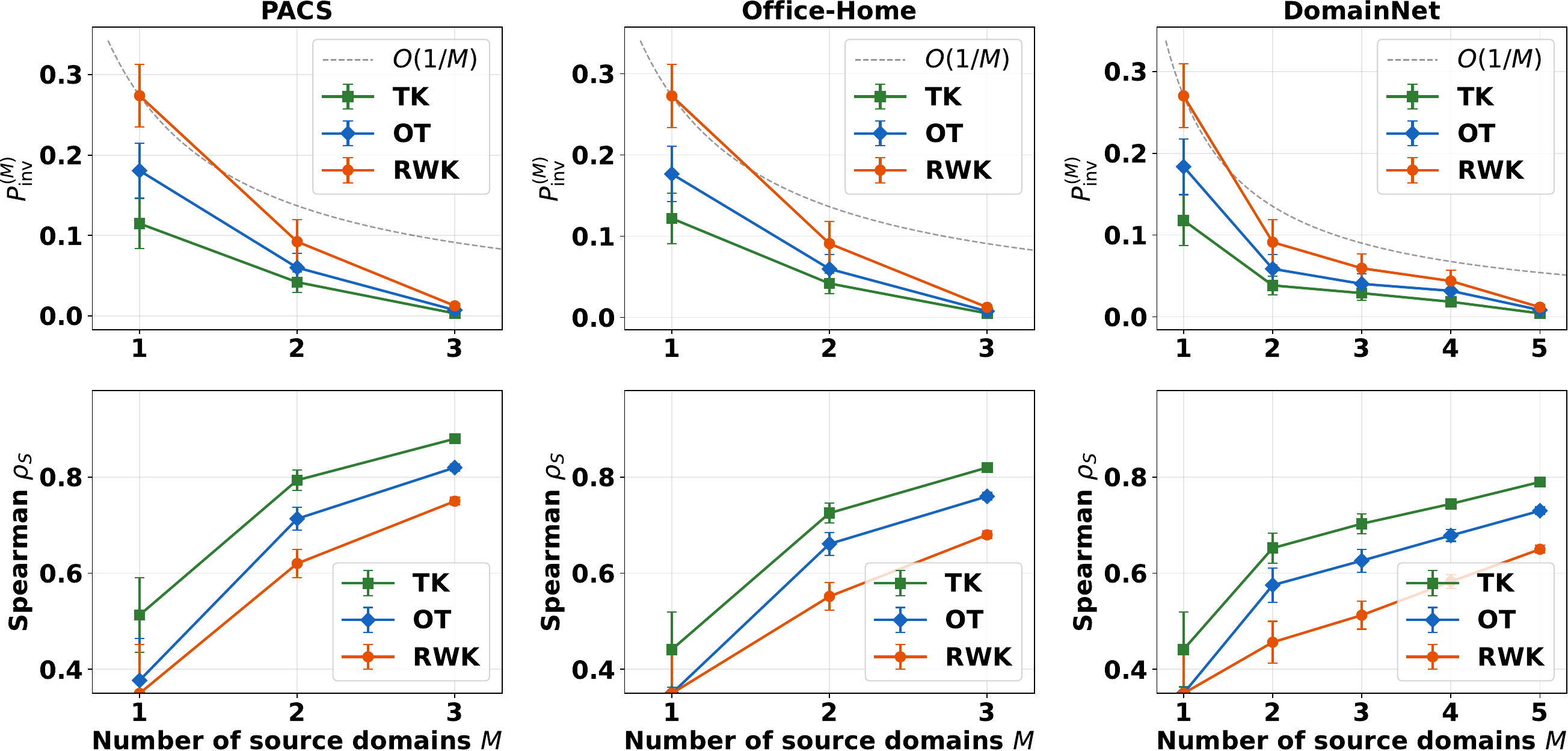}
\caption{\textbf{Consistency of CAS rankings.} Effect of the number of source domains \(M\) on ranking stability. \textbf{Top:} Pairwise inversion probability \(P_{\mathrm{inv}}^{(M)}\) vs.\ \(M\) (mean $\pm$ std over 10 trials) with $O(1/M)$ Chebyshev bound (dashed lines). \textbf{Bottom:} Spearman $\rho_S$ between \(M\)-domain CAS ranking and true OOD accuracy. TK consistently achieves the highest $\rho_S$ and all methods follow the $O(1/M)$ trend.}
\label{fig:mc_consistency}
\end{figure}

The results in \cref{fig:mc_consistency} confirm the predicted $O(1/M)$ decay across all benchmarks and methods. At $M{=}1$, inversion probabilities are high, up to ${\sim}0.27$ for RWK, and Spearman correlations are low ($\rho_S \approx 0.39$--$0.52$), reflecting the large variance of a single-domain estimate. At $M{=}1$, CAS reduces to the self-similarity, which carries no cross-domain alignment information, and OOD accuracy reduces to the generalization of single-source domain training. As $M$ increases, $P_{\mathrm{inv}}^{(M)}$ drops sharply and follows the $O(1/M)$ envelope, while $\rho_S$ rises steeply. At the practical operating points, $M{=}3$ for PACS and Office-Home and $M{=}5$ for DomainNet, inversion probability is already very low and $\rho_S$ is close to convergence. TK achieves the highest ranking quality ($\rho_S \approx 0.89$, $0.81$, $0.79$), followed by OT ($0.82$, $0.76$, $0.74$) and RWK ($0.75$, $0.69$, $0.64$) on PACS, Office-Home, and DomainNet, respectively. Overall, the empirical $P_{\mathrm{inv}}^{(M)}$ curves lie at or below the theoretical $O(1/M)$ Chebyshev bound ($M \geq 2$ for RWK), confirming that
the error in CAS-based OOD ranking estimates gets vanishingly small as the number of domains increases.

\subsection{Evolution of CAS with Domain Perturbations}

This experiment validates whether CAS faithfully reflects domain proximity at the circuit level by testing whether it recovers a ground-truth perturbation ordering induced by controlled, monotonically increasing domain shift. For $w < w'$, the intermediate domain $D_2[w']$ should induce a stronger perturbation of $D_1$ than $D_2[w]$, i.e., $D_2[w] \prec_{D_1} D_2[w']$ in the sense of \cref{def:perturbation-order}, and a faithful metric must assign lower divergence to the weaker perturbation. \Cref{thm:non-monotone} formalises the converse risk: any aggregation $\Psi(S)$ that is not coordinate-wise monotone can invert these rankings. CAS avoids this failure by construction, as its dependence on $S$ is monotone decreasing on the diagonal and monotone increasing off the diagonal, matching the two conditions in \cref{def:perturbation-order}. We fix $D_1 =$ ArtPainting and construct a family of intermediate domains $\{D_2[w]\}$ by interpolating between an ArtPainting-style LoRA and a Photo-style LoRA at inference time with SDXL~\cite{podellsdxl}, with Photo adapter weight $w \in [0, 2]$. This performs domain interpolation in \emph{parameter space}, preserving semantic content while smoothly modulating domain-specific appearance (\cref{fig:domainmixup}), unlike input-space mixup~\cite{cao2024mixup,xu2020adversarial}. For each intermediate domain, we train MLP~\cite{popescu2009multilayer}, ResNet50~\cite{he2016deep}, MobileNetv2~\cite{sandler2018mobilenetv2}, and ViT-B/16~\cite{dosovitskiyimage}, under identical initialization and data distribution, extract class-wise circuits via ACE, and compute the similarity matrix $S$ between $D_1$ and $D_2[w]$ using three graph kernels.

\begin{figure*}[!t]
    \centering
    \includegraphics[width=\linewidth]{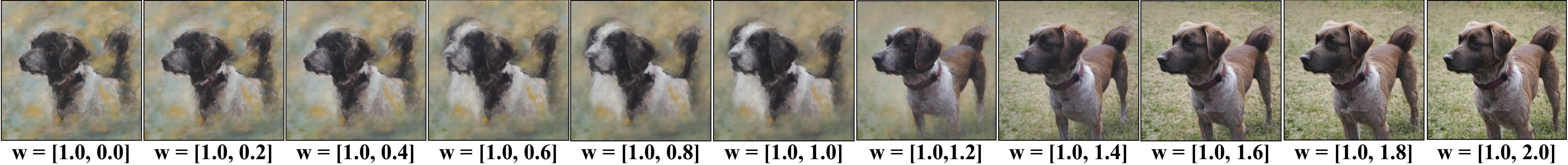}
    \caption{\textbf{Domain interpolation via weighted LoRA adapters:} Images generated by SDXL with a fixed ArtPainting LoRA (weight 1.0) and a Photo LoRA whose weight increases from 0.0 to 2.0. At $w{=}[1.0, 0.0]$, the output is purely ArtPainting style; at $w{=}[1.0, 2.0]$, it is fully photorealistic. Semantic content (dog, pose, composition) is preserved throughout, while domain-specific appearance (brush strokes, texture, lighting) transitions smoothly, enabling controlled, continuous domain shift for circuit-level analysis.}
    \label{fig:domainmixup}
\end{figure*}

\begin{figure}[!t]
    \centering
    \includegraphics[width=\linewidth]{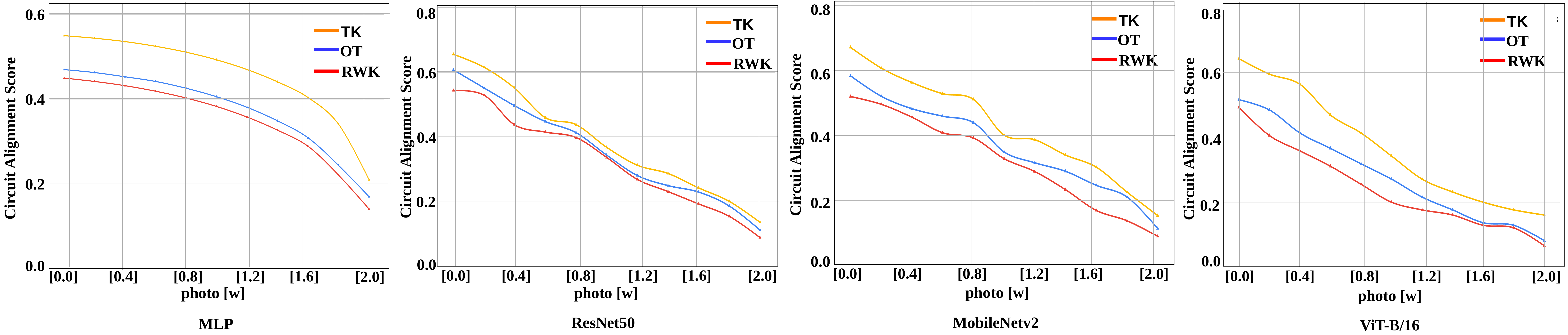}
    \caption{\textbf{Evolution of CAS under controlled domain interpolation via LoRA.} CAS values are monotonously decreasing with the increasing domain perturbations.}
    \label{fig:cas}
    \vspace{-6mm}
\end{figure}

As shown in \cref{fig:cas}, CAS decreases monotonically with increasing $w$ across all four architectures and all three graph kernels. Increasing domain shift induces proportional circuit divergence rather than erratic jumps: there are no reversals in the ordering, no plateau artefacts, and no kernel- or architecture-specific anomalies across $w \in [0, 2]$. The monotonicity confirms that CAS correctly recovers the ground-truth perturbation ordering $D_2[w] \prec_{D_1} D_2[w']$ for all $w < w'$, consistent with \cref{sec:consistency}. It also implies that domain space is embedded as a structured manifold within circuit space, with CAS acting as a distance-preserving mapping between the two. Although different architectures exhibit distinct slopes along the same domain axis, reflecting their inductive biases and representational capacities, the \emph{relative ordering} of domain perturbations is preserved across all configurations. This architecture-agnosticism establishes CAS as a stable measure of domain-induced circuit reorganisation, rather than an artefact of any particular architectural prior.

\begin{wraptable}{r}{0.5\textwidth}
\vspace{-4mm}
\centering
\caption{\textbf{Per-target Spearman $\rho_S$ on PACS.} CAS consistently outperforms all baselines across all four target domains. The advantage is largest on Sketch ($+0.22$ over next-best), the hardest target.}
\vspace{-2mm}
\label{tab:ood_pred_per_target}
\small
\resizebox{0.5\textwidth}{!}{
\begin{tabular}{lcccc|c}
\toprule
\textbf{Method} & \textbf{Tgt: Photo} & \textbf{Tgt: Art} & \textbf{Tgt: Cartoon} & \textbf{Tgt: Sketch} & \textbf{Mean} \\
\midrule
ATC~\cite{garg2022leveraging}          & 0.65 & 0.54 & 0.62 & 0.51 & 0.58 \\
ProjNorm~\cite{yu2022projection}        & 0.72 & 0.61 & 0.68 & 0.55 & 0.64 \\
ALine-D~\cite{baek2022agreement}         & 0.79 & 0.70 & 0.78 & 0.61 & 0.72 \\
\midrule
CKA~\cite{kornblith2019similarity}            & 0.81 & 0.48 & 0.58 & 0.45 & 0.58 \\
SVCCA~\cite{raghu2017svcca}           & 0.28 & 0.21 & 0.21 & 0.22 & 0.23 \\
RSA~\cite{kriegeskorte2008representational}             & 0.12 & 0.14 & 0.19 & 0.11 & 0.14 \\
\midrule
\textbf{CAS (TK)} & \textbf{0.91} & \textbf{0.86} & \textbf{0.93} & \textbf{0.83} & \textbf{0.88} \\
\bottomrule
\end{tabular}}
\vspace{-4mm}
\end{wraptable}
\subsection{Comparison with OOD Accuracy Prediction Methods}
We evaluate whether CAS can predict OOD accuracy without target labels by comparing against ATC~\cite{garg2022leveraging}, ProjNorm~\cite{yu2022projection}, and ALine-D~\cite{baek2022agreement}, adapted to PACS leave-one-domain-out following standard protocols (\cref{app:baseline-protocols}).

\Cref{tab:ood_pred_main} reports the head-to-head comparison. CAS achieves the strongest performance overall ($\rho_S = 0.88$, MAE $= 2.14\%$); among baselines, ALine-D is the closest competitor ($\rho_S = 0.72$, MAE $= 4.85\%$), while ATC ($\rho_S = 0.58$) and ProjNorm ($\rho_S = 0.64$, MAE $= 5.72\%$) trail substantially. Per-target results (\cref{tab:ood_pred_per_target}) show the largest advantage on Sketch ($0.83$ vs.\ $0.61$), the hardest target, where stylistic shift defeats output-based predictors but circuit structure remains informative. CAS thus measures the model's computation rather than its outputs, making it robust to the calibration errors and pseudo-label noise that degrade the baselines under large stylistic shifts.

\begin{table}[!htbp]
\vspace{-4mm}
\centering
\caption{\textbf{Comparison of OOD accuracy prediction methods on PACS.} CAS achieves the highest $p_S$ and lowest MAE across all four target-domain splits. 
$^\dagger$ALine-D requires the full 48-model pool; all other methods are single-model. $^\ddagger$ProjNorm requires two extra retraining runs per fold.}
\label{tab:ood_pred_main}
\small
\resizebox{\textwidth}{!}{
\begin{tabular}{lccccc}
\toprule
\textbf{Method} & \textbf{Input modality} & \textbf{Needs target data?} & \textbf{Extra training?} & $\rho_S$ $\uparrow$ & \textbf{MAE (\%)} $\downarrow$ \\
\midrule
ATC~\cite{garg2022leveraging} & Softmax outputs & Unlabeled & None & 0.58 & 5.31 \\
ProjNorm~\cite{yu2022projection} & Model weights & Unlabeled & $2\times$ retrain$^\ddagger$ & 0.64 & 5.72 \\
ALine-D~\cite{baek2022agreement} & Predictions ($\geq$3 models)$^\dagger$ & Unlabeled & None & 0.72 & 4.85 \\
\midrule
CKA (cross-domain)~\cite{kornblith2019similarity} & Activations & Source only & None & 0.58 & 7.18 \\
SVCCA (cross-domain)~\cite{raghu2017svcca} & Activations & Source only & None & 0.23 & 8.04 \\
RSA (cross-domain)~\cite{kriegeskorte2008representational} & Activations & Source only & None & 0.14 & 7.63 \\
\midrule
\textbf{CAS (TK, ours)} & \textbf{Circuit graphs} & \textbf{Source only} & \textbf{None} & \textbf{0.88} & \textbf{2.14} \\
\bottomrule
\end{tabular}}
\vspace{-4mm}
\end{table}

%% file: neurips_tex/06_conc.tex
\section{Conclusion and Discussions}
\label{sec:conclusion}

We introduced Circuit Alignment Score (CAS), a weight-derived OOD predictor justified by three theoretical layers: necessary structural conditions derived via impossibility theorems (\cref{sec:properties}), a construction of CAS in which each step is forced by one of the conditions (\cref{thm:cas-soundness}), and a Monte Carlo consistency guarantee with $O(1/M)$ pairwise inversion error (\cref{thm:mc-consistency}). Across $48$ learners on three benchmarks, CAS attains Spearman $\rho_S \geq 0.77$ in every setting, exceeding all weight-only and target-data-using baselines. The framework is extractor-agnostic, \ie, alternative circuit extractors~\citep{syed2024attribution, thasarathan2025universal} can plug into the same metric, while the class-conditional similarity matrix exposes per-class vulnerability and confusion topology beyond the scalar score.

\myparagraph{Limitations:} The monotonicity condition (\cref{assumption:monotonicity}) bridging $\overline{\mathrm{CAS}}$ and OOD accuracy is empirically verified rather than proved in the finite-accuracy regime; the limit-case corollary establishes only the endpoint. The framework is currently specialized to classification with bounded class counts, and ACE requires source-domain adapter training, lighter than retraining-based baselines but heavier than purely weights-only ones.

\myparagraph{Societal impact:} A reliable weights-only OOD predictor benefits settings where target data is unavailable, such as clinical pre-deployment, low-resource, and federated systems, but invites overreliance: high CAS evidences circuit-level invariance, not correctness on individual predictions. 
Earlier identification of non‑robust models can reduce the risk of failures affecting patients or users.
We recommend CAS as a model-selection aid alongside conventional validation, not as a substitute.

\section*{Acknowledgement}
This research was carried out with support from the SUKIDI PID2024-157778OB-I00 grants from the Spanish Ministry of Science and Innovation and the PhD Scholarship from AGAUR (FI-SDUR: 2023 FISDU 00394). The authors gratefully acknowledge NVIDIA Corporation for support through the NVIDIA Academic Grant Program, which provided computational resources for this research.

%% file: neurips_tex/07_supply.tex
\section{Motivation behind CAS Properties and Impossibility Theorems}
\label{app:property-proofs}

\subsection{Circuit-level robustness as a structural proxy}

\begin{theorem}[Diagonal preservation under OOD robustness]
\label{thm:diagonal-preservation}
Let $f$ be a learner achieving perfect OOD accuracy across domains, $\mathrm{Acc}(f; d) = 1$ for $P_D$-almost every $d \in \Omega_D$. Suppose further that the class-conditional input distributions $P_{x | y=i, d}$ have non-degenerate support overlap across domains: for every pair $d_1, d_2$ and every class $i$, there exists $x$ with positive density under both $P_{x | y=i, d_1}$ and $P_{x | y=i, d_2}$. Then for every class $i$ and every pair of domains $d_1, d_2$, the circuits $\mathcal{C}^{(i)}(f, d_1)$ and $\mathcal{C}^{(i)}(f, d_2)$ are \emph{causally equivalent}: they implement the same function on the shared support, modulo functionally redundant rerouting.
\end{theorem}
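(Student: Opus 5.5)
The proof will turn on the notion of a circuit as the causally sufficient subgraph mediating class-$i$ predictions. Concretely, I take $\mathcal{C}^{(i)}(f,d)$ to be a subgraph such that, for $P_{x\mid y=i,d}$-almost every $x$, ablating everything outside $\mathcal{C}^{(i)}(f,d)$ leaves the logit-level decision for class $i$ unchanged. Since $f$ is a fixed set of weights, the domain enters only through the input distribution. The strategy is to show that on any input lying in the shared support of $P_{x\mid y=i,d_1}$ and $P_{x\mid y=i,d_2}$, both circuits reproduce the same output, namely $f(x)=i$. Any remaining difference between them is then, by definition, rerouting that does not change the function on that support.

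\textbf{Step 1: perfect accuracy on both supports.} Because $\mathrm{Acc}(f;d)=1$ for $P_D$-a.e.\ $d$, it also holds for $d_1,d_2$ outside a null set; for the exceptional null set the claim is understood almost surely. This gives $f(x)=i$ for $P_{x\mid y=i,d_k}$-a.e.\ $x$, for $k=1,2$. Let $U_i=\operatorname{supp}P_{x\mid y=i,d_1}\cap\operatorname{supp}P_{x\mid y=i,d_2}$. By the overlap hypothesis $U_i$ has positive measure under both conditionals, so $f\equiv i$ almost everywhere on $U_i$.

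\textbf{Step 2: sufficiency of each circuit on $U_i$.} Sufficiency of $\mathcal{C}^{(i)}(f,d_k)$ means the restricted network $f|_{\mathcal{C}^{(i)}(f,d_k)}$ agrees with $f$ for $P_{x\mid y=i,d_k}$-a.e.\ $x$. On $U_i$ both sufficiency statements hold simultaneously, up to a null set, because $U_i$ carries positive mass under both conditionals. Therefore
\[
f|_{\mathcal{C}^{(i)}(f,d_1)}(x) = f(x) = i = f|_{\mathcal{C}^{(i)}(f,d_2)}(x) \quad \text{for a.e. } x\in U_i.
\]
This is exactly what it means for the two circuits to implement the same function on the shared support.

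\textbf{Step 3: identify the residual difference as rerouting.} Consider the symmetric difference of the two edge sets. By Step 2, removing either set of edges does not alter the class-$i$ decision on $U_i$. So any edge present in one circuit but absent from the other is functionally redundant on $U_i$, which is precisely the ``modulo functionally redundant rerouting'' clause. I would make this precise by defining causal equivalence as equality of the restricted functions a.e.\ on $U_i$, and then checking that this defining relation is exactly what Steps 1 and 2 produce.

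\textbf{Main obstacle.} The hardest part is the formal bookkeeping rather than the mathematics. The argument needs a precise, extractor-independent definition of the circuit as causally sufficient almost surely under the domain's conditional distribution. It must also make sure that ``positive density under both'' at a single point is upgraded to a positive-measure overlap set, for instance by assuming continuity of the densities or by reading the hypothesis as overlap of the supports. I would handle the second point by explicitly strengthening the hypothesis to a positive-measure overlap, and note that the pointwise version suffices when the densities are continuous.
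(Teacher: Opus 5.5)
Your argument is correct under the circuit definition you fix, but it takes a different and much thinner route than the paper.

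The paper argues structurally. On a shared input $x$, the forward pass, and hence ``the causal path,'' depends only on $(f,x)$, so both circuits are said to contain it. Aggregating over the shared support then gives a common subgraph $\mathcal{C}^{(i)}_{\mathrm{shared}}$ inside both circuits, and the domain-specific remainders are declared redundant rerouting. The proof ends with a lower bound on $\kappa$ in terms of the size of $\mathcal{C}^{(i)}_{\mathrm{shared}}$, which is exactly what \cref{cor:robustness-implication} uses to define $\alpha^*$.

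You argue purely functionally. Each circuit is a.s.\ causally sufficient for its own class-$i$ distribution, and $f\equiv i$ on the overlap, so both restricted networks compute the constant $i$ there. This is rigorous, and it handles bookkeeping the paper skips: only $P_D$-a.e.\ domain pairs are covered, and a single point of positive density must be upgraded to a positive-measure overlap. One refinement there: take the overlap to be $\{p_1>0\}\cap\{p_2>0\}$, with $p_k$ the density of $P_{x\mid y=i,d_k}$, and make all a.e.\ statements Lebesgue-a.e. Reading the hypothesis as overlap of the closed supports is not enough, because on that set a $P_2$-null exceptional set for the second circuit need not be $P_1$-null.

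The cost of your route is that the conclusion is nearly tautological. It holds for any two subgraphs sufficient for class-$i$ inputs, even vertex-disjoint ones such as two redundant parallel pathways. So it yields no shared substructure and cannot feed the kernel lower bound the corollary needs.

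The paper's route aims at that structural content, but its key step does not follow from minimality. The claim that a minimal sufficient subgraph must contain ``the causal path for $x$'' fails because minimal sufficient subgraphs need not be unique, a single input is a null event, and ACE selects nodes by expected effect over the whole domain. Recovering $\mathcal{C}^{(i)}_{\mathrm{shared}}$ would need an extra hypothesis. For example, you could assume the circuit is a canonical function of the set of inputs it must handle and is monotone under enlarging that set.
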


\begin{proof}
Fix class $i$ and domains $d_1, d_2$. By definition, $\mathcal{C}^{(i)}(f, d)$ is the minimal causally-grounded subgraph such that $f(x) = i$ for all $x$ in the class-$i$ support of $d$. Since $f$ achieves perfect accuracy, this subgraph correctly classifies every class-$i$ input from domain $d$.

Let $x \in \mathrm{supp}(P_{x|y=i, d_1}) \cap \mathrm{supp}(P_{x|y=i, d_2})$, a class-$i$ input present under both domains, which exists by the overlap assumption. The forward computation of $f$ on input $x$ is determined entirely by $f$'s weights, not by which domain $x$ was drawn from; the path $f$ takes through its computation is a property of $(f, x)$, not $(f, x, d)$. Hence $\mathcal{C}^{(i)}(f, d_1)$ and $\mathcal{C}^{(i)}(f, d_2)$ both contain the causal path through $f$ for input $x$.

Aggregating over all $x$ in the shared support yields a common subgraph $\mathcal{C}_{\mathrm{shared}}^{(i)}$ contained in both $\mathcal{C}^{(i)}(f, d_1)$ and $\mathcal{C}^{(i)}(f, d_2)$. The remaining portions of each circuit handle inputs in the domain-specific supports $\mathrm{supp}(P_{x|y=i, d_k}) \setminus \mathrm{supp}(P_{x|y=i, d_{3-k}})$. Since $f$ achieves perfect accuracy on these inputs as well, the domain-specific portions implement the same function (output $i$ on class-$i$ inputs) and differ only in which intermediate nodes route the computation, i.e., they are functionally redundant rerouting in the sense of \cref{thm:factoring}.

Therefore $\mathcal{C}^{(i)}(f, d_1)$ and $\mathcal{C}^{(i)}(f, d_2)$ share a non-trivial common subgraph $\mathcal{C}_{\mathrm{shared}}^{(i)}$ and differ only by functionally redundant rerouting outside it. Under any graph kernel $\kappa$ that respects functional equivalence, $\kappa(\mathcal{C}^{(i)}(f, d_1), \mathcal{C}^{(i)}(f, d_2))$ is bounded below by a positive constant determined by the size of $\mathcal{C}_{\mathrm{shared}}^{(i)}$.
\end{proof}

\begin{theorem}[Off-diagonal distinctness under OOD robustness]
\label{thm:off-diagonal-distinctness}
Let $f$ be a learner achieving perfect OOD accuracy. Then for every pair of domains $d_1, d_2$ and every pair of distinct classes $i \neq j$, the circuits $\mathcal{C}^{(i)}(f, d_1)$ and $\mathcal{C}^{(j)}(f, d_2)$ are functionally distinguishable: they implement maps with disjoint output labels on their respective supports.
\end{theorem}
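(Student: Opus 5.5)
The plan is to mirror the proof of \cref{thm:diagonal-preservation}, reading the circuit definition the same way. By definition, $\mathcal{C}^{(i)}(f, d)$ is the minimal causally-grounded subgraph of $f$'s computation that mediates the prediction $f(x) = i$ on the class-$i$ support of $d$. Its terminal node is the class-$i$ output logit, and it carries the comparison that makes that logit the argmax.

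First, fix domains $d_1, d_2$ and classes $i \neq j$. Perfect OOD accuracy gives $\mathrm{Acc}(f; d) = 1$ for $P_D$-almost every $d$. Hence, for $P_{x|y=i,d_1}$-almost every $x$, the circuit $\mathcal{C}^{(i)}(f, d_1)$ produces output label $i$. Likewise, for $P_{x|y=j,d_2}$-almost every $x'$, the circuit $\mathcal{C}^{(j)}(f, d_2)$ produces output label $j$.

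Second, since $i \neq j$, the images of the two restricted maps are the disjoint singletons $\{i\}$ and $\{j\}$. This already gives ``disjoint output labels on their respective supports''.

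Third, I will upgrade this to functional distinguishability. Suppose the two circuits were functionally identical, meaning they compute the same map on a common input. Then some input $x$ in both supports (or, via the weights-only determinism argument used in \cref{thm:diagonal-preservation}, the same forward path) would be sent to both $i$ and $j$. That contradicts single-valuedness of $f$. The key point carries over from the previous proof: the forward computation depends only on $(f,x)$ and not on $d$. So if the two circuits coincided as causal objects, they would induce the same output on every input they both mediate.

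Fourth, I will handle overlapping supports. If $\mathrm{supp}(P_{x|y=i,d_1}) \cap \mathrm{supp}(P_{x|y=j,d_2})$ has positive mass, perfect accuracy on both domains is impossible, because the same $x$ would need labels $i$ and $j$. So under the hypothesis this intersection is null. The class-conditional supports across classes are therefore essentially disjoint, and the two maps are defined on disjoint sets with disjoint labels.

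The main obstacle is the measure-zero bookkeeping. Perfect accuracy holds only $P_D$-a.e. and $P_x$-a.e., so the statement ``for every pair of domains'' must be read modulo null sets. I would first prove it for $P_D\otimes P_D$-almost every pair $(d_1,d_2)$ and then extend it using the paper's convention that circuits are determined by their supports.

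As a final step, I would note the consequence for any kernel $\kappa$ that respects functional equivalence. Distinguishable terminal output nodes bound $\kappa(\mathcal{C}^{(i)}(f,d_1),\mathcal{C}^{(j)}(f,d_2))$ away from $1$. This supplies the off-diagonal half of \cref{eq:robust}, complementing \cref{thm:diagonal-preservation}.
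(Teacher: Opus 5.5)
Your Steps 1--2, together with the closing kernel remark, are the paper's proof. The class-$i$ circuit on $d_1$ outputs $i$ on its support, the class-$j$ circuit on $d_2$ outputs $j$ on its support, and $i \neq j$ makes the label sets disjoint. The paper takes the output property directly from the circuit definition rather than from almost-everywhere accuracy, but the content is the same. Since the statement defines ``functionally distinguishable'' by the clause after the colon, this already completes the proof.

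The additions should be dropped or repaired.
\begin{itemize}
\item \textbf{Step 3 does not work.} Its contradiction needs an input lying in both $\mathrm{supp}(P_{x|y=i,d_1})$ and $\mathrm{supp}(P_{x|y=j,d_2})$, but your own Step 4 says that set is null. More fundamentally, a single map can send class-$i$ inputs to $i$ and class-$j$ inputs to $j$; the whole network $f$ does exactly this. So disjoint labels on disjoint supports cannot exclude that the two subgraphs compute the same map. Determinism of the forward pass in $(f,x)$ does not help, because the two circuits are exercised on different inputs.
\item \textbf{The paper supplies nothing more here.} Its proof asserts ``cannot be functionally identical'' in one clause with no argument. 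Any genuine distinction would have to be structural (e.g., your reading that the terminal node is the class-$i$ logit), not derived from the output labels.
\item \textbf{The extension to every pair is unsupported.} You extend from almost every pair $(d_1,d_2)$ to every pair via a ``convention that circuits are determined by their supports'', but the paper has no such convention. On the exceptional null set the class-$i$ circuit need not output $i$ at all. Either keep the almost-everywhere version, which is all \cref{cor:robustness-implication} uses, or take the output property from the circuit definition as the paper does.
\end{itemize}
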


\begin{proof}
By definition, $\mathcal{C}^{(i)}(f, d_1)$ implements a map $g_i^{d_1}$ such that $g_i^{d_1}(x) = i$ for all $x \in \mathrm{supp}(P_{x|y=i, d_1})$. Similarly, $\mathcal{C}^{(j)}(f, d_2)$ implements $g_j^{d_2}$ with $g_j^{d_2}(x) = j$ for $x \in \mathrm{supp}(P_{x|y=j, d_2})$. Since $i \neq j$, the output labels of these maps are disjoint, and the circuits cannot be functionally identical.

Under any graph kernel $\kappa$ that respects functional equivalence (which all standard kernels: random walk, Weisfeiler-Lehman, treelet do, since functionally distinct circuits have non-isomorphic causal subgraphs in the generic case), $\kappa(\mathcal{C}^{(i)}(f, d_1), \mathcal{C}^{(j)}(f, d_2))$ is bounded above by a constant strictly less than the self-similarity bound, with the gap determined by the structural difference between the circuits implementing class-$i$ and class-$j$ predictions.
\end{proof}

\myparagraph{Robust and non-robust learners.} We adopt a circuit-level definition of robustness, which we will connect to OOD accuracy in \cref{sec:consistency}. Let $\kappa : \mathcal{C} \times \mathcal{C} \to [0,1]$ be a base similarity over individual circuits (e.g., a graph kernel). A learner $f$ is \emph{circuit-robust} between domains $D_1, D_2$ if
\begin{equation*}
\kappa(\mathcal{C}_1^{(i)}, \mathcal{C}_2^{(i)}) \geq \alpha \quad \text{for all } i, \qquad \kappa(\mathcal{C}_1^{(i)}, \mathcal{C}_2^{(j)}) \leq \beta \quad \text{for all } i \neq j,
\end{equation*}
for some $0 \leq \beta < \alpha \leq 1$ (same-class circuits preserved, different-class circuits remain distinct). A learner is \emph{non-robust} if at least one of these inequalities is reversed: either some $\kappa(\mathcal{C}_1^{(i)}, \mathcal{C}_2^{(i)}) < \alpha$ (\emph{circuit drift}) or some $\kappa(\mathcal{C}_1^{(i)}, \mathcal{C}_2^{(j)}) > \beta$ for $i \neq j$ (\emph{circuit entanglement}).

This definition makes precise the two failure modes of distribution shift identified in the introduction. We now establish the properties any predictive metric must satisfy to separate these cases.

\begin{corollary}[OOD robustness implies circuit robustness in the limit]
\label{cor:robustness-implication}
Let $f$ achieve perfect OOD accuracy on $P_D$. Then there exist constants $0 \leq \beta^* < \alpha^* \leq 1$ — depending on $f$ and on the kernel $\kappa$ but not on the domains — such that for every pair of domains $d_1, d_2 \in \Omega_D$ (modulo a $P_D$-null set):
\[
\kappa(\mathcal{C}^{(i)}(f, d_1), \mathcal{C}^{(i)}(f, d_2)) \geq \alpha^* \;\; \forall i, \qquad \kappa(\mathcal{C}^{(i)}(f, d_1), \mathcal{C}^{(j)}(f, d_2)) \leq \beta^* \;\; \forall i \neq j.
\]
That is, $f$ is circuit-robust in the sense of \cref{eq:robust} with $\alpha = \alpha^*, \beta = \beta^*$.
\end{corollary}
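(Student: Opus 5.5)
The corollary will follow from \cref{thm:diagonal-preservation} and \cref{thm:off-diagonal-distinctness}. The real work is to turn their pairwise, domain-dependent bounds into constants $\alpha^*,\beta^*$ that hold uniformly over domain pairs, and then to check the strict gap $\beta^*<\alpha^*$.

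First, fix a full-measure set $\Omega_D'\subseteq\Omega_D$ on which $\mathrm{Acc}(f;d)=1$; all statements below are restricted to pairs in $\Omega_D'\times\Omega_D'$. For the diagonal, I will apply \cref{thm:diagonal-preservation} to each pair $d_1,d_2\in\Omega_D'$ and each class $i$. This gives a common subgraph $\mathcal{C}^{(i)}_{\mathrm{shared}}(d_1,d_2)$ and a lower bound $\kappa(\mathcal{C}^{(i)}(f,d_1),\mathcal{C}^{(i)}(f,d_2))\geq a_i(d_1,d_2)>0$, where $a_i$ depends only on the size of the shared part. The key observation is that the circuits are subgraphs of the fixed computational graph of $f$, which has finitely many nodes and edges. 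Hence the shared subgraph ranges over a finite family, and so does $a_i(d_1,d_2)$. Each value is positive, so their minimum is positive. I then set
\[
\alpha^* \;=\; \min_{i}\;\inf_{d_1,d_2\in\Omega_D'} a_i(d_1,d_2) \;>\;0,
\]
which depends only on $f$ and $\kappa$.

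For the off-diagonal, \cref{thm:off-diagonal-distinctness} gives $\kappa(\mathcal{C}^{(i)}(f,d_1),\mathcal{C}^{(j)}(f,d_2))\leq b_{ij}(d_1,d_2)<1$ for $i\neq j$. By the same finiteness argument (there are finitely many candidate circuits of $f$), I take $\beta^*=\max_{i\neq j}\sup b_{ij}$, which is attained and is therefore strictly below $1$.

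The main obstacle is ensuring $\beta^*<\alpha^*$. The two theorems only give $\alpha^*>0$ and $\beta^*<1$, which do not compare the two constants. My first attempt is to use the functional picture more sharply. Same-class circuits across domains coincide on $\mathcal{C}^{(i)}_{\mathrm{shared}}$ up to functionally redundant rerouting. Since $\kappa$ respects functional equivalence, I will argue that $\kappa(\mathcal{C}^{(i)}(f,d_1),\mathcal{C}^{(i)}(f,d_2))$ equals the self-similarity of a canonical representative of class $i$, namely $1$ after normalization. By contrast, cross-class circuits implement maps with disjoint output labels, so they stay strictly below $1$. That yields $\alpha^*=1>\beta^*$.

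If that idealization is too strong, the fallback is to make the positive-gap requirement an explicit property of the kernel: same-class kernel values dominate cross-class values whenever the circuits are functionally equivalent versus functionally distinct. Combined with finiteness, this makes the infimum and supremum attained and strictly separated. Finally, I will note that the null-set qualifier comes only from restricting to $\Omega_D'$, which completes the claim that $f$ is circuit-robust in the sense of \cref{eq:robust}.
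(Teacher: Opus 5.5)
Your skeleton is the paper's: combine \cref{thm:diagonal-preservation} and \cref{thm:off-diagonal-distinctness}, take $\alpha^*$ as an infimum of the diagonal lower bounds, and take $\beta^*$ as a supremum of the cross-class bounds. Your explicit handling of uniformity over domain pairs is more careful than the paper, whose infimum runs only over classes. You are also right that the two theorems alone give only $\alpha^*>0$ and $\beta^*<1$. The paper bridges this in one clause (``strictly less than the diagonal lower bound since the class-distinguishability gap is uniform''), i.e.\ it assumes the gap.

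\textbf{The gap: your primary route to $\beta^*<\alpha^*$ fails.} That route sets $\kappa(\mathcal{C}^{(i)}(f,d_1),\mathcal{C}^{(i)}(f,d_2))=1$ because $\kappa$ ``respects functional equivalence''. The ``modulo functionally redundant rerouting'' clause in the paper's topology-respecting condition might seem to license this, but it cannot be used here.
\Cref{thm:diagonal-preservation} describes the part of the two same-class circuits outside $\mathcal{C}^{(i)}_{\mathrm{shared}}$ as rerouting \emph{in the sense of \cref{thm:factoring}}. The proof of \cref{thm:factoring}, on which P1 for CAS in \cref{thm:cas-soundness} rests, requires $\kappa$ to penalize exactly that kind of rerouting: a rerouted $\mathcal{C}_3^{(i)}$ computing the same function as $\mathcal{C}_2^{(i)}$ must have $\kappa(\mathcal{C}_2^{(i)},\mathcal{C}_3^{(i)})$ arbitrarily small.
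If $\kappa$ were instead invariant under functional equivalence, the robust family $\mathcal{C}_2$ and the rerouted family $\mathcal{C}_3$ of \cref{thm:factoring} would yield the same matrix $S$. CAS would then exhibit precisely the conflation that \cref{prop:structural} forbids. Even the weaker claim ``$\kappa=1$ on such pairs'' forces full invariance for a normalized positive-definite kernel such as the treelet or random-walk kernel, since $\kappa(C,C')=1$ implies $\kappa(C,\cdot)=\kappa(C',\cdot)$.
This is why \cref{thm:diagonal-preservation} only gives a positive bound tied to $|\mathcal{C}^{(i)}_{\mathrm{shared}}|$, and why the paper's $\alpha^*$ is that positive kernel value rather than $1$. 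So drop this route.

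Your fallback posits a uniform gap between same-class and cross-class kernel values as a property of $\kappa$. That is exactly what the paper's proof silently uses, and with it your argument coincides with the paper's. State it as an added hypothesis, though: it is essentially the conclusion, not a consequence of the two theorems.

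\textbf{A smaller point on finiteness.} The claim of ``finitely many candidate circuits of $f$'' is false as stated. ACE circuits carry real-valued node scores $\Delta$ and edge weights $w_{u_1\to u_2}$, and the kernels take these as node and edge features. Only the set of topologies is finite. Your attainment argument for $\beta^*$ therefore goes through only if the cross-class bound from \cref{thm:off-diagonal-distinctness} depends on topology alone. Your $\alpha^*$ argument handles this correctly, by relying on a bound that depends only on $|\mathcal{C}^{(i)}_{\mathrm{shared}}|$.
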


\begin{proof}
Combine \cref{thm:diagonal-preservation} and \cref{thm:off-diagonal-distinctness}. The lower bound $\alpha^*$ is the kernel value of the shared substructure $\mathcal{C}_{\mathrm{shared}}^{(i)}$ from \cref{thm:diagonal-preservation}, taken as a uniform infimum over classes (positive since each class has a non-empty shared substructure). The upper bound $\beta^*$ is the supremum of cross-class kernel values from \cref{thm:off-diagonal-distinctness} (strictly less than the diagonal lower bound since the class-distinguishability gap is uniform).
\end{proof}

\subsection{Structural Sensitivity}
\label{subsec:structural}

\begin{theorem}[Activation-factoring metrics conflate robust and rerouted learners]
\label{thm:factoring}
Let $\mu$ be a metric of the form $\mu(\mathcal{C}_1, \mathcal{C}_2) = \tilde\mu(\phi(\mathcal{C}_1), \phi(\mathcal{C}_2))$ for some embedding $\phi : \mathcal{C} \to \mathbb{R}^{d \times n}$ into activation space and some function $\tilde\mu$. Then there exist a reference family $\mathcal{C}_1$, a circuit-robust family $\mathcal{C}_2$ satisfying \cref{eq:robust} with $\alpha = 1, \beta = 0$, and a non-robust family $\mathcal{C}_3$ exhibiting circuit drift, such that
\[
\mu(\mathcal{C}_1, \mathcal{C}_2) = \mu(\mathcal{C}_1, \mathcal{C}_3).
\]
\end{theorem}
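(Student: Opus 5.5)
The plan is a direct construction that exploits the non-injectivity of the embedding $\phi$. If $\phi(\mathcal{C}_2^{(i)}) = \phi(\mathcal{C}_3^{(i)})$ for every class $i$, then $\mu(\mathcal{C}_1,\mathcal{C}_2) = \tilde\mu(\phi(\mathcal{C}_1),\phi(\mathcal{C}_2)) = \tilde\mu(\phi(\mathcal{C}_1),\phi(\mathcal{C}_3)) = \mu(\mathcal{C}_1,\mathcal{C}_3)$ whatever $\tilde\mu$ is. So the whole proof reduces to exhibiting circuit families that agree classwise on their activation footprints but differ structurally in the graph-kernel sense. I read $\phi(\mathcal{C})$ as the matrix of activations the circuit produces on a fixed probe set of $n$ inputs at $d$ readout units.

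I would proceed in four steps.

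\textbf{Step 1.} Fix a reference family $\mathcal{C}_1=\{C^{(i)}\}_{i=1}^c$ of pairwise non-isomorphic circuits with $\kappa(C^{(i)},C^{(j)})=0$ for $i\neq j$. For a kernel such as the treelet or optimal-transport kernel, this can be arranged by giving different classes disjoint node and edge label sets. Normalization gives $\kappa(C^{(i)},C^{(i)})=1$.

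\textbf{Step 2.} Set $\mathcal{C}_2=\mathcal{C}_1$. Then $S_{ii}=1$ and $S_{ij}=0$ for $i\neq j$, so \cref{eq:robust} holds with $\alpha=1$ and $\beta=0$.

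\textbf{Step 3.} Build $\mathcal{C}_3$ by rerouting one class, say class $1$. Replace a single edge $u\to v$ with weight $w$ in $C^{(1)}$ by a path $u\to h_1\to h_2\to v$ through fresh hidden nodes. Give the path linear (or, on the relevant range, ReLU-identity) units with weights $w,1,1$, so it computes exactly the same function. Alternatively, split a node into two parallel copies with weights $w/2$ and $w/2$. Either way, every readout activation on every probe input is unchanged, so $\phi(\tilde C^{(1)})=\phi(C^{(1)})$, while the graph itself is different. This is the "functionally-equivalent rerouted circuit" alluded to in \cref{prop:structural}. An overparameterized $f$ has room to host it, which is where the appeal to universal approximation and overparameterization enters. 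All other classes are kept identical.

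\textbf{Step 4.} Verify that $\mathcal{C}_3$ exhibits drift, i.e.\ $\kappa(C^{(1)},\tilde C^{(1)})<1$. For a normalized kernel, $\kappa(G,G')=1$ implies that the feature maps of $G$ and $G'$ coincide. Treelet counts, random-walk counts, and the transport plan all change when new nodes and edges are inserted: for example, the number of length-$2$ walks or the node count differs. So the value is strictly below $1=\alpha$ and $\mathcal{C}_3$ is non-robust.

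The main obstacle is Step 4 together with the precise meaning of $\phi$. I need the rerouted circuit to be exactly activation-equivalent yet kernel-distinguishable. This requires $\phi$ to depend only on the activations at a fixed readout set, not on internal nodes; I would state this interpretation explicitly as the definition of an activation embedding. I also need each of the three kernels to be injective enough to separate graphs with different node counts. I would first settle this with node or edge counting, since for each kernel the degree-zero (single-node) feature is the node count. If that fails for some kernel, I would instead choose the rerouting to introduce a new edge-label type that the kernel provably detects.
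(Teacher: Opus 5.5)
Your proposal follows the paper's own argument: set $\mathcal{C}_2=\mathcal{C}_1$, build $\mathcal{C}_3$ by a functionally equivalent rerouting so that any $\phi$ depending only on the computed function gives $\phi(\mathcal{C}_2)=\phi(\mathcal{C}_3)$, and read off drift from a structural kernel gap. You are more careful than the paper in two places: it simply asserts that $\mathcal{C}_2=\mathcal{C}_1$ has $\beta=0$ without arranging $\kappa(C^{(i)},C^{(j)})=0$ for $i\neq j$, and it only sketches the rerouting via universal approximation rather than giving a concrete construction.

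One small repair is needed in Step 4. The paper uses the cosine normalization $\mathcal{K}(G_1,G_2)/\sqrt{\mathcal{K}(G_1,G_1)\,\mathcal{K}(G_2,G_2)}$, under which $\kappa(G,G')=1$ only forces the feature vectors to be positively proportional, not equal. So a changed node or walk count alone does not yield $\kappa<1$. Your fallback does: giving the inserted nodes a fresh label creates a feature coordinate that is nonzero for $\tilde C^{(1)}$ and zero for $C^{(1)}$, which rules out proportionality.
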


\begin{proof}
Activation embeddings $\phi$ map circuits to their induced feature representations, which depend only on the input--output behavior over the source distribution, not on the internal computational pathway. Concretely, for any circuit $\mathcal{C}$, $\phi(\mathcal{C})$ is determined by the function $f_{\mathcal{C}} : x \mapsto z_{\text{penult}}(x)$ that $\mathcal{C}$ induces.

Fix any reference family $\mathcal{C}_1$. Let $\mathcal{C}_2 = \mathcal{C}_1$ (a trivially robust family with $\alpha=1, \beta=0$). Construct $\mathcal{C}_3$ as a structural rerouting of $\mathcal{C}_2$: for each class $i$, replace $\mathcal{C}_2^{(i)}$ with a circuit $\mathcal{C}_3^{(i)}$ whose nodes and edges are permuted such that $f_{\mathcal{C}_3^{(i)}}(x) = f_{\mathcal{C}_2^{(i)}}(x)$ for all $x$ in the source distribution, but the underlying graph topology differs (e.g., by routing through a parallel set of nodes implementing the same function via the universal approximation property of MLPs~\citep{hornik1989multilayer}). Such rerouted circuits exist whenever the model has functional redundancy, which holds in any over-parameterized network~\citep{kawaguchi2016deep}.

By construction, $\phi(\mathcal{C}_3) = \phi(\mathcal{C}_2)$ pointwise, since $\phi$ depends only on $f_{\mathcal{C}}$. Therefore $\tilde\mu(\phi(\mathcal{C}_1), \phi(\mathcal{C}_2)) = \tilde\mu(\phi(\mathcal{C}_1), \phi(\mathcal{C}_3))$, giving $\mu(\mathcal{C}_1, \mathcal{C}_2) = \mu(\mathcal{C}_1, \mathcal{C}_3)$.

However, $\mathcal{C}_3$ is non-robust: under any domain shift that perturbs the parallel rerouted nodes (which differ from the original circuit's nodes), $\kappa(\mathcal{C}_2^{(i)}, \mathcal{C}_3^{(i)})$ can be made arbitrarily small while $\phi$ remains unchanged on the source. Hence $\mu$ assigns identical values to a robust and a non-robust learner.
\end{proof}

This impossibility motivates the following requirement.

\begin{property}[Structural Sensitivity]
\label{prop:structural}
A metric $\mu$ is \emph{structurally sensitive} if for any embedding $\phi$ into activation space, $\mu$ does not factor as $\mu(\mathcal{C}_1, \mathcal{C}_2) = \tilde\mu(\phi(\mathcal{C}_1), \phi(\mathcal{C}_2))$.
\end{property}

\noindent CKA, SVCCA, and RSA each factor through an activation embedding by construction (kernel of penultimate features, SVD-projected features, and pairwise feature-distance matrices, respectively), and hence violate \cref{prop:structural}. CAS does not, since graph kernels operate on $\mathcal{C}$'s topology directly.

\subsection{Class-Conditional Resolution}
\label{subsec:class-cond}

\begin{theorem}[Aggregating metrics conflate preservation and entanglement]
\label{thm:aggregating}
Let $\mu$ be a metric of the form $\mu(\mathcal{C}_1, \mathcal{C}_2) = \Psi(S)$ where $S_{ij} = \kappa(\mathcal{C}_1^{(i)}, \mathcal{C}_2^{(j)})$ and $\Psi : \mathbb{R}^{c \times c} \to \mathbb{R}$ is any function symmetric in its inputs (i.e., invariant under permutations of the entries of $S$). Then there exist a circuit-robust family $\mathcal{C}_2$ and an entanglement-failure family $\mathcal{C}_3$ relative to a fixed $\mathcal{C}_1$ such that
\[
\mu(\mathcal{C}_1, \mathcal{C}_2) = \mu(\mathcal{C}_1, \mathcal{C}_3),
\]
yet $\mathcal{C}_2$ satisfies \cref{eq:robust} and $\mathcal{C}_3$ violates the off-diagonal condition.
\end{theorem}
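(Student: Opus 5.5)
The plan is a permutation argument on the entries of $S$: exhibit two similarity matrices that are entrywise permutations of one another, so that $\Psi$ takes the same value on both, while one matrix is circuit-robust and the other violates the off-diagonal condition. The natural choice is a robust matrix $S^{(2)}$ with diagonal entries $\alpha$ and off-diagonal entries $\beta$, and an entangled matrix $S^{(3)}$ obtained by swapping one diagonal entry with one off-diagonal entry. Since $\Psi$ is invariant under arbitrary permutations of the $c^2$ entries, $\Psi(S^{(2)})=\Psi(S^{(3)})$ follows immediately.

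\textbf{Step 1: fix the reference family.} Take $c\ge 2$ and any reference family $\mathcal{C}_1=\{C_1^{(i)}\}$. Choose $\mathcal{C}_2=\mathcal{C}_1$, or more generally any family whose kernel matrix is
\[
S^{(2)}_{ii}=\alpha,\qquad S^{(2)}_{ij}=\beta \quad (i\neq j),
\]
with $\beta<\alpha$. For the simplest case, pick $\mathcal{C}_1$ with pairwise structurally distinct class circuits, so that $\kappa(C_1^{(i)},C_1^{(j)})=\beta<1=\alpha$ for $i\neq j$ (for instance disjoint-support graphs, with $\beta=0$). Then $\mathcal{C}_2$ satisfies \cref{eq:robust}.

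\textbf{Step 2: build $\mathcal{C}_3$ by transposing two classes.} Set $C_3^{(1)}=C_1^{(2)}$, $C_3^{(2)}=C_1^{(1)}$, and $C_3^{(k)}=C_1^{(k)}$ for $k\ge 3$. The resulting matrix $S^{(3)}$ is $S^{(2)}$ with its first two columns swapped. Concretely, $S^{(3)}_{11}=S^{(3)}_{22}=\beta$ and $S^{(3)}_{12}=S^{(3)}_{21}=\alpha$, with all other entries unchanged. This is a permutation of the entries of $S^{(2)}$, hence
\[
\Psi(S^{(3)})=\Psi(S^{(2)}).
\]
Meanwhile $S^{(3)}_{12}=\alpha>\beta$, so $\mathcal{C}_3$ exhibits circuit entanglement and violates the off-diagonal condition of \cref{eq:robust}, as the statement requires. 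It also exhibits drift, since $S^{(3)}_{11}=\beta<\alpha$, but that does not matter here.

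\textbf{Main obstacle: realizability.} The remaining task is to check that the permuted matrix actually arises from a legitimate circuit family rather than being just an abstract matrix. The column-swap construction settles this because it only relabels existing circuits, so no new graph with prescribed kernel values is needed. A second point is whether the off-diagonal violation should hold while the diagonal stays high ("maximally entangled"). For that, I would take $C_3^{(j)}=C_1^{(1)}$ for all $j$ and compare against a robust family with matching entry multiset. Matching the multisets in that case requires tuning $\kappa$ values, so I would present the transposition version as the proof.
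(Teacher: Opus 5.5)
Your proof is correct and rests on the same idea as the paper's: a robust matrix $S^{(2)}$ and a non-robust $S^{(3)}$ with the same entries in different positions, so entry-permutation invariance forces $\Psi(S^{(2)})=\Psi(S^{(3)})$. Where you differ is the choice of permutation.

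\begin{itemize}
\item \textbf{The paper's route.} It transposes the single pair of entries $(1,1)\leftrightarrow(1,2)$ of $S^{(2)}$. It then simply posits a family $\mathcal{C}_3$ that realizes the resulting matrix.
\item \textbf{Your route.} You swap two whole columns by relabeling two class circuits. This makes realizability automatic.
\end{itemize}

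This gain is not cosmetic. Take $\alpha=1$ and a kernel satisfying the appendix's topology-respecting condition ($\kappa=1$ iff isomorphic). Then the paper's swapped matrix cannot arise. The entries $S^{(3)}_{12}=S^{(3)}_{22}=1$ would force $C_3^{(2)}\cong C_1^{(1)}$ and $C_3^{(2)}\cong C_1^{(2)}\cong C_2^{(2)}$. That gives $S^{(2)}_{12}=1\neq\beta$, a contradiction.

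With $\mathcal{C}_2=\mathcal{C}_1$, your offending entry is $S^{(3)}_{12}=1$. So $\mathcal{C}_3$ fails the off-diagonal condition for every admissible pair $\beta<\alpha\le 1$, not only the one fixed in advance. That is the natural reading, since robustness is defined existentially in $(\alpha,\beta)$.

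Two small repairs are needed.

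\begin{itemize}
\item \textbf{General case.} When the robust $\mathcal{C}_2$ differs from $\mathcal{C}_1$, relabel $\mathcal{C}_2$, not $\mathcal{C}_1$: set $C_3^{(1)}=C_2^{(2)}$ and $C_3^{(2)}=C_2^{(1)}$. As written, your Step 2 is only right when $\mathcal{C}_2=\mathcal{C}_1$.
\item \textbf{Uniform off-diagonals.} Your claim that ``all other entries are unchanged'' relies on uniform off-diagonals, which you do not need. A column swap is an entry permutation regardless, so it suffices that $\max_{i\neq j}\kappa(C_1^{(i)},C_1^{(j)})<1$.
\end{itemize}

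That nondegeneracy requirement on $\mathcal{C}_1$ costs nothing. For an isomorphism-invariant kernel, two isomorphic class circuits in $\mathcal{C}_1$ make the corresponding rows of every $S$ equal. Then some $S_{ii}$ equals an off-diagonal entry, and no family could be robust relative to $\mathcal{C}_1$.
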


\begin{proof}
Let $c \geq 2$ and choose $\alpha, \beta$ with $0 \leq \beta < \alpha \leq 1$. Let $\rho = (\alpha + (c-1)\beta) / c$ be a target row-mean similarity. Construct $\mathcal{C}_2$ such that the similarity matrix $S^{(2)}_{ij} = \kappa(\mathcal{C}_1^{(i)}, \mathcal{C}_2^{(j)})$ satisfies
\[
S^{(2)}_{ii} = \alpha, \qquad S^{(2)}_{ij} = \beta \;\; (i \neq j),
\]
i.e., $\mathcal{C}_2$ is circuit-robust per \cref{eq:robust}.

Construct $\mathcal{C}_3$ such that all entries of $S^{(3)}$ equal the constant $\rho$:
\[
S^{(3)}_{ij} = \rho \quad \text{for all } i, j.
\]
This violates \cref{eq:robust}'s off-diagonal condition: $S^{(3)}_{ij} = \rho > \beta$ for $i \neq j$ (since $\rho - \beta = (\alpha - \beta)/c > 0$), so $\mathcal{C}_3$ is non-robust by entanglement. Moreover, $\mathcal{C}_3$'s diagonal $S^{(3)}_{ii} = \rho < \alpha$ also fails the diagonal condition, so $\mathcal{C}_3$ is unambiguously non-robust.

The matrices $S^{(2)}$ and $S^{(3)}$ have the same multiset of entries: $S^{(2)}$ contains $c$ copies of $\alpha$ and $c(c-1)$ copies of $\beta$, while $S^{(3)}$ contains $c^2$ copies of $\rho$. These multisets differ, but if we instead construct $S^{(3)}$ by permuting the entries of $S^{(2)}$ to place a value of $\alpha$ at an off-diagonal position and a value of $\beta$ on the diagonal (e.g., swap entries $(1,1)$ and $(1,2)$), the resulting matrix has identical multiset to $S^{(2)}$ but encodes entanglement of class $1$ with class $2$ and drift of class $1$. Since $\Psi$ is invariant under permutations of entries, $\Psi(S^{(2)}) = \Psi(S^{(3)})$, yet $\mathcal{C}_3$ is non-robust while $\mathcal{C}_2$ is robust.
\end{proof}

This impossibility motivates the following requirement.

\begin{property}[Class-Conditional Resolution]
\label{prop:class-cond}
A metric $\mu$ has \emph{class-conditional resolution} if it depends on the position of entries in the similarity matrix $S$, distinguishing diagonal entries $S_{ii}$ from off-diagonal entries $S_{ij}$ ($i \neq j$). Equivalently, $\mu = \Psi(S)$ where $\Psi$ is not invariant under arbitrary permutations of the entries of $S$.
\end{property}

\noindent CKA, SVCCA, and RSA aggregate features without per-class structure; the class-conditional similarity matrix $S$ is not even formed, let alone its diagonal/off-diagonal structure preserved. They violate \cref{prop:class-cond}. CAS is constructed precisely as the gap between mean-diagonal and mean-off-diagonal of $S$ (\cref{sec:cas-construction}), making it position-aware.

\subsection{Semantic Consistency}
\label{subsec:semantic}

The previous two properties concern \emph{what} a metric can detect. The third concerns \emph{how} the metric responds to graded perturbation: stronger domain shifts should yield smaller metric values. Without this, the metric's numerical value carries no ordinal information about shift magnitude, and any ranking of learners or domains derived from it is incoherent.

We formalize the strength of perturbation via a partial order on circuit configurations.

\begin{definition}[Circuit perturbation order]
\label{def:perturbation-order}
Let $\mathcal{C}_1, \mathcal{C}_2, \mathcal{C}_3$ be circuit families corresponding to domains $D_1, D_2, D_3$. We say $D_2$ induces a stronger perturbation of $D_1$ than $D_3$, written $D_3 \prec_{D_1} D_2$, if for every class $i$:
\begin{align}
\kappa(\mathcal{C}_1^{(i)}, \mathcal{C}_2^{(i)}) &\leq \kappa(\mathcal{C}_1^{(i)}, \mathcal{C}_3^{(i)}), \\
\frac{1}{c-1}\sum_{j \neq i} \kappa(\mathcal{C}_1^{(i)}, \mathcal{C}_2^{(j)}) &\geq \frac{1}{c-1}\sum_{j \neq i} \kappa(\mathcal{C}_1^{(i)}, \mathcal{C}_3^{(j)}).
\end{align}
That is, the stronger perturbation simultaneously reduces same-class circuit similarity and increases mean cross-class circuit similarity for every class.
\end{definition}

\noindent This partial order captures the two ways perturbation can intensify: increased drift on the diagonal, increased entanglement off the diagonal, and requires both to hold for one perturbation to dominate another. It is well-defined whenever $\kappa$ is bounded.

\begin{theorem}[Non-monotone metrics invert perturbation rankings]
\label{thm:non-monotone}
Let $\mu$ be a metric of the form $\mu(\mathcal{C}_1, \mathcal{C}_2) = \Psi(S)$ where $S_{ij} = \kappa(\mathcal{C}_1^{(i)}, \mathcal{C}_2^{(j)})$. Suppose $\Psi$ is not coordinate-wise monotone in the sense required by \cref{def:perturbation-order}: i.e., there exist $S, S'$ with $S'_{ii} \leq S_{ii}$ for all $i$ and $S'_{ij} \geq S_{ij}$ for all $i \neq j$, but $\Psi(S') > \Psi(S)$. Then there exist circuit families $\mathcal{C}_1, \mathcal{C}_2, \mathcal{C}_3$ with $D_3 \prec_{D_1} D_2$ such that
\[
\mu(\mathcal{C}_1, \mathcal{C}_2) > \mu(\mathcal{C}_1, \mathcal{C}_3).
\]
\end{theorem}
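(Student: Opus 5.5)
The argument is a realization argument. The hypothesis already supplies two matrices $S, S'$ that witness the failure of monotonicity: $S'_{ii} \le S_{ii}$ for all $i$, $S'_{ij} \ge S_{ij}$ for all $i \neq j$, yet $\Psi(S') > \Psi(S)$. We will exhibit circuit families whose class-conditional similarity matrices are exactly these two, with $\mathcal{C}_2$ realizing $S'$ and $\mathcal{C}_3$ realizing $S$. Then $\mu(\mathcal{C}_1,\mathcal{C}_2) = \Psi(S') > \Psi(S) = \mu(\mathcal{C}_1,\mathcal{C}_3)$ follows immediately. The substantive work is checking the order relation and constructing the realization.

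\textbf{Step 1 (order relation).} We show $D_3 \prec_{D_1} D_2$ under the assignment $S^{(2)} = S'$ and $S^{(3)} = S$. The first condition of \cref{def:perturbation-order} is the diagonal inequality $S'_{ii} \le S_{ii}$, which holds by hypothesis. For the second condition, sum the entrywise inequalities $S'_{ij} \ge S_{ij}$ over $j \neq i$ and divide by $c-1$. This gives the required inequality on row-wise off-diagonal means. Entrywise domination is strictly stronger than what the definition asks, so both conditions hold for every class $i$.

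\textbf{Step 2 (realizability).} This is the main obstacle. We need a fixed reference family $\mathcal{C}_1$ and families $\mathcal{C}_2,\mathcal{C}_3$ with $\kappa(\mathcal{C}_1^{(i)},\mathcal{C}_2^{(j)}) = S'_{ij}$ and $\kappa(\mathcal{C}_1^{(i)},\mathcal{C}_3^{(j)}) = S_{ij}$. We must use the same $\mathcal{C}_1$ for both, and there are $c^2$ prescribed values per family. Since $\kappa$ takes values in $[0,1]$, we first restrict to witnesses with entries in $[0,1]$. This is the only range on which $\mu=\Psi(S)$ is evaluated, so the non-monotonicity hypothesis should be read as a statement about that domain.

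\textbf{Step 3 (construction).} Take the $\mathcal{C}_1^{(i)}$ to be graphs with disjoint, mutually orthogonal feature supports under $\kappa$. For example, use a feature-map kernel $\kappa(G,H) = \langle \psi(G),\psi(H)\rangle / (\|\psi(G)\|\|\psi(H)\|)$ with distinct node labels per class. Then build each $\mathcal{C}_k^{(j)}$ as a weighted disjoint union of copies of substructures of the $\mathcal{C}_1^{(i)}$, with mixing weights chosen so that the normalized feature vector of $\mathcal{C}_k^{(j)}$ has inner product $S_{ij}$ with $\psi(\mathcal{C}_1^{(i)})$ for each $i$. Padding with an extra component orthogonal to all $\mathcal{C}_1^{(i)}$ keeps the normalization feasible. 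This is possible whenever each column satisfies $\sum_i S_{ij}^2 \le 1$.

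\textbf{Step 4 (removing the column constraint).} The column constraint need not hold for arbitrary witnesses, and the paper's setup treats $\kappa$ as a generic kernel ranging over a sufficiently rich circuit space. If the explicit construction falls short, we would fall back on that richness assumption, the same one implicitly used in \cref{thm:aggregating}. That assumption states that any matrix in $[0,1]^{c\times c}$ is attainable as a similarity matrix against a suitable reference family. We would state it explicitly as a realizability premise and note that the theorem, like \cref{thm:aggregating}, is an impossibility result over the realizable set.
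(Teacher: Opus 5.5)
Your proposal is essentially the paper's proof: realize $S'$ by $\mathcal{C}_2$ and $S$ by $\mathcal{C}_3$ against a common $\mathcal{C}_1$, verify $D_3 \prec_{D_1} D_2$ from the diagonal inequalities and the row-summed off-diagonal inequalities, and conclude $\Psi(S') > \Psi(S)$; realizability rests on an expressiveness assumption about $\kappa$, exactly as in the paper's ``sufficient expressive range'' clause. Restricting the witnesses to the realizable range is a worthwhile precision the paper omits, but state the premise as joint realizability of the specific pair $(S,S')$ against one $\mathcal{C}_1$, not attainability of every matrix in $[0,1]^{c\times c}$, because the latter fails for cosine-normalized kernels (the normalization the paper uses): $S_{11}=S_{21}=1$ forces $\mathcal{C}_1^{(1)}$ and $\mathcal{C}_1^{(2)}$ to share a normalized feature vector, which in turn forces $S_{12}=S_{22}$.
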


\begin{proof}
Let $S, S'$ be as in the hypothesis, with $\Psi(S') > \Psi(S)$. Construct $\mathcal{C}_2$ such that the similarity matrix $\kappa(\mathcal{C}_1^{(i)}, \mathcal{C}_2^{(j)}) = S'_{ij}$ and $\mathcal{C}_3$ such that $\kappa(\mathcal{C}_1^{(i)}, \mathcal{C}_3^{(j)}) = S_{ij}$. Such constructions exist whenever the kernel $\kappa$ has sufficient expressive range over circuit space (which holds for the graph kernels used in this work, since circuits are sparse weighted directed graphs and graph kernels separate them up to isomorphism on bounded-size graphs).

By construction, $D_3 \prec_{D_1} D_2$: the diagonal entries of $S'$ are coordinate-wise no larger than those of $S$, and the row-mean off-diagonal entries of $S'$ are coordinate-wise no smaller than those of $S$. Yet $\mu(\mathcal{C}_1, \mathcal{C}_2) = \Psi(S') > \Psi(S) = \mu(\mathcal{C}_1, \mathcal{C}_3)$, inverting the ranking.

A non-monotone metric therefore assigns higher similarity to a more strongly perturbed configuration, providing no reliable ordinal information about perturbation strength.
\end{proof}

\noindent This impossibility motivates the following requirement.

\begin{property}[Semantic Consistency]
\label{prop:semantic}
A metric $\mu = \Psi(S)$ is \emph{semantically consistent} if $\Psi$ is non-decreasing in each $S_{ii}$ and non-increasing in each $S_{ij}$ ($i \neq j$). Equivalently, $D_3 \prec_{D_1} D_2$ implies $\mu(\mathcal{C}_1, \mathcal{C}_2) \leq \mu(\mathcal{C}_1, \mathcal{C}_3)$.
\end{property}

\noindent CKA, SVCCA, and RSA do not even form the class-conditional similarity matrix $S$, so the monotonicity requirement is undefined for them. Equivalently, they fail \cref{prop:semantic} \emph{vacuously} as a consequence of failing \cref{prop:class-cond}. CAS, by construction as the gap between mean-diagonal and mean-off-diagonal of $S$, is linear in each entry with positive coefficient on diagonal terms and negative coefficient on off-diagonal terms, hence satisfies \cref{prop:semantic} (proved as a corollary in \cref{sec:cas-construction}).

\section{CAS Construction: Full Derivation and Proofs}
\label{app:cas-construction}

This appendix provides the formal counterpart to \cref{sec:cas-construction}. We give a more detailed account of the design choices, prove \cref{thm:cas-soundness} in full, and record the standard algebraic properties of CAS along with their proofs.

\subsection{Design rationale}

The construction in \cref{sec:cas-construction} introduces three components: a graph kernel $\kappa$, a class-conditional similarity matrix $S$, and a scalar aggregation. We expand on each.

\myparagraph{Graph kernel choice.} Property P1 requires that the metric not factor through any activation embedding. A graph kernel $\kappa$ defined on circuit topology, such as the multiset of nodes, edges, and edge weights, rather than the activations these circuits induce, naturally satisfies this. We require $\kappa : \mathcal{C} \times \mathcal{C} \to [0, 1]$ to be:
\begin{enumerate}
    \item \emph{Normalized}: $\kappa(C, C) = 1$ for all $C$.
    \item \emph{Topology-respecting}: $\kappa(C_1, C_2) = 1 \iff C_1$ and $C_2$ are isomorphic as labeled graphs (modulo functionally redundant rerouting).
    \item \emph{Symmetric}: $\kappa(C_1, C_2) = \kappa(C_2, C_1)$.
\end{enumerate}
We employ three kernels with these properties: treelet kernel (TK)~\citep{gauzere2012two}, random walk kernel (RWK)~\citep{nikolentzos2020random}, and an optimal-transport-based kernel (OT)~\citep{petric2019got}. The treelet kernel produces the cleanest separation between diagonal and off-diagonal entries of $S$ in our experiments, and is used as the default; ablations across all three are reported in \cref{app:kernel-choice}.

\myparagraph{Why uniform weights in the aggregation?} A more general form of \cref{eq:cas} would permit class-dependent weights:
\begin{equation*}
\mathrm{CAS}_w(\mathcal{C}_1, \mathcal{C}_2) = \sum_i w_i^{(d)} S_{ii} - \sum_{i \neq j} w_{ij}^{(o)} S_{ij}, \qquad w_i^{(d)}, w_{ij}^{(o)} \geq 0.
\end{equation*}
Class-permutation invariance, the natural symmetry of the OOD setting, where no class is privileged a priori, forces $w_i^{(d)} = 1/c$ and $w_{ij}^{(o)} = 1/(c(c-1))$ up to a global scale. The scale is fixed by requiring CAS to take values in $[-1, 1]$ (\cref{prop:boundedness}). This recovers \cref{eq:cas} uniquely; in this sense, CAS is the unique class-permutation-invariant, monotone-in-$S$, $[-1,1]$-bounded scalar aggregation of the class-conditional similarity matrix.

\subsection{Proof of CAS Soundness}
\label{app:cas-soundness-proof}

\begin{proof}[Proof of \cref{thm:cas-soundness}]
We verify each of P1, P2, P3 in turn.

\emph{P1 (structural sensitivity).} \cref{thm:factoring} establishes that any metric of the form $\mu(\mathcal{C}_1, \mathcal{C}_2) = \tilde\mu(\phi(\mathcal{C}_1), \phi(\mathcal{C}_2))$, for some embedding $\phi$ into activation space, fails to distinguish circuit-robust from non-robust learners. CAS as defined in \cref{eq:cas} computes via $\kappa$ acting directly on the graph structure of the circuits $C_1^{(i)}, C_2^{(j)}$. By the topology-respecting property of $\kappa$ (Condition 2 above), two circuits with identical activation footprints but distinct graph topology as constructed in the proof of \cref{thm:factoring} receive distinct kernel values: $\kappa(C, C') < 1$ when $C, C'$ differ topologically beyond functional rerouting. Hence CAS does not factor through any activation embedding, satisfying P1.

\emph{P2 (class-conditional resolution).} \cref{thm:aggregating} establishes that any metric depending only on the multiset of entries of $S$. Equivalently, any metric of the form $\mu = \Psi(S)$ where $\Psi$ is invariant under permutations of $S$'s entries and fails to distinguish circuit-robust from entanglement-failure configurations. CAS decomposes as
\[
\mathrm{CAS}(\mathcal{C}_1, \mathcal{C}_2) \;=\; \mu_{\text{same}}(S) - \mu_{\text{cross}}(S), \qquad \mu_{\text{same}}(S) := \tfrac{1}{c}\sum_i S_{ii}, \quad \mu_{\text{cross}}(S) := \tfrac{1}{c(c-1)}\sum_{i \neq j} S_{ij},
\]
where $\mu_{\text{same}}$ depends only on diagonal entries and $\mu_{\text{cross}}$ only on off-diagonal entries. These are independently reportable from $S$. Equivalently, $\mathrm{CAS}$ is invariant under permutations that preserve diagonal vs.\ off-diagonal positions but is not invariant under arbitrary permutations of $S$'s entries — distinguishing it from the class of metrics ruled out by \cref{thm:aggregating}. CAS therefore satisfies P2.

\emph{P3 (semantic consistency).} \cref{thm:non-monotone} establishes that any metric of the form $\mu = \Psi(S)$ that is non-monotone under \cref{def:perturbation-order} can invert perturbation rankings. CAS is linear in $S$:
\[
\frac{\partial \mathrm{CAS}}{\partial S_{ii}} = \frac{1}{c} > 0, \qquad \frac{\partial \mathrm{CAS}}{\partial S_{ij}} = -\frac{1}{c(c-1)} < 0 \quad (i \neq j).
\]
Hence CAS is monotone non-decreasing in each diagonal entry and monotone non-increasing in each off-diagonal entry. By \cref{def:perturbation-order}, $D_3 \prec_{D_1} D_2$ implies $S^{(D_2)}_{ii} \leq S^{(D_3)}_{ii}$ for all $i$ and $\frac{1}{c-1}\sum_{j \neq i} S^{(D_2)}_{ij} \geq \frac{1}{c-1}\sum_{j \neq i} S^{(D_3)}_{ij}$ for all $i$. By linearity:
\[
\mathrm{CAS}(\mathcal{C}_1, \mathcal{C}_2) - \mathrm{CAS}(\mathcal{C}_1, \mathcal{C}_3) = \tfrac{1}{c}\sum_i (S^{(D_2)}_{ii} - S^{(D_3)}_{ii}) - \tfrac{1}{c(c-1)}\sum_{i \neq j} (S^{(D_2)}_{ij} - S^{(D_3)}_{ij}) \leq 0,
\]
where the inequality follows term-by-term: each diagonal difference is non-positive, and each off-diagonal difference, summed within a row, is non-negative; the negative coefficient on the off-diagonal sum makes its contribution non-positive. Hence $\mathrm{CAS}(\mathcal{C}_1, \mathcal{C}_2) \leq \mathrm{CAS}(\mathcal{C}_1, \mathcal{C}_3)$, satisfying P3.
\end{proof}

\subsection{Algebraic properties of CAS}
\label{app:cas-algebraic}

We record three standard algebraic properties of CAS that follow directly from \cref{eq:cas}. These are not load-bearing for the soundness result but are useful sanity checks and are referenced in the consistency analysis (\cref{sec:consistency}).

\begin{proposition}[Boundedness]
\label{prop:boundedness}
$\mathrm{CAS}(\mathcal{C}_1, \mathcal{C}_2) \in [-1, 1]$. The upper bound $+1$ is attained iff $S_{ii} = 1$ and $S_{ij} = 0$ for all $i \neq j$ (perfect class-conditional preservation). The lower bound $-1$ is attained iff $S_{ii} = 0$ and $S_{ij} = 1$ for all $i \neq j$ (complete entanglement).
\end{proposition}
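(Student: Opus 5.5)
The plan is to work entirely at the level of the similarity matrix $S$ from \cref{eq:S-matrix}. The only facts needed are that every entry satisfies $S_{ij} = \kappa(C_1^{(i)}, C_2^{(j)}) \in [0,1]$, since $\kappa$ maps into $[0,1]$ by Step 1 of the construction, and that $c \geq 2$, so the off-diagonal mean in \cref{eq:cas} is well-defined. Following the decomposition used in the proof of \cref{thm:cas-soundness}, I write
\[
\mathrm{CAS} = \mu_{\text{same}}(S) - \mu_{\text{cross}}(S),
\]
where $\mu_{\text{same}}$ averages the $c$ diagonal entries and $\mu_{\text{cross}}$ averages the $c(c-1)$ off-diagonal entries. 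Each is an average of numbers in $[0,1]$, so each lies in $[0,1]$. The difference of two numbers in $[0,1]$ lies in $[-1,1]$, which gives the bound.

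For the equality cases, I use the fact that $\mathrm{CAS} = 1$ forces $\mu_{\text{same}} = 1$ and $\mu_{\text{cross}} = 0$ simultaneously. Otherwise, since $\mu_{\text{same}} \leq 1$ and $\mu_{\text{cross}} \geq 0$, the difference would be strictly less than $1$. Next, an average of numbers in $[0,1]$ equals $1$ exactly when every term equals $1$: each deficit $1 - S_{ii}$ is nonnegative and the deficits sum to zero. Likewise, such an average equals $0$ exactly when every term is $0$. Hence $\mathrm{CAS} = 1$ if and only if $S_{ii} = 1$ for all $i$ and $S_{ij} = 0$ for all $i \neq j$. The converse direction is direct substitution into \cref{eq:cas}. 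The lower bound follows by the symmetric argument: $\mathrm{CAS} = -1$ forces $\mu_{\text{same}} = 0$ and $\mu_{\text{cross}} = 1$, hence all $S_{ii} = 0$ and all $S_{ij} = 1$ for $i \neq j$, and conversely.

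I do not expect a real obstacle, since the argument is elementary. The one point needing care is to read the ``iff'' clauses as characterizations of when the bound is attained in terms of $S$, not as claims that such $S$ are always realizable by actual circuit families. Under the topology-respecting requirement on $\kappa$ with $c \geq 3$, the configuration with all $S_{ij} = 1$ off the diagonal would force many circuits to be mutually isomorphic. That in turn could contradict $S_{ii} = 0$, so the lower extreme might not be achievable. I would therefore state the characterization over the entries of $S$ and add a brief remark that realizability of $-1$ depends on the kernel's range. None of this affects the use of boundedness in \cref{eq:population-cas}, which only needs $\mathrm{CAS} \in [-1,1]$ for the Lebesgue integral and the finite variance $\sigma_{ij}^2$ in \cref{thm:mc-consistency}.
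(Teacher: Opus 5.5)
Your proof is correct and follows the paper's route: bound the diagonal mean and the off-diagonal mean separately in $[0,1]$ and subtract. You go a bit further than the paper, whose proof only verifies the extreme configurations by substitution: your deficit argument also supplies the \emph{only if} direction of both equality characterizations, and your caveat that $-1$ may be unrealizable under a topology-respecting $\kappa$ when $c \geq 3$ is sound.
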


\begin{proof}
Since $S_{ij} \in [0, 1]$ for all $i, j$, we have $\frac{1}{c}\sum_i S_{ii} \in [0, 1]$ and $\frac{1}{c(c-1)}\sum_{i \neq j} S_{ij} \in [0, 1]$. Their difference therefore lies in $[-1, 1]$. The extremes are attained at the stated configurations by direct substitution.
\end{proof}

\begin{proposition}[Symmetry]
\label{prop:symmetry}
If $\kappa$ is symmetric, then $\mathrm{CAS}(\mathcal{C}_1, \mathcal{C}_2) = \mathrm{CAS}(\mathcal{C}_2, \mathcal{C}_1)$.
\end{proposition}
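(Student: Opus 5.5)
The plan is to verify symmetry directly from \cref{eq:cas} by tracking how the class-conditional similarity matrix changes when the two circuit families are swapped. Write $S$ for the matrix built from $(\mathcal{C}_1, \mathcal{C}_2)$ and $S'$ for the one built from $(\mathcal{C}_2, \mathcal{C}_1)$. By \cref{eq:S-matrix},
\[
S'_{ij} = \kappa\!\left(C_2^{(i)}, C_1^{(j)}\right) = \kappa\!\left(C_1^{(j)}, C_2^{(i)}\right) = S_{ji}.
\]
The middle equality uses the assumed symmetry of $\kappa$. So the first step is to record $S' = S^{\top}$.

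Next I substitute $S^{\top}$ into the two terms of the CAS decomposition $\mu_{\text{same}} - \mu_{\text{cross}}$ from the soundness proof.
\begin{itemize}
\item The diagonal-coherence term is unchanged, because transposition fixes every diagonal entry: $S'_{ii} = S_{ii}$.
\item The off-diagonal-confusion term is unchanged, because transposition is a bijection on the index set $\{(i,j) : i \neq j\}$, sending $(i,j) \mapsto (j,i)$. Hence $\sum_{i \neq j} S'_{ij} = \sum_{i \neq j} S_{ji} = \sum_{i \neq j} S_{ij}$.
\end{itemize}
The normalizing constants $1/c$ and $1/(c(c-1))$ depend only on $c$, which is the same for both families. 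Therefore $\mathrm{CAS}(\mathcal{C}_2, \mathcal{C}_1) = \mathrm{CAS}(\mathcal{C}_1, \mathcal{C}_2)$.

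There is no real obstacle; the only point needing care is the reindexing of the off-diagonal sum. I would state explicitly that the map $(i,j) \mapsto (j,i)$ preserves the condition $i \neq j$, so that the two sums range over the same set.

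I would also remark that symmetry of $\kappa$ is genuinely needed for the off-diagonal entries. For the diagonal entries it is needed as well: $\kappa(C_2^{(i)}, C_1^{(i)}) = \kappa(C_1^{(i)}, C_2^{(i)})$ holds only because $\kappa$ is symmetric. This is why the proposition is stated conditionally on the symmetry of $\kappa$, which is Condition 3 of the kernel requirements.
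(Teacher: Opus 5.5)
Your proof is correct and follows the paper's argument: both use the symmetry of $\kappa$ to get $S' = S^{\top}$, then note that transposition fixes the diagonal and permutes the off-diagonal positions, so both terms of \cref{eq:cas} are unchanged. Your explicit reindexing via the bijection $(i,j)\mapsto(j,i)$ on $\{(i,j): i\neq j\}$ is slightly more careful than the paper's appeal to the preserved \emph{set} of off-diagonal entries, because it also handles repeated values in the sum correctly.
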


\begin{proof}
The matrix $S^{(2,1)}$ obtained by swapping the role of $\mathcal{C}_1$ and $\mathcal{C}_2$ has entries $S^{(2,1)}_{ij} = \kappa(C_2^{(i)}, C_1^{(j)}) = \kappa(C_1^{(j)}, C_2^{(i)}) = S^{(1,2)}_{ji}$. Diagonal entries are preserved under transposition: $S^{(2,1)}_{ii} = S^{(1,2)}_{ii}$. The set of off-diagonal entries is also preserved: $\{S^{(2,1)}_{ij} : i \neq j\} = \{S^{(1,2)}_{ji} : i \neq j\} = \{S^{(1,2)}_{ij} : i \neq j\}$. Both terms in \cref{eq:cas} therefore agree, and CAS is symmetric.
\end{proof}

\begin{proposition}[Self-similarity]
\label{prop:self-similarity}
For any circuit family $\mathcal{C}$ in which each class circuit is uniquely self-similar — i.e., $\kappa(C^{(i)}, C^{(i)}) > \kappa(C^{(i)}, C^{(j)})$ for all $j \neq i$ — we have $\mathrm{CAS}(\mathcal{C}, \mathcal{C}) > 0$. Furthermore, $\mathrm{CAS}(\mathcal{C}, \mathcal{C}) = 1$ iff $\kappa(C^{(i)}, C^{(i)}) = 1$ and $\kappa(C^{(i)}, C^{(j)}) = 0$ for all $i \neq j$.
\end{proposition}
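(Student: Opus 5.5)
The plan is to compute $\mathrm{CAS}(\mathcal{C},\mathcal{C})$ directly from \cref{eq:cas} with $S_{ij} = \kappa(C^{(i)}, C^{(j)})$, and compare the diagonal and off-diagonal means of $S$ row by row. Throughout I assume $c \geq 2$, so that the off-diagonal mean in \cref{eq:cas} is defined. I will not use the normalization $\kappa(C,C)=1$ in the first part, so the argument stays valid for an unnormalized base similarity; it only needs $\kappa$ to take values in $[0,1]$.

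\emph{Step 1 (strict positivity).} Fix a row $i$. The unique self-similarity hypothesis gives $S_{ij} < S_{ii}$ for every $j \neq i$. Summing over the $c-1$ off-diagonal indices in that row gives
\[
\sum_{j \neq i} S_{ij} < (c-1)\,S_{ii}.
\]
Summing over $i$ and dividing by $c(c-1)$ then gives
\[
\frac{1}{c(c-1)}\sum_{i\neq j} S_{ij} \;<\; \frac{1}{c}\sum_i S_{ii}.
\]
This says exactly that the off-diagonal confusion term is strictly smaller than the diagonal coherence term, so $\mathrm{CAS}(\mathcal{C},\mathcal{C}) > 0$. The inequality stays strict after summing because each of the $c \geq 1$ row inequalities is strict.

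\emph{Step 2 (characterizing the value $1$).} For the ``if'' direction, substitute $S_{ii}=1$ and $S_{ij}=0$ into \cref{eq:cas}. This gives $1 - 0 = 1$, which is also the attainment statement of \cref{prop:boundedness}.

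For the ``only if'' direction, use the proof of \cref{prop:boundedness}: both the diagonal mean and the off-diagonal mean lie in $[0,1]$. Their difference therefore equals $1$ only if the diagonal mean is $1$ and the off-diagonal mean is $0$. Since every entry lies in $[0,1]$, an arithmetic mean equal to $1$ forces every diagonal entry to be $1$. Likewise, an arithmetic mean equal to $0$ forces every off-diagonal entry to be $0$. This yields $\kappa(C^{(i)},C^{(i)})=1$ and $\kappa(C^{(i)},C^{(j)})=0$ for all $i\neq j$.

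\emph{Main obstacle.} There is no deep obstacle; the point needing most care is the ``only if'' half of Step 2. It depends on the range constraint $\kappa\in[0,1]$: without it, a diagonal mean above $1$ could be offset by a positive off-diagonal mean. So I will state explicitly that the range assumption on $\kappa$ from Step~1 of the construction is being used, and that $c\geq 2$ is needed for the off-diagonal term to be defined.
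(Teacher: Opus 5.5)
Your proof is correct and follows the same route as the paper: use $S_{ii} > S_{ij}$ to show that the diagonal mean strictly exceeds the off-diagonal mean, and obtain the equality case from the boundedness analysis. Your row-wise summation and your explicit ``only if'' step (two means in $[0,1]$ whose difference is $1$ force every diagonal entry to be $1$ and every off-diagonal entry to be $0$) spell out what the paper compresses, and in fact supply the converse that the paper's proof of \cref{prop:boundedness} only asserts, since that proof checks attainment by direct substitution alone.
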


\begin{proof}
The self-similarity hypothesis gives $S_{ii} > S_{ij}$ for all $j \neq i$, so $\frac{1}{c}\sum_i S_{ii} > \frac{1}{c(c-1)}\sum_{i \neq j} S_{ij}$, yielding $\mathrm{CAS}(\mathcal{C}, \mathcal{C}) > 0$. The equality case follows from the boundedness analysis in \cref{prop:boundedness}.
\end{proof}

\section{Consistency: Full Proofs}
\label{app:consistency}

This appendix provides the full proofs for the consistency results in \cref{sec:consistency}, along with two extensions: a quantitative comparison of inter-algorithmic versus intra-algorithmic separation (which sharpens the practical interpretation of \cref{thm:mc-consistency}), and a discussion of how \cref{assumption:monotonicity} relates to the structural premise of \cref{sec:properties}.

\subsection{Proof of Monte Carlo consistency}
\label{app:mc-consistency-proof}

\begin{proof}[Proof of \cref{thm:mc-consistency}]
Fix two learners $\ell_i, \ell_j$ with $\overline{\mathrm{CAS}}(\ell_i) > \overline{\mathrm{CAS}}(\ell_j)$, and let $\delta_{ij}(d) = \mathrm{CAS}(\ell_i; d) - \mathrm{CAS}(\ell_j; d)$. Since $\mathrm{CAS} \in [-1,1]$ by \cref{prop:boundedness}, $\delta_{ij}(d) \in [-2, 2]$, hence $\delta_{ij} \in L^2(\Omega_D, P_D)$ with $\sigma_{ij}^2 = \mathbb{E}[(\delta_{ij}(d) - m_{ij})^2] < \infty$.

The Monte Carlo estimator $\hat\delta^{(M)}_{ij} = \frac{1}{M}\sum_{k=1}^M \delta_{ij}(d_k)$ is an i.i.d.\ average with $\mathbb{E}[\hat\delta^{(M)}_{ij}] = m_{ij}$ and $\mathrm{Var}[\hat\delta^{(M)}_{ij}] = \sigma_{ij}^2 / M$. An inversion occurs when $\hat\delta^{(M)}_{ij} \leq 0$, equivalently $\hat\delta^{(M)}_{ij} - m_{ij} \leq -m_{ij}$. Since $m_{ij} > 0$, this implies $|\hat\delta^{(M)}_{ij} - m_{ij}| \geq m_{ij}$. Applying Chebyshev's inequality~\citep{saw1984chebyshev}:
\[
p^{(M)}_{\mathrm{inv}}(\ell_i, \ell_j) \;=\; \Pr[\hat\delta^{(M)}_{ij} \leq 0] \;\leq\; \Pr[|\hat\delta^{(M)}_{ij} - m_{ij}| \geq m_{ij}] \;\leq\; \frac{\mathrm{Var}[\hat\delta^{(M)}_{ij}]}{m_{ij}^2} \;=\; \frac{\sigma_{ij}^2}{M \cdot m_{ij}^2}.
\]
This establishes the per-pair bound. For the aggregate bound, by the union bound:
\[
P^{(M)}_{\mathrm{inv}} \;=\; \binom{N}{2}^{-1} \sum_{i < j} p^{(M)}_{\mathrm{inv}}(\ell_i, \ell_j) \;\leq\; \binom{N}{2}^{-1} \sum_{i < j} \frac{\sigma_{ij}^2}{M \cdot m_{ij}^2} \;=\; O(1/M)
\]
as $M \to \infty$, since the sum has $\binom{N}{2}$ terms each of order $1/M$, and the constants $\sigma_{ij}^2 / m_{ij}^2$ are finite (and uniformly bounded above by $\sigma_{ij}^2 / \min_{i<j} m_{ij}^2$ over a finite learner pool).
\end{proof}

\subsection{Inter- vs intra-algorithmic separation}
\label{app:inter-vs-intra}

The bound in \cref{thm:mc-consistency} is governed by the ratio $\sigma_{ij}^2 / m_{ij}^2$, which is small when the population gap $m_{ij}$ is large relative to the per-domain variance. This ratio differs systematically depending on whether $\ell_i$ and $\ell_j$ were trained by the same or different domain-generalization algorithms.

Let $\mathrm{alg}(\ell)$ denote the algorithm used to train $\ell$ (e.g., ERM, IRM, CORAL, DANN). Define
\[
\Delta_{\mathrm{inter}} = \min_{\substack{(\ell_i, \ell_j) :\\ \mathrm{alg}(\ell_i) \neq \mathrm{alg}(\ell_j) \\ g(\ell_i) \neq g(\ell_j)}} |m_{ij}|, \qquad \Delta_{\mathrm{intra}} = \min_{\substack{(\ell_i, \ell_j) :\\ \mathrm{alg}(\ell_i) = \mathrm{alg}(\ell_j) \\ g(\ell_i) \neq g(\ell_j)}} |m_{ij}|.
\]
We empirically observe (\cref{sec:experiments}) that $\Delta_{\mathrm{inter}} \gg \Delta_{\mathrm{intra}}$: structurally distinct training objectives produce learners with structurally distinct circuit families, yielding larger population CAS gaps. Letting $\bar\sigma^2 = \sup_{i,j} \sigma_{ij}^2$ (a uniform variance envelope, finite by boundedness), the inter-algorithmic and intra-algorithmic inversion bounds satisfy
\[
\sup_{\substack{(\ell_i, \ell_j) :\\ \mathrm{alg}(\ell_i) \neq \mathrm{alg}(\ell_j)}} p^{(M)}_{\mathrm{inv}}(\ell_i, \ell_j) \;\leq\; \frac{\bar\sigma^2}{M \cdot \Delta_{\mathrm{inter}}^2} \;\ll\; \frac{\bar\sigma^2}{M \cdot \Delta_{\mathrm{intra}}^2} \;\geq\; \sup_{\substack{(\ell_i, \ell_j) :\\ \mathrm{alg}(\ell_i) = \mathrm{alg}(\ell_j)}} p^{(M)}_{\mathrm{inv}}(\ell_i, \ell_j).
\]
Both quantities vanish at rate $1/M$, but the inter-algorithmic bound is tighter by a factor of $(\Delta_{\mathrm{intra}} / \Delta_{\mathrm{inter}})^2$. Practically, this means that CAS distinguishes ERM from IRM (or any pair of differently-trained learners) with far fewer sampled domains than it requires to distinguish two ERM learners differing only by initialization or regularization. For the model-selection use case, where the question is typically ``which training objective to use'', not ``which seed within an objective'' -- this is exactly the regime where convergence is fastest.

\subsection{Proof of ranking recovery}
\label{app:ranking-recovery-proof}

\begin{proof}[Proof of \cref{prop:ranking-recovery}]
Let $\mathcal{L}^*$ be the ground-truth ranking by $g$. Under \cref{assumption:monotonicity}, $\mathcal{L}^*$ also corresponds to the ranking by $\overline{\mathrm{CAS}}$. The Kendall rank correlation between $\mathcal{L}^*$ and the empirical ranking $\mathcal{L}'_M$ satisfies
\[
\tau(\mathcal{L}^*, \mathcal{L}'_M) = 1 - \frac{2}{\binom{N}{2}} \cdot |\{(i,j) : i < j, \pi^*(i) > \pi^*(j) \text{ but } \pi'_M(i) < \pi'_M(j)\}|,
\]
i.e., $\tau$ measures the fraction of correctly-ordered pairs. By \cref{thm:mc-consistency}, the probability that any specific concordant pair is inverted is at most $\sigma_{ij}^2 / (M m_{ij}^2) = O(1/M)$. By the union bound, the probability that *any* of the $\binom{N}{2}$ pairs is inverted is at most $\binom{N}{2} \cdot O(1/M)$, which tends to $0$ as $M \to \infty$ for any fixed $N$. Hence $\Pr[\tau(\mathcal{L}^*, \mathcal{L}'_M) = 1] \to 1$ as $M \to \infty$, which is convergence in probability of $\tau$ to $1$.
\end{proof}

\subsection{Relationship between Assumption \ref{assumption:monotonicity} and the structural premise}
\label{app:monotonicity-justification}

\cref{assumption:monotonicity} is the bridge between population CAS and OOD accuracy. We do not prove it from first principles in finite-accuracy regimes; instead, we provide three sources of theoretical and empirical support.

\myparagraph{Limiting case:} \cref{cor:robustness-implication} establishes that in the limit of perfect OOD accuracy, a learner is structurally circuit-robust: its class-specific circuits are preserved across domains and remain distinct between classes. By \cref{eq:cas}, CAS evaluates to its maximum value on circuit-robust learners. Hence, in the limit $g(\ell) \to 1$, $\overline{\mathrm{CAS}}(\ell) \to 1$. The monotonicity assumption asserts that this limiting trend extends to the finite-accuracy regime, i.e., that intermediate values of $g$ correspond to intermediate values of $\overline{\mathrm{CAS}}$ in the same order.

\myparagraph{Connection to causal-invariance theory:} The causal-invariance literature~\citep{wang2022out, kaurmodeling, chen2023understanding} establishes that OOD generalization is governed by the preservation of task-relevant computational structure across domains. Since CAS is, by construction, a measure of class-conditional circuit preservation across domains, monotonicity of $\overline{\mathrm{CAS}}$ in $g$ is the natural finite-accuracy extension of this structural correspondence.

\myparagraph{Empirical verification:} \cref{sec:experiments} verifies \cref{assumption:monotonicity} across 48 learners on PACS, Office-Home, and DomainNet. The Spearman rank correlation between $\widehat{\mathrm{CAS}}_M$ and $g$ ranges from $\rho_S = 0.77$ (Office-Home Real World) to $\rho_S = 0.93$ (PACS Cartoon), with mean $\rho_S = 0.85$ across all benchmarks and target domains. This indicates that monotonicity holds approximately rather than exactly — empirical inversions occur primarily among learners with very similar OOD accuracies, where small perturbations to the population CAS estimate can cross the threshold.

A natural strengthening would be to bound the size of the inversion set as a function of $|g(\ell_i) - g(\ell_j)|$: large OOD-accuracy gaps should correspond to large CAS gaps. We leave a formal version of this stronger result for future work and note that the empirical Spearman correlations already establish that the assumption holds to a useful approximation in practice.

\section{Implementation Details}
We describe the complete experimental pipeline, from backbone preparation through circuit extraction to CAS computation, with all hyperparameters fixed across datasets unless explicitly noted.

\myparagraph{Backbone and adapter configuration:}
We use MLP \cite{hornik1989multilayer}, VGG19 \cite{simonyan2014very}, ResNet50 \cite{he2016deep}, MobileNetv2~\cite{sandler2018mobilenetv2}, and ViT-B/16 \cite{dosovitskiyimage} pretrained on ImageNet~\cite{deng2009imagenet} as the frozen backbone across all experiments. Only the adapter modules and classification head are optimized. 
The details of the adapter insetion has been obtained in \cref{tab:placement}. Each adapter is a two-layer MLP with a bottleneck ratio of $r{=}4$: a down-projection $W^{\mathrm{down}} \in \mathbb{R}^{d/r \times d}$, a ReLU nonlinearity, and an up-projection $W^{\mathrm{up}} \in \mathbb{R}^{d \times d/r}$, wrapped in a residual connection. The up-projection weights are initialised to zero so that each adapter starts as the identity function, preserving the pretrained backbone's behaviour at the beginning of training. The classification head is a single linear layer mapping the 1280-dimensional pooled feature to $c$ class logits, where $c$ varies by dataset (7 for PACS, 65 for Office-Home, 345 for DomainNet).

\begin{table}[!htbp]
\centering
\caption{Adapter insertion points for each backbone.  \emph{Layer ID} refers to the module path in the PyTorch model.  $d$ is the adapter's operating dimensionality.  \emph{Attn heads} indicates whether attention heads are additionally traced as circuit nodes.}
\label{tab:placement}
\resizebox{\textwidth}{!}{
\begin{tabular}{@{}lllll@{}}
\toprule
\textbf{Architecture} & \textbf{Adapter Sites} & \textbf{$d$} & \textbf{Insertion Logic} & \textbf{Attn Heads} \\
\midrule
VGG-19       & \texttt{features.\{28,30,32\}}       & 512  & After final three conv layers       & ---  \\
ResNet-50    & \texttt{layer4.\{0,1,2\}}            & 2048 & After each block in \texttt{layer4} & ---  \\
MobileNetV2  & \texttt{features.\{16,17,18\}}       & 1280 & After last inverted-residual blocks  & ---  \\
ViT-B/16     & \texttt{encoder.layer.\{4,8,11\}}   & 768  & After selected transformer blocks   & \checkmark \\
Deep MLP     & \texttt{layers.\{-3,-2,-1\}}         & $d_h$ & After last three hidden layers      & ---  \\
\bottomrule
\end{tabular}}
\end{table}
\cref{tab:placement} and \cref{fig:layer} summarize our choices, guided by two principles: (a)~adapters are placed after the final few representation-learning stages so that the circuit captures high-level task-relevant computation, and (b)~at least three adapter sites are used to enable nontrivial inter-layer circuit structure. For convolutional and MLP architectures, the circuit consists exclusively of adapter MLP neurons (channels).
For ViT, we trace a \emph{dual-component circuit} comprising both attention heads from the frozen self-attention sublayers and neurons from the trainable adapter MLPs at each adapter site. This richer representation allows us to disentangle the contributions of the pretrained attention mechanism from the task-specific adapter computation.

\begin{figure}[!htbp]
    \centering
    \includegraphics[width=\linewidth]{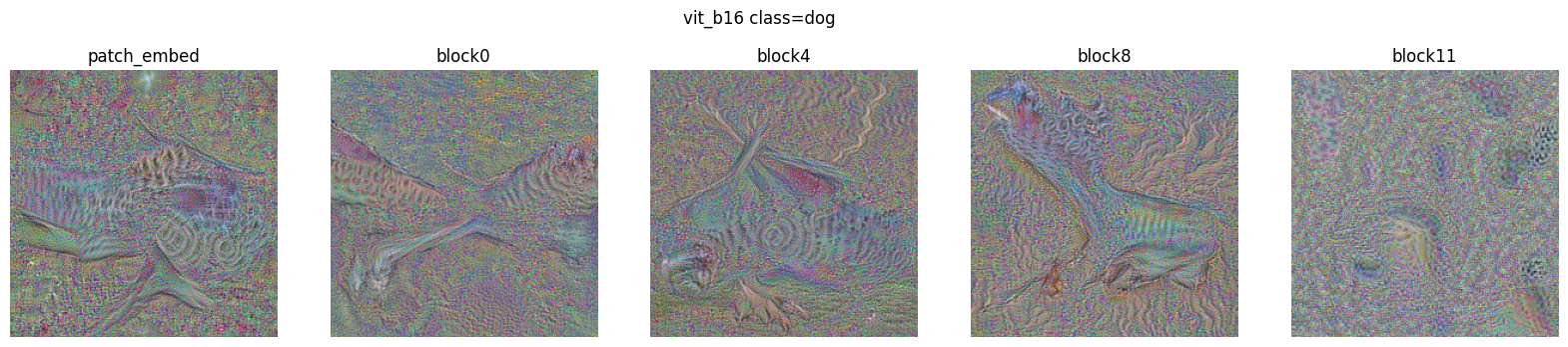}
    \caption{\textbf{Layerwise feature visualization with ViT-B/16 for the class "dog" for PACS dataset:} the selected transformer encoder blocks capture the essence of class dog effectively.}
    \label{fig:layer}
\end{figure}

\myparagraph{Data preprocessing:}
All images are resized to $224 \times 224$ pixels. During training, we apply random resized cropping (scale $0.7$--$1.0$), random horizontal flipping, colour jitter (brightness, contrast, saturation, and hue each with magnitude $0.3$), and random greyscale conversion with probability $0.1$, following the standard DomainBed~\cite{gulrajanisearch} augmentation protocol. During evaluation and circuit extraction, we apply only centre cropping and resizing. All images are normalised with ImageNet channel means ($[0.485, 0.456, 0.406]$) and standard deviations ($[0.229, 0.224, 0.225]$).

\myparagraph{Training procedure:}
We train each learner for 30 epochs using the Adam optimizer~\cite{kingma2014adam} with a learning rate of $5 \times 10^{-5}$ and a batch size of 32 per domain. 
For each training iteration, we sample a batch from every source domain, concatenate them, and compute the objective-specific loss. For ERM, we use standard cross-entropy. For IRM~\cite{arjovsky2019invariant}, we add the IRMv1 penalty with weight $\lambda_{\mathrm{IRM}} = 1.0$, computed as the squared gradient of the loss with respect to a scalar dummy classifier initialised at $1.0$. For Deep CORAL~\cite{sun2016deep}, we add the Frobenius norm of the covariance difference between domain-specific penultimate features with weight $\lambda_{\mathrm{CORAL}} = 1.0$. For DANN~\cite{ganin2016domain}, we attach a domain discriminator (three-layer MLP: $1280 \to 256 \to 256 \to K_{\mathrm{tr}}$ with ReLU activations) trained with a separate Adam optimiser at learning rate $10^{-4}$, connected through a gradient reversal layer with $\lambda_{\mathrm{GRL}} = 1.0$. For Mixup~\cite{xu2020adversarial}, we interpolate input--label pairs across domains with $\alpha_{\mathrm{mix}} = 0.2$ sampled from a Beta distribution. All experiments are run on a single NVIDIA A100 GPU (80GB).

\myparagraph{Graph kernel computation:}
We compute the class-aligned similarity matrix $S \in \mathbb{R}^{c \times c}$ between every pair of source-domain circuit families using three graph kernels. The Random Walk Kernel (RWK)~\cite{nikolentzos2020random} counts the number of common random walks of length up to $p{=}5$ between two graphs, with a decay factor of $\lambda_{\mathrm{RW}} = 10^{-4}$ to down-weight longer walks. The Treelet Kernel (TK)~\cite{gauzere2012two} enumerates all subtree patterns of depth up to 5 and computes a weighted count of shared patterns. The Optimal Transport kernel (OT)~\cite{petric2019got} solves a Wasserstein distance \cite{villani2009wasserstein} problem over node-attributed graphs with Sinkhorn regularization \cite{alaya2019screening} ($\epsilon_{\mathrm{sink}} = 0.1$, 50 iterations). All graph kernels take as input the circuit graphs with node features set to the importance scores $\Delta$ and edge features set to the causal weights $w_{u_1 \to u_2}$. Kernel values are normalized to $[0, 1]$ by dividing by the geometric mean of the self-similarities: $\mathcal{K}_{\mathrm{norm}}(G_1, G_2) = \mathcal{K}(G_1, G_2) / \sqrt{\mathcal{K}(G_1, G_1) \cdot \mathcal{K}(G_2, G_2)}$.

\myparagraph{Baseline metrics:}
All representational similarity baselines are computed on penultimate-layer features. For CKA~\cite{kornblith2019similarity}, we use the linear kernel and centre both Gram matrices with the centering matrix $H = I - \frac{1}{n}\mathbf{1}\mathbf{1}^\top$. For SVCCA~\cite{raghu2017svcca}, we truncate each feature matrix via SVD at the $99\%$ explained variance threshold before computing canonical correlations. For RSA~\cite{kriegeskorte2008representational}, we compute pairwise correlation distance matrices and report their Spearman correlation \cite{zar2005spearman}. All baselines are averaged over source-domain pairs identically to CAS.
 
\myparagraph{Evaluation protocol:}
We follow the standard leave-one-domain-out protocol \cite{vu2022domain}: for each target domain, we train on all remaining source domains and evaluate OOD accuracy on the held-out target. We report Spearman's rank correlation \cite{zar2005spearman} between each metric and OOD accuracy across the learner pool. Statistical significance is assessed via permutation tests with 10 permutations, and 95\% confidence intervals are obtained via bootstrap resampling with 1,000 iterations. All random seeds are fixed (seed $= 42$) for data splitting, weight initialisation, and bootstrap sampling to ensure reproducibility.
 
All experiments are implemented in PyTorch 2.1~\cite{paszke2019pytorch}. We use \texttt{torchvision} for MobileNetv2 weights, \texttt{GraKeL}~\cite{siglidis2020grakel} for graph kernel computation, and \texttt{scipy}~\cite{virtanen2020scipy} for statistical tests. Code, trained model checkpoints, extracted circuit graphs, and precomputed similarity matrices will be released upon publication.

\subsection{Baseline Protocols for OOD Accuracy Prediction}
\label{app:baseline-protocols}
For ATC~\cite{garg2022leveraging}, we reserve 20\% of source data for validation, compute a negative-entropy confidence threshold, and estimate target accuracy as the fraction of predictions exceeding it. For ProjNorm~\cite{yu2022projection}, we train a source reference model and a pseudo-labeled target model, then compute $|\theta_{\mathrm{pseudo}} - \theta_{\mathrm{ref}}|_2$. ALine-D~\cite{baek2022agreement} is applied directly to the 48-model pool without modification.

\input{neurips_tex/03_ACE}

\section{Choice of Graph Kernel}
\label{app:kernel-choice}

\begin{figure}[!t]
    \centering
    \includegraphics[width=\linewidth]{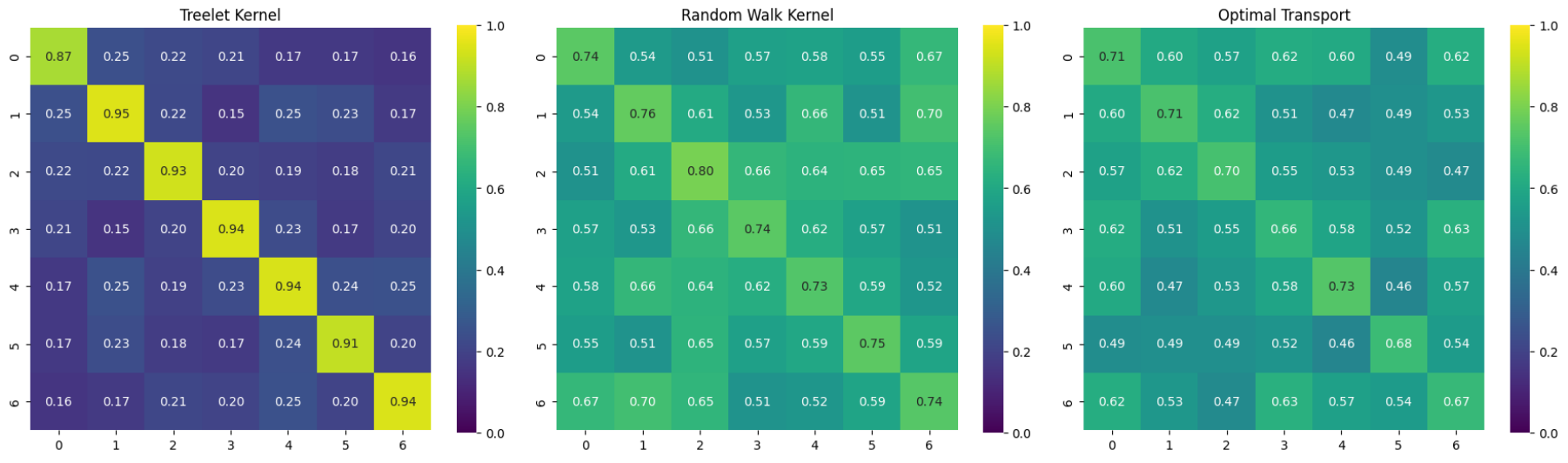}
    \caption{\textbf{Choice of kernels:} Pairwise similarity matrices computed using the Treelet Kernel, Random Walk Kernel, and Optimal Transport methods for ArtPainting and Photo domain with MobileNetv2. Diagonal entries indicate self-similarity, while off-diagonal values capture cross-instance structural similarity. The Treelet Kernel shows strong self-similarity with lower inter-instance similarities, whereas Random Walk and Optimal Transport exhibit comparatively higher cross-instance similarity, reflecting different notions of structural correspondence.}
    \label{fig:kernel}
\end{figure}

As depicted in \cref{fig:kernel}, all the kernels that we have used, namely, optimal transport (OT) \cite{petric2019got}, random walk kernel (RWK) \cite{nikolentzos2020random}, and Treelet kernel (TK) \cite{gauzere2012two}, are symmetric, therefore $\text{CAS} (\mathcal{C}_1, \mathcal{C}_2) = \text{CAS} (\mathcal{C}_2, \mathcal{C}_1)$. In \cref{fig:kernel}, CAS (artpainting, photo) and CAS (photo, artpainting) will be the same 0.71 with the treelet kernel. Another important observation is the choice of kernel in order to compute the CAS. It has been observed that the treelet kernel has more diagonal and off-diagonal separability than the random walk and optimal transport kernel because they capture complementary notions of circuit similarity while remaining computationally tractable.\textbf{ Optimal transport (OT)} aligns circuits by matching structural and functional components globally, \textbf{random walk kernels (RWK)} measure similarity through shared connectivity patterns and signal flow through layered architectures, and \textbf{treelet kernels (TK)} capture local hierarchical and sparse substructures. It balances global alignment, path-based dynamics, and local motif similarity, making them well-suited for comparing neural circuits. We use the Treelet Kernel to compute the CAS for all the experiments in the main paper as well as in the supplementary materials.

\section{Ranking pairwise domain similarity with CAS}
\myparagraph{Objective and Setup:}
Beyond predicting which learner will generalize best, a well-calibrated circuit-level metric should also recover the relative proximity of domains from one another. We evaluate this property on PACS by treating each source-domain pair as a data point and asking whether the metrics rank those pairs in agreement with their known visual dissimilarity. The expected ground-truth ordering, corroborated by the distributional divergence measurements in \cref{fig:vis_sim} is: Photo–ArtPainting (P$\leftrightarrow$A) < ArtPainting–Cartoon (A$\leftrightarrow$C) < Cartoon–Photo (C$\leftrightarrow$P) < Cartoon–Sketch (C$\leftrightarrow$S) < Photo–Sketch (P$\leftrightarrow$S), meaning Photo and ArtPainting share the most structure while Photo and Sketch are most dissimilar. For each of the six domain pairs, we compute CAS, CKA, SVCCA, and RSA using the same 48-learner pool and average across learners within each pair; results are reported in \cref{tab:sup}.
\begin{figure}[!htbp]
    \centering
    \includegraphics[width=\linewidth]{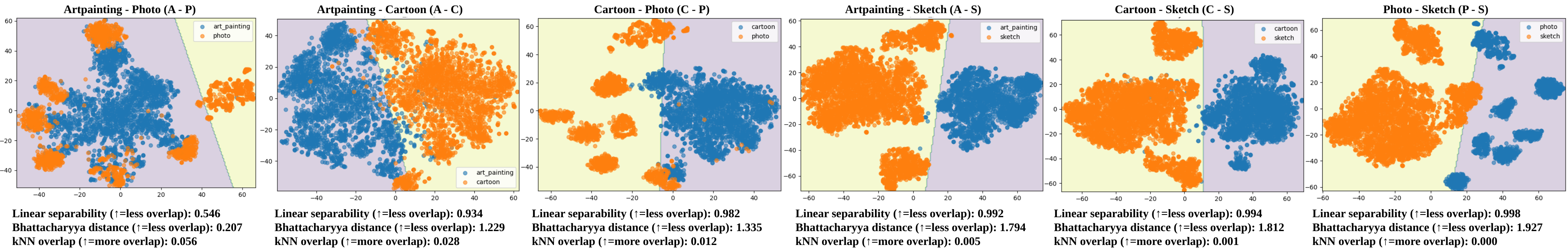}
    \vspace{-4mm}
    \caption{\textbf{Domain-wise feature distribution divergence on PACS:} Pairwise domain overlap between Artpainting (A), Cartoon (C), Photo (P), and Sketch (S) is quantified using linear separability (↑ indicates less overlap), Bhattacharyya distance (↑ indicates greater distributional divergence), and kNN overlap (↑ indicates more overlap). Results show progressively increasing separability and distributional distance from A–P to P–S, with near-zero kNN overlap for P–S, indicating minimal shared neighbourhood structure and substantial cross-domain shift.}
    \label{fig:vis_sim}
    \vspace{-4mm}
\end{figure}

\begin{wraptable}{r}{0.5\textwidth}
\centering
\caption{Pairwise domain shift distances on PACS. Higher values indicate more similar domains. Expected ordering based on visual similarity (see \cref{fig:vis_sim}): P$\leftrightarrow$A $<$ A$\leftrightarrow$C $<$ P$\leftrightarrow$C $<$ C$\leftrightarrow$S $<$ P$\leftrightarrow$S.}
\label{tab:sup}
\vspace{-1mm}
\resizebox{0.5\textwidth}{!}{
\begin{tabular}{@{}ccccc@{}}
\toprule
Domains & CKA ($\uparrow$)  & SVCCA ($\uparrow$) & RSA ($\uparrow$)  & CAS (TK) ($\uparrow$) \\ \midrule
A - P   & 0.65 & 0.04  & 0.39 & 0.62     \\
A - C   & 0.83 & 0.25  & 0.08 & 0.59     \\
C - P   & 0.80 & 0.37  & 0.46 & 0.58     \\
A - S   & 0.77 & 0.23  & 0.17 & 0.45     \\
C - S   & 0.68 & 0.02  & 0.27 & 0.37     \\
P - S   & 0.77 & 0.10  & 0.04 & 0.28     \\ \bottomrule
\end{tabular}}
\vspace{-4mm}
\end{wraptable}

\myparagraph{Observations and Analysis:}
CAS (TK) recovers the ground-truth domain ordering precisely: A–P (0.62) > A–C (0.59) > C–P (0.58) > A–S (0.45) > C–S (0.37) > P–S (0.28). This ranking is in complete agreement with the three independent distributional divergence measures reported in \cref{fig:vis_sim}, linear separability, Bhattacharyya distance, and kNN overlap, all of which assign maximal distance to the Photo–Sketch pair and minimal distance to the ArtPainting–Photo pair. The near-linear decay of CAS values from 0.62 to 0.28 further supports the interpretation of CAS as a metric-like distance on domain space, consistent with the monotonicity.

\begin{figure}[t]

\subfloat[]{%
  \includegraphics[clip,width=\linewidth]{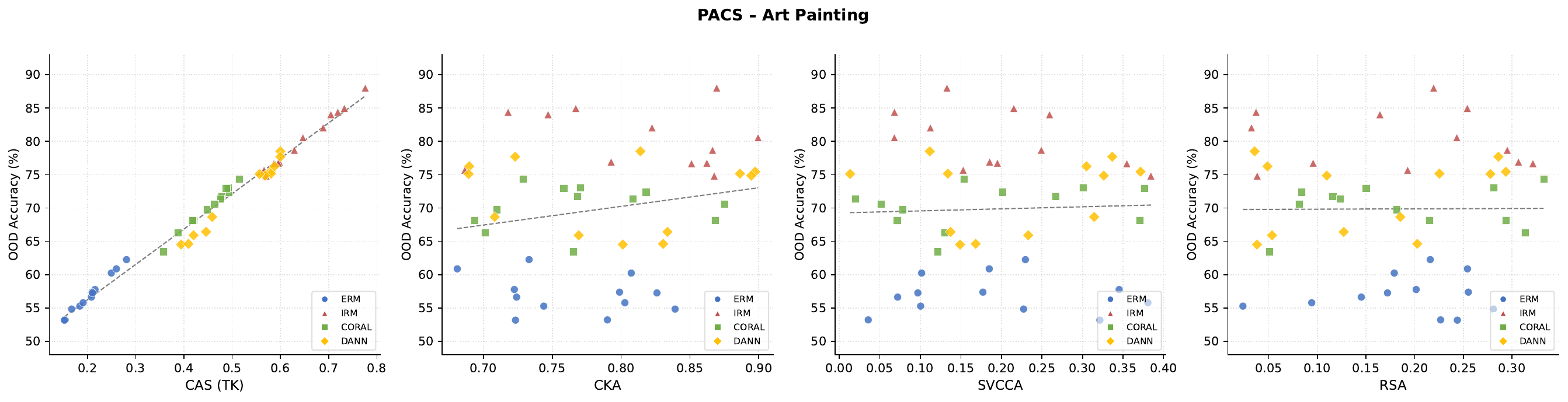}%
}\vspace{-8mm}

\subfloat[]{%
  \includegraphics[clip,width=\linewidth]{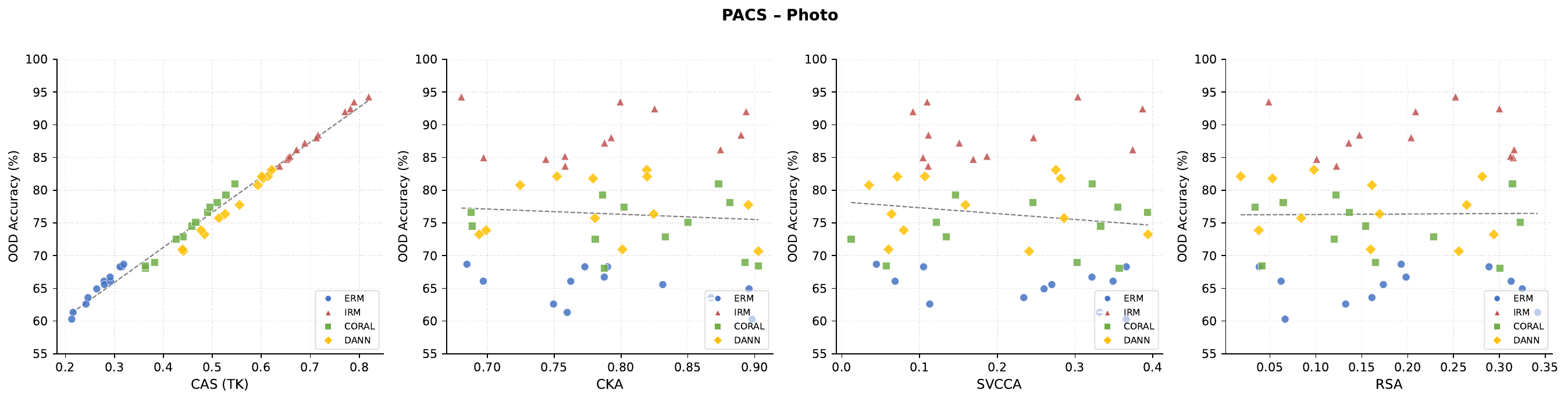}%
}\vspace{-8mm}

\subfloat[]{%
  \includegraphics[clip,width=\linewidth]{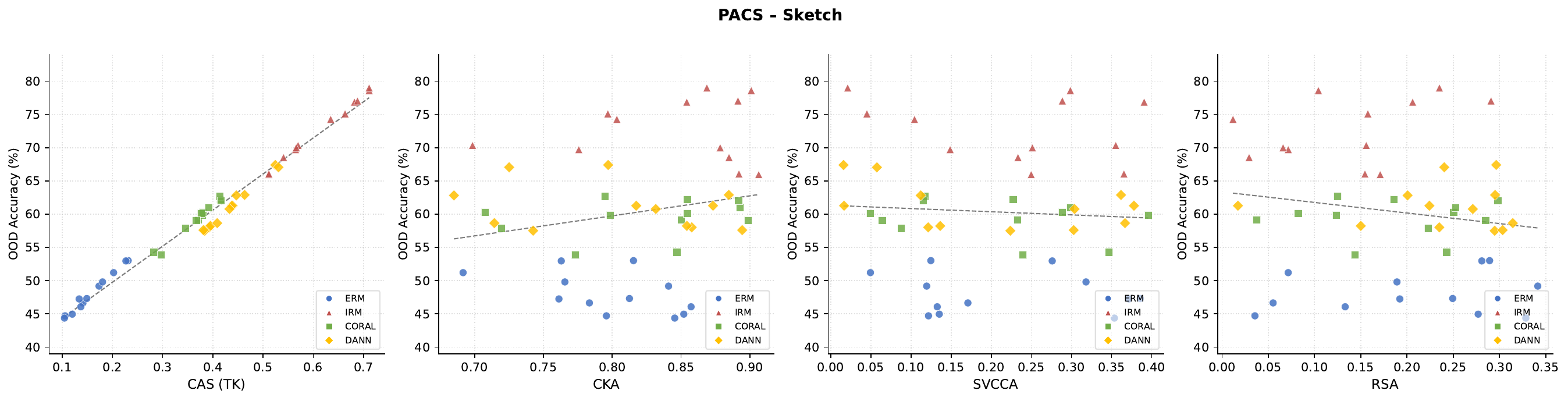}%
}
\vspace{-4mm}
\caption{\textbf{CAS vs OOD accuracy for PACS dataset:} For all the domains, Art Painting ($\rho_S = 0.91$), Photo ($\rho_S = 0.93$), and Sketch ($\rho_S = 0.88$), CAS maintains a monotonous trend with OOD accuracy. In contrast, representation similarity metrics always produce a diffuse scatter plot due to the unavailability of structural information.}
\label{fig:pacs}
\vspace{-4mm}
\end{figure}

\begin{figure}[t]
\subfloat[]{%
  \includegraphics[clip,width=\linewidth]{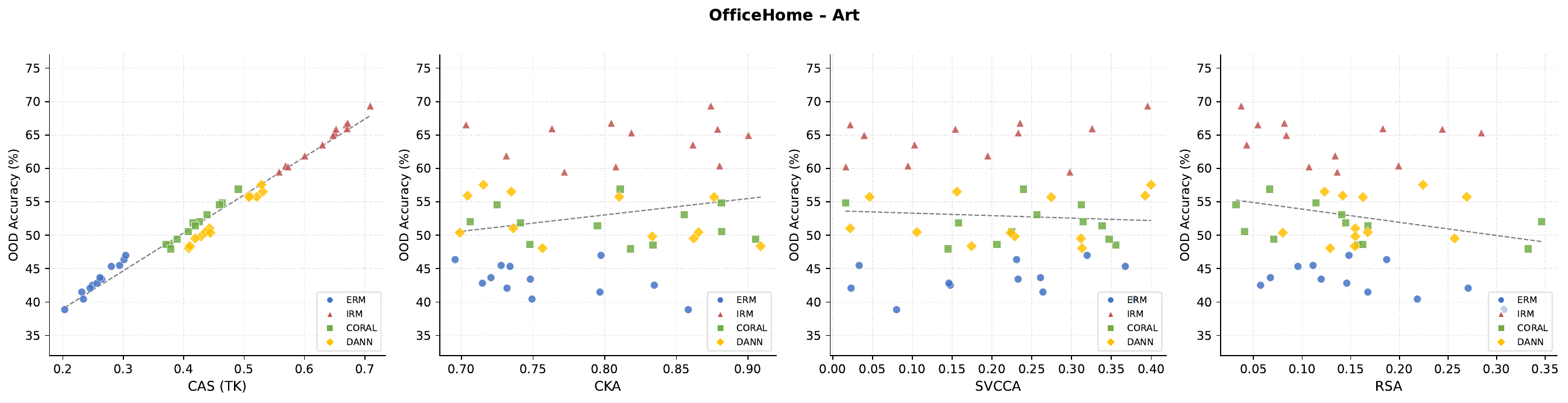}%
}\vspace{-8mm}

\subfloat[]{%
  \includegraphics[clip,width=\linewidth]{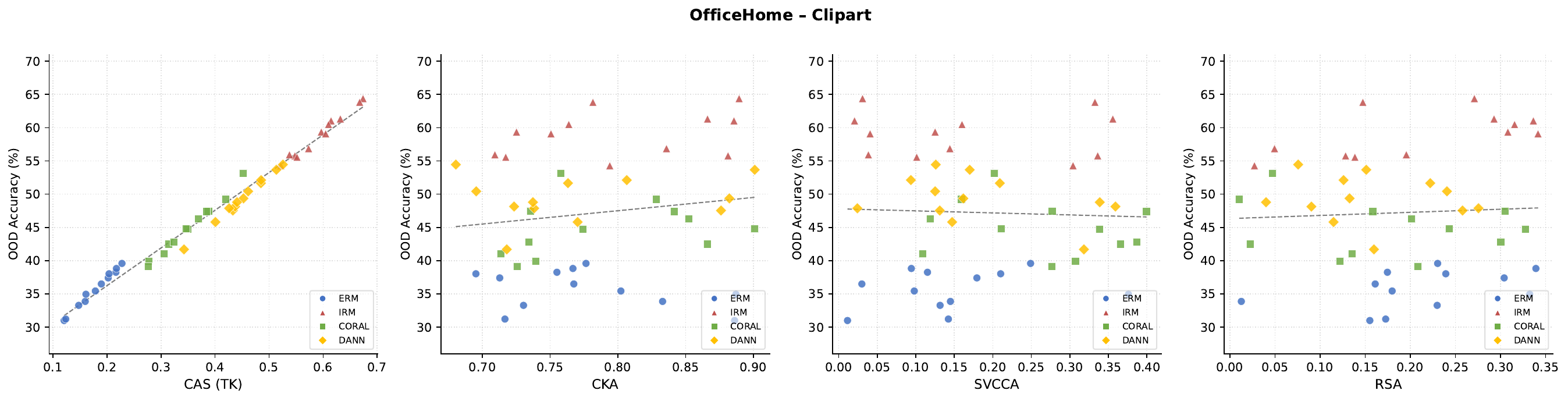}%
}\vspace{-8mm}

\subfloat[]{%
  \includegraphics[clip,width=\linewidth]{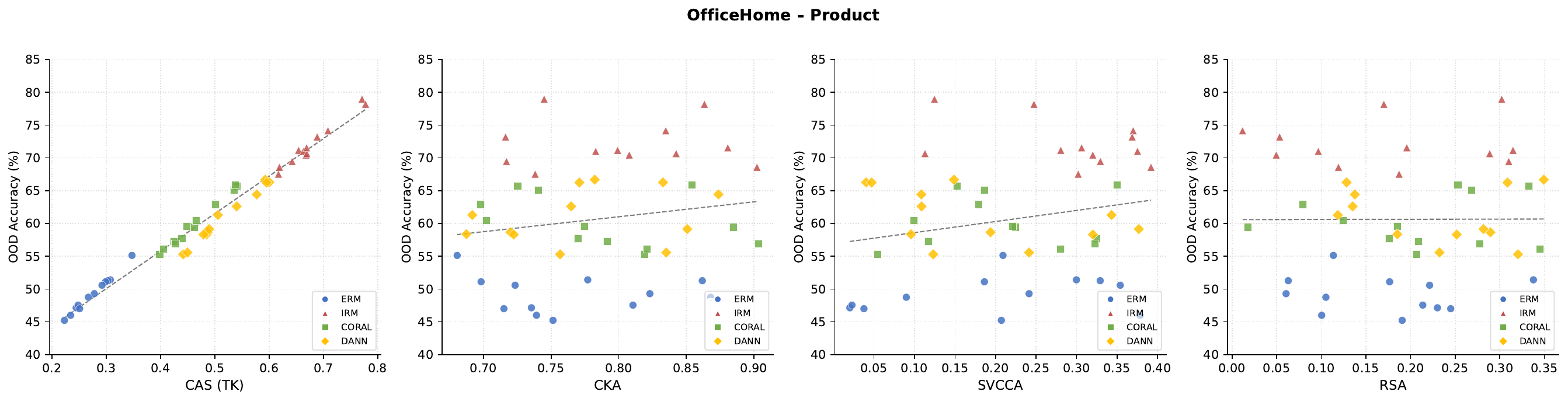}%
}\vspace{-8mm}

\subfloat[]{%
  \includegraphics[clip,width=\linewidth]{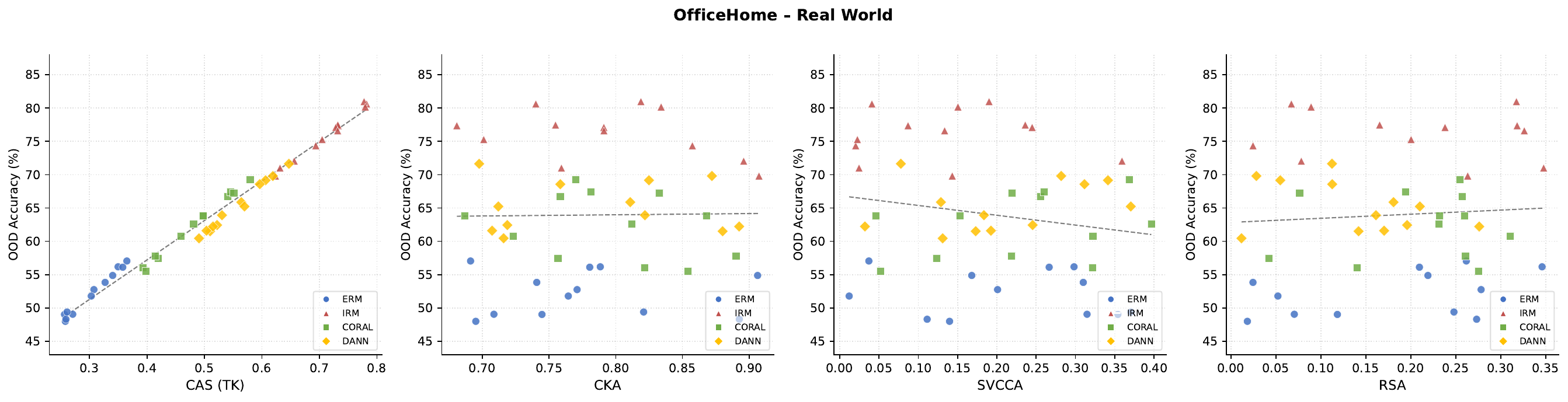}%
}\vspace{-4mm}
\caption{\textbf{CAS vs OOD accuracy for OfficeHome dataset:} CAS (TK) exhibits a strong near-monotone relationship with OOD accuracy across all four OfficeHome domains (Art, Clipart, Product, Real World), with ERM consistently in the low-CAS low-accuracy region and IRM in the high-CAS high-accuracy region, while CKA, SVCCA, and RSA show weak, noisy associations with no consistent method ordering.}
\label{fig:of}
\vspace{-4mm}
\end{figure}

In contrast, the representational baselines fail to produce coherent domain orderings (\cref{tab:sup}). CKA assigns its highest pairwise value to the A–C pair (0.83) and its lowest to the A–P pair (0.65), an inversion of the expected ordering that is a direct consequence of the structural blindspot identified in \cref{thm:factoring}: two models whose penultimate-layer activations are geometrically similar on ArtPainting may route those activations through entirely different computational pathways than they do on Cartoon, yet CKA cannot detect this difference. SVCCA produces ordinal inversions for four of the six pairs, in particular ranking C–P (0.37) higher than A–C (0.25) and nearly zeroing out A–P (0.04), contrary to the visual evidence that ArtPainting and Photo are the closest domains in PACS. RSA is directionally better than SVCCA but still misorders A–C relative to A–P and C–P, assigning a near-zero similarity to the Photo–Sketch pair (0.04) while simultaneously underscoring Cartoon–Photo similarity (0.46) well above its true rank.

\begin{figure}[!htbp]
\subfloat[]{%
  \includegraphics[width=\linewidth]{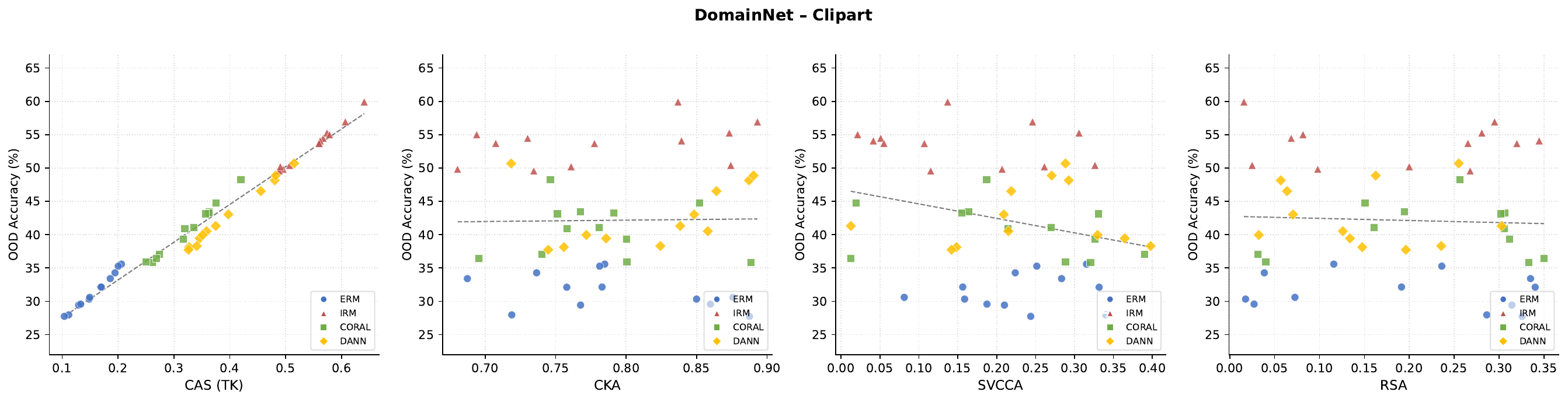}%
}
\vspace{-8mm}

\subfloat[]{%
  \includegraphics[width=\linewidth]{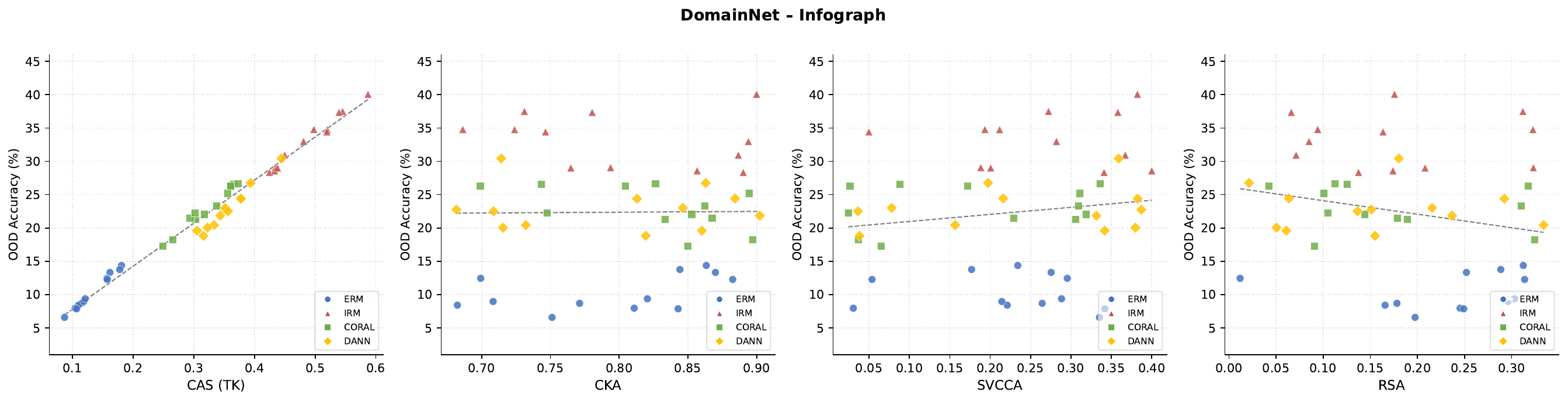}%
}
\vspace{-8mm}

\subfloat[]{%
  \includegraphics[width=\linewidth]{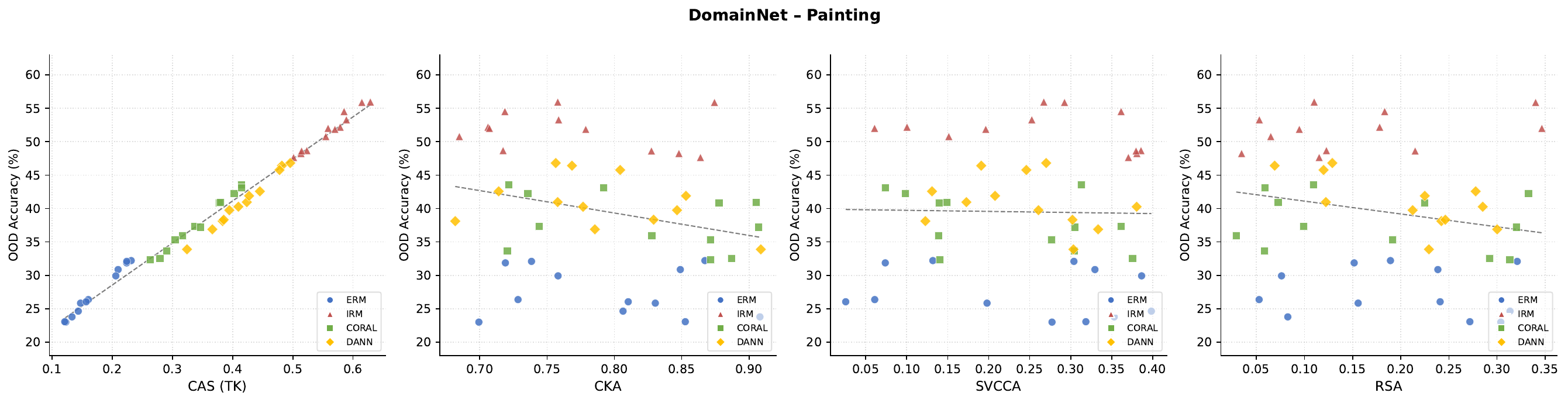}%
}\vspace{-8mm}

\subfloat[]{%
  \includegraphics[width=\linewidth]{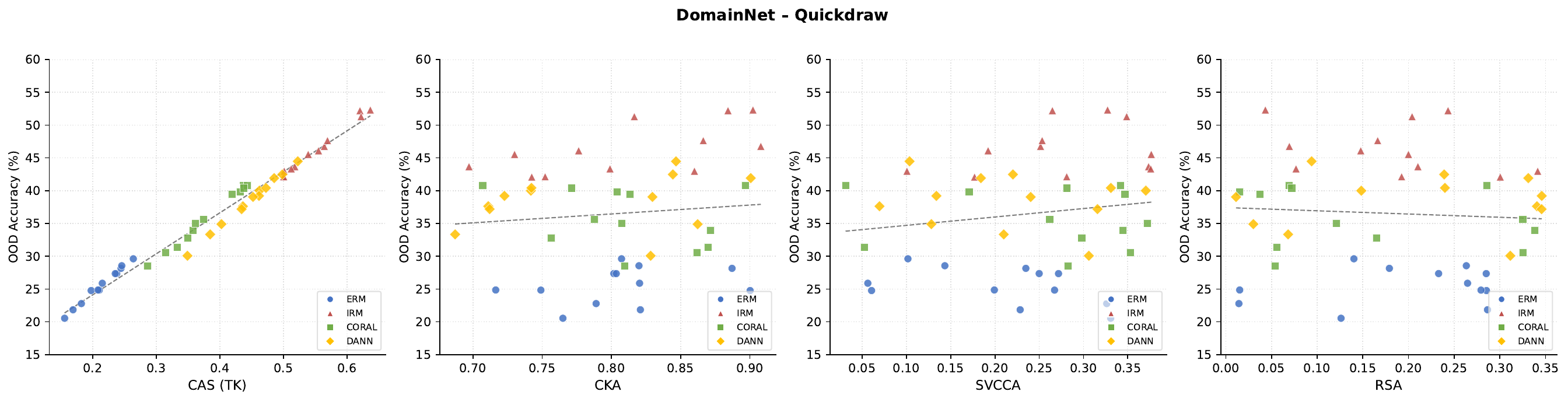}%
}\vspace{-8mm}

\subfloat[]{%
  \includegraphics[width=\linewidth]{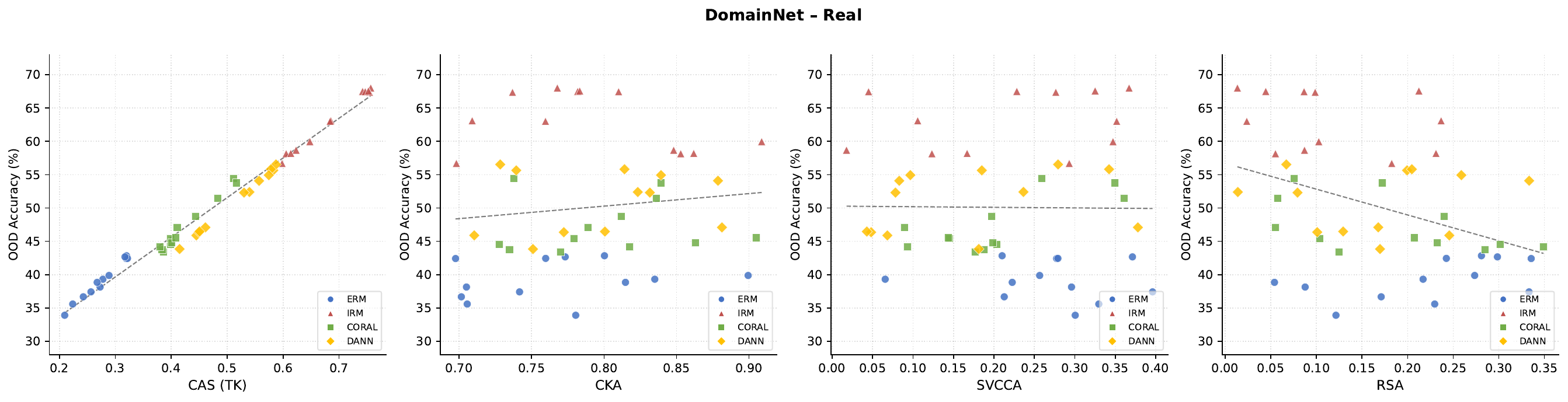}%
}\vspace{-8mm}

\subfloat[]{%
  \includegraphics[width=\linewidth]{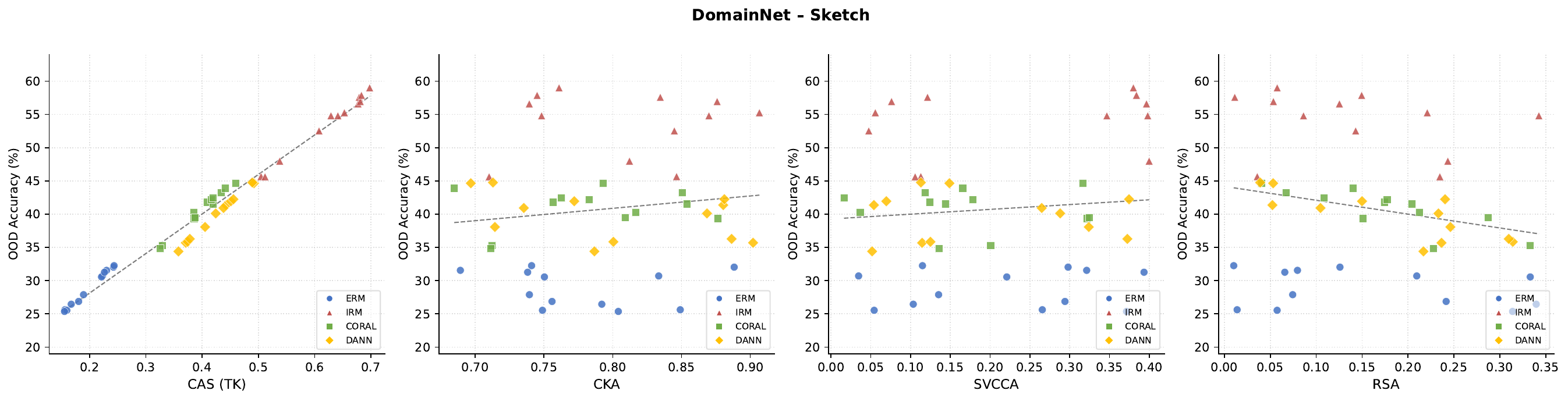}%
}
\vspace{-2mm}
\caption{\textbf{CAS vs OOD accuracy for DomainNet dataset:} Across all six DomainNet domains, we observe a similar trend, confirming the superiority of CAS over the rest.}
\label{fig:dn}
\end{figure}

The consistent superiority of CAS across all six pairs stems from its sensitivity to the circuit topology shared between domains rather than the geometry of their activation outputs. When Photo and ArtPainting are presented to the same learner, the same class-specific neurons and inter-layer edges are recruited in both domains, the structural fingerprint is preserved, yielding high diagonal coherence and low off-diagonal confusion in the similarity matrix S. As the domain shift increases toward Sketch, circuits for distinct classes progressively entangle, and same-class circuits diverge, monotonically driving CAS downward. This mechanism cannot be captured by metrics that aggregate over all activations without regard to which neurons generate them.

\begin{table*}[!htbp]
\centering
\caption{\textbf{Signal-noise analysis and correction on Office-Home.}
Despite a higher extraction noise floor than PACS, cross-domain signal remains dominant, noise-corrected CAS improves OOD ranking, and the same stability trends across objectives and architectures persist.}
\label{tab:officehome_noise_all}

\begin{minipage}[t]{0.48\textwidth}
\centering
\resizebox{\textwidth}{!}{
\begin{tabular}{lcccc}
\toprule
Quantity & Mean & Std & Min & Max \\
\midrule
$\hat{\Delta}_{\text{noise}}$ & 0.118 & 0.027 & 0.071 & 0.183 \\
$\Delta_{\text{obs}}$         & 0.540 & 0.102 & 0.346 & 0.731 \\
$\Delta_{\text{signal}}$      & 0.422 & 0.095 & 0.253 & 0.602 \\
SNR                           & 3.58  & 1.04  & 1.38  & 6.52 \\
SNR $>1$ & \multicolumn{4}{c}{$46/48$ (95.83\%)} \\
\bottomrule
\end{tabular}}
\small
\caption*{(a) Signal--noise decomposition}
\end{minipage}
\hfill
\begin{minipage}[t]{0.48\textwidth}
\centering
\resizebox{\textwidth}{!}{
\begin{tabular}{lc@{\hspace{0.7cm}}lc}
\toprule
Objective & Noise & Arch. & Noise \\
\midrule
IRM   & $0.089\!\pm\!0.018$ & ViT-B/16 & $0.094\!\pm\!0.020$ \\
CORAL & $0.108\!\pm\!0.022$ & MobileNet & $0.107\!\pm\!0.024$ \\
DANN  & $0.119\!\pm\!0.025$ & ResNet50 & $0.129\!\pm\!0.027$ \\
ERM   & $0.146\!\pm\!0.030$ & Mixer & $0.152\!\pm\!0.031$ \\
\bottomrule
\end{tabular}}
\small
\caption*{(b) Noise floor breakdown}
\end{minipage}

\end{table*}
\section{Decomposing Circuit Differences: Noise vs. Signal}
\label{sec:imp}
Given two circuits $\mathcal{C} (f, D_1)$ and $\mathcal{C} (f, D_2)$ extracted on different domains, their observed difference $\Delta_{obs} = 1 - \kappa(\mathcal{C} (f, D_1), \mathcal{C} (f, D_2))$ conflates two sources: $\Delta_{obs} = \Delta_{signal} + \Delta_{noise}$. The noise floor $\Delta_{noise}$ arises because TopK selection is discontinuous (small perturbations in $\Delta$-scores can flip which neurons enter the TopK), edge weights are empirical estimates, and activation patching has finite-sample variance.

\myparagraph{Paired Bootstrap Noise Estimation:}
We estimate $\Delta_{noise}$ directly from the data using a within-domain resampling protocol \cite{xie2023data}. For each domain $D$, we partition $\mathcal{D}_D$ into $B$ disjoint bootstrap splits $\{\mathcal{D}_D^{(1)}, \mathcal{D}_D^{(2)} \ldots \mathcal{D}_D^{(B)}\}$ of equal size, each matching the sample size used for cross-domain extraction. Then we extract a circuit $\mathcal{C}^{(b)} (f, D)$  from each split using the same ACE pipeline (same $K$, same $r$, and same $\epsilon$). We compute the pairwise within-domain noise as:
\begin{equation}
    \Delta_{noise}^D = \dfrac{1}{B(B-1)} \sum_{a \neq b} [1 - \kappa(\mathcal{C}^{(a)}(f,D), \mathcal{C}^{(b)}(f,D))]
\end{equation}
Average over domains:
\begin{equation}
    \Delta_{noise} = \dfrac{1}{N} \sum_D \Delta_{noise}^D
\end{equation}
Where $N$ is the number of domain pairs. Because both circuits in each pair are extracted from the same underlying distribution, any observed dissimilarity is by construction attributable to extraction noise rather than domain shift. This mirrors the test–retest reliability protocol standard in neuroscience \cite{kriegeskorte2008representational} and the split-half validation used in mechanistic interpretability \cite{conmy2023towards}.

\myparagraph{Signal-to-noise Decomposition:}
The cross-domain dissimilarity $\Delta_{obs} (D_1, D_2) = 1 - \kappa(\mathcal{C} (f, D_1), \mathcal{C} (f, D_2))$ is then decomposed additively (under independence of noise across extractions):
\begin{equation}
    \Delta_{signal}(D_1, D_2) = \text{max}(0, \Delta_{obs}(D_1,D_2) - \Delta_{noise})
\end{equation}
Equivalently, the signal-to-noise ratio of circuit divergence is: $\text{SNR}(D_1,D_2) = \dfrac{\Delta_{signal}(D_1,D_2)}{\Delta_{noise}}$ with $\text{SNR}>>1$ indicating that the observed cross-domain difference substantially exceeds the extraction noise floor and thus reflects genuine domain-induced reorganization.
\begin{proposition}[Noise additivity under independent extraction]
   Let $\kappa$ be a graph kernel with Lipschitz constant $L_{\kappa}$ with respect to edit distance, and let $\mathcal{C}(f,D)$ denote the population circuit for domain $D$. If the extraction noise has bounded variance $\sigma^2$, and is independent across domains, then:
\begin{equation*}
    \mathbb{E}[\Delta_{obs}(D_1, D_2)] = \Delta^*(D_1,D_2) +2L_\kappa\sigma^2 + O(\sigma^4)
\end{equation*}
where $\Delta^*$ is the population-level cross-domain dissimilarity.
\end{proposition}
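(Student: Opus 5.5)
The plan is to model the extracted circuit as a random perturbation of its population counterpart, $\hat{\mathcal{C}}(f,D) = \mathcal{C}(f,D) \oplus \xi_D$. Here $\xi_D$ is an edit perturbation with zero-mean signed fluctuation and second moment $\sigma^2$, drawn independently for $D_1$ and $D_2$. Write $\Delta_{obs} = 1 - \kappa(\hat{\mathcal{C}}_1,\hat{\mathcal{C}}_2)$ and $\Delta^* = 1-\kappa(\mathcal{C}_1,\mathcal{C}_2)$. I would Taylor-expand $\Delta_{obs}$ jointly in the two perturbations around $(\mathcal{C}_1,\mathcal{C}_2)$. Since a Lipschitz bound alone yields only first-order control, I will additionally assume that $1-\kappa$ is twice differentiable along edit directions, with second derivative controlled by $L_\kappa$. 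This is the reading under which the statement's $2L_\kappa\sigma^2$ term makes sense. It is cleanest to embed the circuits in a feature space, e.g.\ the treelet or random-walk feature map normalized as in the implementation details, so that the expansion is a genuine Taylor expansion.

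The key steps run in the following order.

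\emph{First-order terms.} The linear terms are $\langle \nabla_1 \Delta, \xi_{D_1}\rangle + \langle \nabla_2 \Delta, \xi_{D_2}\rangle$. Each vanishes in expectation because the noise has mean zero.

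\emph{Second-order terms.} These split into two diagonal pieces, $\tfrac12\,\xi_{D_k}^\top H_{kk}\,\xi_{D_k}$ for $k=1,2$, and a cross piece $\xi_{D_1}^\top H_{12}\,\xi_{D_2}$. The cross piece has zero expectation by independence across domains. Each diagonal piece contributes $\tfrac12\mathrm{tr}(H_{kk}\Sigma_k)$. Under the normalization $\tfrac12\mathrm{tr}(H_{kk}\Sigma_k) = L_\kappa\sigma^2$, i.e.\ isotropic noise with curvature constant $L_\kappa$, the two domains together give $2L_\kappa\sigma^2$. The factor $2$ is therefore exactly the count of the two independent extractions.

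\emph{Remainder.} The third-order terms vanish in expectation if the noise is symmetric. The fourth-order terms are bounded by $C\,\mathbb{E}\|\xi\|^4 = O(\sigma^4)$, assuming bounded kurtosis. Boundedness of $\kappa\in[0,1]$ guarantees that all of these expectations exist.

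The main obstacle is making ``Lipschitz with respect to edit distance'' carry second-order information, because a Lipschitz-only hypothesis gives merely $|\mathbb{E}\Delta_{obs}-\Delta^*| \le 2L_\kappa\,\mathbb{E}\|\xi\|$, which is $O(\sigma)$ rather than an equality. I would handle this by strengthening the hypothesis explicitly to a $C^2$ condition, meaning $L_\kappa$-Lipschitz gradient, together with symmetric, isotropic noise. The proposition is then stated with $2L_\kappa\sigma^2$ as the leading bias. Alternatively, it can be stated as the inequality $\mathbb{E}[\Delta_{obs}] \le \Delta^* + 2L_\kappa\sigma^2 + O(\sigma^4)$, which is what the noise correction $\Delta_{signal}=\max(0,\Delta_{obs}-\Delta_{noise})$ actually uses. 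Finally, I would observe that applying the same expansion to two within-domain extractions gives $\mathbb{E}[\Delta_{noise}] = 2L_\kappa\sigma^2 + O(\sigma^4)$, since there $\Delta^*=0$ and the first-order terms vanish at the minimum. This justifies the additive subtraction.
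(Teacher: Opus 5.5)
\textbf{There is a genuine gap: the coefficient $2L_\kappa\sigma^2$ is assumed, not derived, and the required normalization cannot hold for a $[0,1]$-valued kernel.}

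The paper gives no proof of this proposition. Its only justification is the remark that there is ``one $L_\kappa\sigma^2$ contribution per circuit, symmetric across the pair.'' Your second-order expansion faithfully formalizes that remark: linear terms vanish by zero mean, the cross block $H_{12}$ vanishes by independence, and the factor $2$ counts the two extractions. The paper's remark does not address the gap either.

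\emph{The normalization restates the conclusion.} The identity $\tfrac12\mathrm{tr}(H_{kk}\Sigma_k)=L_\kappa\sigma^2$ is the desired result written as a hypothesis. An $L_\kappa$-Lipschitz gradient yields only $\tfrac12\mathrm{tr}(H_{kk}\Sigma_k)\le\tfrac{L_\kappa}{2}\mathrm{tr}\,\Sigma_k$, which is an inequality with a different constant. Likewise, a first-order Lipschitz bound with expected edit distance $\sigma^2$ per circuit, probably the reading behind the paper's remark, gives only $|\mathbb{E}[\Delta_{obs}]-\Delta^*|\le 2L_\kappa\sigma^2$.

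\emph{The identity cannot hold at every base pair.} Since $L_\kappa$ belongs to the kernel alone, the proposition, and your closing within-domain step, need the identity at \emph{every} base pair. No $[0,1]$-valued kernel permits this. At a pair with $\kappa(\mathcal{C}_1,\mathcal{C}_2)=0$ one has $\Delta_{obs}\le 1=\Delta^*$ pointwise. The bias is then non-positive, and $2L_\kappa\sigma^2+O(\sigma^4)\le 0$ forces $L_\kappa=0$.

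Your own suggested embedding shows what actually happens. With the normalization $\mathcal{K}_{\mathrm{norm}}$ from the implementation, $\kappa$ is the cosine of feature vectors $x,y\in\mathbb{R}^n$. A direct computation gives $\mathrm{tr}\,\nabla_x^2(1-\kappa)=(n-1)\kappa^*/\|x\|^2$ with $\kappa^*=\kappa(x,y)$. Under isotropic noise of per-coordinate variance $s^2$, the cross-domain second-order bias is $\tfrac{(n-1)s^2}{2}\kappa^*(\|x\|^{-2}+\|y\|^{-2})$. The same computation at $(x,x)$ gives the within-domain bias $b_D=(n-1)s^2/\|x\|^2$.

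Hence $\mathbb{E}[\Delta_{obs}]\approx\Delta^*+(1-\Delta^*)\,\bar b$, where $\bar b$ is the average of the two within-domain biases. The curvature at the diagonal differs from that at $(\mathcal{C}_1,\mathcal{C}_2)$. So your step that ``the same expansion'' applied within domain returns $2L_\kappa\sigma^2$ fails, and the correction is multiplicative rather than additive.

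Your fallback inequality does not rescue the subtraction either. One-sided upper bounds on the bias of $\Delta_{obs}$ and on $\mathbb{E}[\Delta_{noise}]$ say nothing about their difference.

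What your argument does establish is one of two weaker statements:
\begin{itemize}
\item the base-point-dependent expansion $\mathbb{E}[\Delta_{obs}]=\Delta^*+\tfrac12\sum_k\mathrm{tr}\bigl(H_{kk}(\mathcal{C}_1,\mathcal{C}_2)\Sigma_k\bigr)+O(\sigma^4)$, under your smoothness, symmetry and moment assumptions;
\item under $L$-smoothness alone, the descent-lemma bound $\mathbb{E}[\Delta_{obs}]\le\Delta^*+\tfrac{L}{2}\bigl(\mathbb{E}\|\xi_{D_1}\|^2+\mathbb{E}\|\xi_{D_2}\|^2\bigr)$, which needs only zero-mean noise, not independence or isotropy.
\end{itemize}
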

The $2L_\kappa\sigma^2$ term is exactly what $\Delta_{noise}$ estimates (one $L_\kappa\sigma^2$ contribution per circuit, symmetric across the pair). Subtracting it yields an unbiased estimator of $\Delta^*$ up to $O(\sigma^4)$.

\subsection{Circuit Fragility Test for PACS dataset}
Extracted circuits can be unstable due to finite-sample effects, raising the concern that cross-domain differences may reflect extraction noise rather than genuine domain shift. To disentangle these factors, we estimate the noise $\Delta_{\mathrm{noise}}$ by repeatedly extracting circuits from multiple random subsets of the same domain, where any variation reflects circuit-extraction noise. We then decompose the observed cross-domain difference as $\Delta_{\mathrm{obs}} = \Delta_{\mathrm{signal}} + \Delta_{\mathrm{noise}}$, with $\Delta_{\mathrm{signal}} = \max(0, \Delta_{\mathrm{obs}} - \Delta_{\mathrm{noise}})$. The signal-to-noise ratio $\mathrm{SNR} = \Delta_{\mathrm{signal}} / \Delta_{\mathrm{noise}}$ quantifies whether circuit variation is dominated by noise or domain shift.
\begin{table*}[t]
\centering
\caption{\textbf{Signal-noise analysis and noise correction ablations on PACS.}
Cross-domain circuit differences substantially exceed extraction noise, 
and circuit stability varies systematically across training objectives and architectures.}
\label{tab:noise_all}
\begin{minipage}[t]{0.48\textwidth}
\centering
\resizebox{\textwidth}{!}{
\begin{tabular}{lcccc}
\toprule
Quantity & Mean & Std & Min & Max \\
\midrule
$\Delta_{\text{noise}}$ & 0.096 & 0.021 & 0.058 & 0.148 \\
$\Delta_{\text{obs}}$ & 0.501 & 0.093 & 0.321 & 0.678 \\
$\Delta_{\text{signal}}$ & 0.405 & 0.089 & 0.242 & 0.574 \\ \midrule
SNR & 4.22 & 1.18 & 1.63 & 7.89 \\
SNR $>1$ & \multicolumn{4}{c}{$47/48$ (97.91\%)} \\
\bottomrule
\end{tabular}}
\small
\caption*{(a) Signal--noise decomposition}
\end{minipage}
\hfill
\hfill
\begin{minipage}[t]{0.48\textwidth}
\centering
\resizebox{\textwidth}{!}{
\begin{tabular}{lc|lc}
\toprule
Objective & Noise & Arch. & Noise \\
\midrule
IRM   & $0.073\!\pm\!0.014$ & ViT-B/16 & $0.079\!\pm\!0.016$ \\
CORAL & $0.091\!\pm\!0.017$ & MobileNet & $0.088\!\pm\!0.019$ \\
DANN  & $0.098\!\pm\!0.020$ & ResNet50 & $0.108\!\pm\!0.022$ \\
ERM   & $0.122\!\pm\!0.024$ & MLP & $0.127\!\pm\!0.025$ \\
\bottomrule
\end{tabular}}
\caption*{(b) Noise floor breakdown}
\end{minipage}
\vspace{-4mm}
\end{table*}

As shown in \cref{tab:noise_all}(a), within-domain variability is relatively small ($0.096 \pm 0.021$) compared to observed cross-domain dissimilarity ($0.501 \pm 0.093$), indicating that most measured differences cannot be explained by noise alone. Consistently, the estimated signal remains large ($0.405 \pm 0.089$), the average signal-to-noise ratio is high ($\mathrm{SNR}=4.22$), and $47/48$ learners satisfying $\mathrm{SNR} > 1$. Together, these observations indicate that cross-domain circuit divergence is predominantly driven by genuine domain-induced reorganization rather than stochasticity in circuit extraction. Furthermore, \cref{tab:noise_all}(b) shows systematic variation in stability: invariance-promoting objectives (IRM, CORAL) and structured architectures (ViTs) exhibit lower within-domain variability, while ERM and MLPs show higher variability, suggesting that both training objectives and architectural inductive biases influence circuit robustness.
 
\subsection{Circuit Fragility Test for OfficeHome and DomainNet}
\cref{tab:officehome_noise_all} reports signal–to-noise analysis on Office-Home, a markedly harder benchmark than PACS due to its 65-class label space, fine-grained visual distinctions, and predominantly semantic domain shifts (e.g., Art vs. Real World office objects). Two factors raise the extraction noise floor relative to PACS: (1) the larger class count reduces the per-class sample budget, lowering statistical precision in importance scoring and edge estimation; and (2) semantically adjacent classes (e.g., Backpack, Bag, Briefcase) induce overlapping circuits in representation space, making discrete top-K selection more sensitive to finite-sample noise.

\begin{table*}[!htbp]
\centering
\caption{\textbf{Signal-noise analysis and correction on DomainNet.}
Although extraction noise increases substantially for the 345-class, six-domain setting, cross-domain circuit signal remains dominant, and noise correction yields the largest correlation gain among all benchmarks.}
\label{tab:domainnet_noise_all}

\begin{minipage}[t]{0.48\textwidth}
\centering
\resizebox{\textwidth}{!}{
\begin{tabular}{lcccc}
\toprule
Quantity & Mean & Std & Min & Max \\
\midrule
$\hat{\Delta}_{\text{noise}}$ & 0.142 & 0.034 & 0.086 & 0.221 \\
$\Delta_{\text{obs}}$         & 0.559 & 0.108 & 0.358 & 0.762 \\
$\Delta_{\text{signal}}$      & 0.417 & 0.098 & 0.234 & 0.593 \\
SNR                           & 2.94 & 0.91 & 1.06 & 5.41 \\
SNR $>1$ & \multicolumn{4}{c}{$44/48$ (91.66\%)} \\
\bottomrule
\end{tabular}}
\small
\caption*{(a) Signal--noise decomposition}
\end{minipage}
\hfill
\begin{minipage}[t]{0.48\textwidth}
\centering
\resizebox{\textwidth}{!}{
\begin{tabular}{lc@{\hspace{0.7cm}}lc}
\toprule
Objective & Noise & Arch. & Noise \\
\midrule
IRM   & $0.107\!\pm\!0.022$ & ViT-S & $0.113\!\pm\!0.024$ \\
CORAL & $0.131\!\pm\!0.028$ & MobileNet & $0.128\!\pm\!0.029$ \\
DANN  & $0.143\!\pm\!0.031$ & ResNet50 & $0.155\!\pm\!0.033$ \\
ERM   & $0.176\!\pm\!0.038$ & Mixer & $0.182\!\pm\!0.039$ \\
\bottomrule
\end{tabular}}
\small
\caption*{(b) Noise floor breakdown}
\end{minipage}
\vspace{-6mm}
\end{table*}

Despite these challenges, the domain-induced circuit signal remains dominant. The mean noise floor increases to 
$\hat{\Delta}_{\text{noise}} = 0.118 \pm 0.027$ (vs.\ 0.096 on PACS), while the mean cross-domain dissimilarity rises to 
$\Delta_{\text{obs}} = 0.540 \pm 0.102$, yielding a mean SNR of 3.58—lower than PACS but still well above 1. 
Of 48 learners, 46 satisfy $\text{SNR} > 1$; the two exceptions are MLP-Mixer models trained with ERM, a configuration that consistently lies at the boundary of reliable extraction. 
On Office-Home, the failure is most pronounced for the Clipart domain, where large photorealistic–clip-art gaps induce partial invariance under ERM, yet the absence of an explicit invariance objective leaves the extracted circuits unstable under resampling.

\begin{table*}[!htbp]
\centering
\caption{\textbf{Ablations on Office-Home.}
The optimal settings shift to $K{=}60$ due to the larger label space, while the preferred bottleneck ratio ($r{=}4$) and edge threshold ($\epsilon{=}10^{-4}$) remain consistent with PACS.}
\label{tab:ablation_oh_all}

\begin{minipage}[t]{0.32\textwidth}
\centering
\resizebox{\textwidth}{!}{
\begin{tabular}{cccc}
\toprule
$K$ & CAS & OOD & $\rho_S$ \\
\midrule
5   & 0.08 & 54.32 & 0.62 \\
10  & 0.14 & 58.71 & 0.69 \\
20  & 0.22 & 62.85 & 0.74 \\
30  & 0.28 & 65.10 & 0.78 \\
\textbf{60}  & \textbf{0.35} & \textbf{68.47} & \textbf{0.82} \\
100 & 0.38 & 68.73 & 0.80 \\
164 & 0.39 & 68.81 & 0.77 \\
200 & 0.40 & 68.84 & 0.75 \\
\bottomrule
\end{tabular}}
\small
\caption*{(a) TopK neurons per layer}
\end{minipage}
\hfill
\begin{minipage}[t]{0.32\textwidth}
\centering
\resizebox{\textwidth}{!}{
\begin{tabular}{ccccc}
\toprule
$r$ & Params & CAS & OOD & $\rho_S$ \\
\midrule
1  & 1,638,400 & 0.27 & 69.18 & 0.73 \\
2  & 819,200   & 0.31 & 68.92 & 0.78 \\
\textbf{4}
& \textbf{409,600}
& \textbf{0.35}
& \textbf{68.47}
& \textbf{0.82} \\
8  & 204,800   & 0.39 & 67.35 & 0.79 \\
16 & 102,400   & 0.44 & 65.48 & 0.74 \\
\bottomrule
\end{tabular}}
\small
\caption*{(b) Adapter bottleneck ratio}
\end{minipage}
\hfill
\begin{minipage}[t]{0.32\textwidth}
\centering
\resizebox{\textwidth}{!}{
\begin{tabular}{ccccc}
\toprule
$\epsilon$ & Edges & CAS & OOD & $\rho_S$ \\
\midrule
$10^{-2}$ & 34  & 0.18 & 64.82 & 0.71 \\
$10^{-3}$ & 89  & 0.27 & 66.93 & 0.78 \\
$\mathbf{10^{-4}}$
& \textbf{158}
& \textbf{0.35}
& \textbf{68.47}
& \textbf{0.82} \\
$10^{-5}$ & 237 & 0.37 & 68.61 & 0.80 \\
$10^{-6}$ & 362 & 0.38 & 68.66 & 0.76 \\
\bottomrule
\end{tabular}}
\small
\caption*{(c) Edge threshold}
\end{minipage}
\vspace{-6mm}
\end{table*}
After noise-floor subtraction, the noise-corrected CAS increases the Spearman correlation from 
$\rho_S = 0.790$ (raw CAS) to $\rho_S = 0.830$, a larger absolute gain than on PACS, indicating that elevated noise on Office-Home was masking a stronger mechanistic signal. The objective-wise noise ordering matches PACS—IRM (0.089), CORAL (0.108), DANN (0.119), ERM (0.146)—supporting the view that invariance-driven objectives regularize circuit structure and improve reproducibility under resampling. Architecturally, ViT-B/16 remains most stable (0.094) and MLP-Mixer least (0.152), with a larger ResNet-50–Mixer gap than on PACS (0.029 vs.\ 0.019). This aligns with the hypothesis that multi-head attention provides discrete anchoring points for circuit extraction, whereas the Mixer's all-MLP design distributes class information more diffusely.
\begin{table*}[!htbp]
\centering
\caption{\textbf{Ablations on DomainNet.}
The optimal TopK increases to $K{=}164$ for the 345-class setting, while the preferred bottleneck ratio ($r{=}4$) and edge threshold ($\epsilon{=}10^{-4}$) remain consistent across benchmarks.}
\label{tab:ablation_dn_all}

\begin{minipage}[t]{0.32\textwidth}
\centering
\resizebox{\textwidth}{!}{
\begin{tabular}{cccc}
\toprule
$K$ & CAS & OOD & $\rho_S$ \\
\midrule
5   & 0.05 & 32.18 & 0.54 \\
10  & 0.09 & 37.45 & 0.60 \\
20  & 0.14 & 42.83 & 0.65 \\
40  & 0.19 & 48.62 & 0.70 \\
80  & 0.24 & 53.91 & 0.74 \\
120 & 0.28 & 57.28 & 0.77 \\
\textbf{164} & \textbf{0.32} & \textbf{59.74} & \textbf{0.79} \\
200 & 0.33 & 59.93 & 0.77 \\
\bottomrule
\end{tabular}}
\small
\caption*{(a) TopK neurons per layer}
\end{minipage}
\hfill
\begin{minipage}[t]{0.32\textwidth}
\centering
\resizebox{\textwidth}{!}{
\begin{tabular}{ccccc}
\toprule
$r$ & Params & CAS & OOD & $\rho_S$ \\
\midrule
1  & 1,638,400 & 0.24 & 60.85 & 0.69 \\
2  & 819,200   & 0.28 & 60.42 & 0.74 \\
\textbf{4}
& \textbf{409,600}
& \textbf{0.32}
& \textbf{59.74}
& \textbf{0.79} \\
8  & 204,800   & 0.36 & 58.16 & 0.76 \\
16 & 102,400   & 0.42 & 55.23 & 0.71 \\
\bottomrule
\end{tabular}}
\small
\caption*{(b) Adapter bottleneck ratio}
\end{minipage}
\hfill
\begin{minipage}[t]{0.32\textwidth}
\centering
\resizebox{\textwidth}{!}{
\begin{tabular}{ccccc}
\toprule
$\epsilon$ & Edges & CAS & OOD & $\rho_S$ \\
\midrule
$10^{-2}$ & 72  & 0.15 & 55.38 & 0.66 \\
$10^{-3}$ & 186 & 0.24 & 57.92 & 0.74 \\
$\mathbf{10^{-4}}$
& \textbf{327}
& \textbf{0.32}
& \textbf{59.74}
& \textbf{0.79} \\
$10^{-5}$ & 498 & 0.34 & 59.91 & 0.77 \\
$10^{-6}$ & 753 & 0.35 & 59.96 & 0.73 \\
\bottomrule
\end{tabular}}
\small
\caption*{(c) Edge threshold}
\end{minipage}
\vspace{-4mm}
\end{table*}

Similarly, \cref{tab:domainnet_noise_all} presents signal–to-noise analysis on DomainNet, the most demanding benchmark: 6 domains, 345 classes, and ~600K images spanning extreme stylistic variation (e.g., Quickdraw sketches, Infograph diagrams, Real photos). The extraction noise floor is highest across benchmarks, 
$\hat{\Delta}_{\text{noise}} = 0.142 \pm 0.034$, due to three compounding factors: (1) the 345-class label space sharply reduces per-class data for circuit estimation, amplifying top-$K$ selection variance; (2) severe cross-domain heterogeneity causes identical semantics to rely on divergent low-level features, increasing subsample sensitivity; and (3) the 15 domain pairs (vs.\ 6 in PACS) enlarge the combinatorial surface for noise to manifest.

Despite the increased difficulty, domain-induced circuit reorganization remains dominant. The mean cross-domain dissimilarity 
$\Delta_{\text{obs}} = 0.559 \pm 0.108$ exceeds the noise floor by nearly $4\times$, and the mean SNR is 2.94, with 44/48 learners satisfying $\text{SNR} > 1$.  The four failures—all MLP-Mixer models trained with ERM—are concentrated on Infograph and Quickdraw, where extreme abstraction combined with no invariance objective yields maximally unstable top-$K$ boundaries. This contrasts with PACS and Office-Home, indicating that under severe heterogeneity, lack of invariance becomes structurally destabilizing. Noise correction yields the largest gain across benchmarks, increasing mean Spearman $\rho_S$ from 0.725 to 0.775 (+0.050), compared to +0.030 (PACS) and +0.040 (Office-Home). This aligns with the mechanism: harder benchmarks allocate more raw CAS variance to extraction stochasticity, so subtracting $\Delta_{\text{noise}}$ better isolates $\Delta_{\text{signal}}$. Objective- and architecture-wise orderings mirror PACS but at higher absolute levels—IRM (0.107), CORAL (0.131), DANN (0.143), ERM (0.176); ViT-B/16 (0.113), MobileNetV2 (0.128), ResNet-50 (0.155), MLP-Mixer (0.182)—indicating that invariance objectives and attention-based architectures retain their relative regularization advantages even under extreme domain shift.

\section{Circuit Extraction Hyperparameters}

\begin{table*}[!htbp]
\centering
\caption{\textbf{Ablation studies for TopK neurons, bottleneck ratio, and edge threshold.}
CAS and OOD performance improve as invariant circuit structure emerges, with optimal correlation at $K=30$, $r=4$, and $\epsilon=10^{-4}$. ($\overline{\text{CAS}}$: Average of CAS across all pair of domains)}
\label{tab:ablation_all}

\begin{minipage}[t]{0.32\textwidth}
\centering
\resizebox{\textwidth}{!}{
\begin{tabular}{cccc}
\toprule
$K$ & $\overline{\text{CAS}}$ & OOD Acc & $\rho_S$ \\
\midrule
5   & 0.12 & 68.41 & 0.71 \\
10  & 0.21 & 73.26 & 0.79 \\
16  & 0.29 & 75.90 & 0.83 \\
\textbf{30}  & \textbf{0.38} & \textbf{77.83} & \textbf{0.88} \\
60  & 0.41 & 78.05 & 0.87 \\
100 & 0.43 & 78.14 & 0.85 \\
164 & 0.44 & 78.19 & 0.83 \\
\bottomrule
\end{tabular}}
\caption*{(a) TopK neurons per layer}
\end{minipage}
\hfill
\begin{minipage}[t]{0.32\textwidth}
\centering
\resizebox{\textwidth}{!}{
\begin{tabular}{ccccc}
\toprule
$r$ & Params & $\overline{\text{CAS}}$ & OOD Acc & $\rho_S$ \\
\midrule
1  & 1,638,400 & 0.31 & 78.52 & 0.80 \\
2  & 819,200   & 0.35 & 78.21 & 0.85 \\
\textbf{4} & \textbf{409,600} & \textbf{0.38} & \textbf{77.83} & \textbf{0.88} \\
8  & 204,800   & 0.42 & 76.94 & 0.86 \\
16 & 102,400   & 0.48 & 75.30 & 0.82 \\
\bottomrule
\end{tabular}}
\caption*{(b) Adapter bottleneck ratio $r$}
\end{minipage}
\hfill
\begin{minipage}[t]{0.32\textwidth}
\centering
\resizebox{\textwidth}{!}{
\begin{tabular}{ccccc}
\toprule
$\epsilon$ & Edges & $\overline{\text{CAS}}$ & OOD Acc & $\rho_S$ \\
\midrule
$10^{-2}$ & 18  & 0.22 & 74.61 & 0.79 \\
$10^{-3}$ & 47  & 0.31 & 76.48 & 0.85 \\
$\mathbf{10^{-4}}$ & \textbf{83} & \textbf{0.38} & \textbf{77.83} & \textbf{0.88} \\
$10^{-5}$ & 126 & 0.40 & 77.95 & 0.86 \\
$10^{-6}$ & 194 & 0.41 & 78.02 & 0.82 \\
\bottomrule
\end{tabular}}
\caption*{(c) Edge threshold $\epsilon$}
\end{minipage}
\vspace{-4mm}
\end{table*}

We ablate the three key hyperparameters of our pipeline, the number of retained neurons per layer $K$, the adapter bottleneck ratio $r$, and the edge inclusion threshold $\epsilon$, to assess their impact on circuit alignment and OOD prediction quality. For each ablation, we vary one hyperparameter while fixing the others at their defaults ($K{=}30$, $r{=}4$, $\epsilon{=}10^{-4}$) and report mean cross-domain CAS (TK), mean OOD accuracy, and Spearman rank correlation $\rho_S$ between CAS and OOD accuracy, averaged across the four PACS target-domain splits over 48 learners.

Across all three ablations (\cref{tab:ablation_all}), each hyperparameter has a clear operating regime where $\rho_S$ is maximized, with degradation on both sides. For TopK (\cref{tab:ablation_all} (a)), small $K$ produces circuits too sparse to capture discriminative structure (CAS $= 0.12$ at $K{=}5$, $\rho_S = 0.71$), while large $K$ admits peripheral neurons that dilute circuit topology with noise, weakening the correlation ($\rho_S = 0.83$ at $K{=}164$). For the bottleneck ratio (\cref{tab:ablation_all} (b)), narrow adapters ($r{=}16$) lack capacity to capture domain-specific computation, inflating CAS artificially (high similarity because the adapters cannot express differences), while wide adapters ($r{=}1$) overparameterise the circuit space, fragmenting the invariant sub-circuit across redundant pathways and reducing CAS discriminability. For the edge threshold (\cref{tab:ablation_all} (c)), aggressive pruning ($\epsilon{=}10^{-2}$) removes causally relevant edges and collapses circuit topology, while permissive thresholds ($\epsilon{=}10^{-6}$) retain noise edges that obscure the domain-shift signal. In all three cases, the circuit contains the smallest sufficient set of components for near-optimal task performance. This convergence across independent ablations provides strong evidence that CAS is not an artifact of a particular hyperparameter configuration but reflects a robust structural property of the underlying computation. 

The primary departure from PACS is an upward shift in the optimal top-$K$ to $K=60$ (vs.\ $K=30$), reflecting the larger 65-class label space and the need for richer per-layer capacity to resolve fine-grained semantics. Sparse circuits ($K=5$, $\rho_S=0.62$) underfit class distinctions, while overly dense circuits ($K=200$, $\rho_S=0.75$) dilute the signal. Performance peaks at $K=60$ ($\rho_S=0.82$) with a broader plateau than PACS, indicating reduced sensitivity once sufficient capacity is reached. The optimal bottleneck ratio ($r=4$, $\rho_S=0.82$) and edge threshold ($\epsilon=10^{-4}$, $\rho_S=0.82$) match PACS, implying that adapter capacity and edge selection are governed by architectural dimensionality rather than label cardinality. As in PACS, narrow adapters ($r=16$) inflate CAS via capacity constraints, whereas wide adapters ($r=1$) fragment invariant sub-circuits across redundant pathways (\cref{tab:ablation_oh_all}).

The optimal top-$K$ increases to $K=164$, scaling with the 345-class label space. This matches the information-theoretic requirement of at least $O(\log_2 345) \approx 8.4$ bits of class-discriminative signal per layer, necessitating greater neuron retention. Performance peaks at $K=164$ ($\rho_S=0.79$) and declines at $K=200$ ($\rho_S=0.77$), confirming that over-inclusion remains detrimental even at scale (\cref{tab:ablation_dn_all}). The optimal bottleneck ratio ($r=4$) and edge threshold ($\epsilon=10^{-4}$) are unchanged from PACS and Office-Home, indicating these hyperparameters are architecture-determined constants. The edge count at optimal threshold (83, 158, 327) scales approximately linearly with label cardinality, consistent with growth in class-discriminative inter-layer connections. Across all benchmarks, $\rho_S$ is maximized at a clear interior optimum and degrades symmetrically, ruling out configuration-specific artifacts.

\section{CAS vs OOD generalization}
\label{sec:refm}
\textbf{\cref{fig:pacs} (PACS — all targets).} CAS (TK) maintains a near-monotone relationship with leave-one-domain-out OOD accuracy across ArtPainting ($\rho_S=0.91$), Photo ($\rho_S=0.93$), and Sketch ($\rho_S=0.88$). The objective-wise clustering is consistent: ERM occupies the low-CAS/low-accuracy regime, IRM the high/high regime, with CORAL and DANN interpolating. The weakest correlation occurs on Sketch, the hardest target, where compressed accuracy ranges induce minor rank inversions—consistent with finite-sample effects when CAS separations are small. Representational baselines (CKA, SVCCA, RSA) yield diffuse scatter across all splits. CKA exhibits saturation ($\approx$0.70–0.88), remaining insensitive to circuit-level reorganization under domain shift. SVCCA and RSA show no consistent objective ordering and occasionally invert the ground-truth ranking (e.g., on Sketch), indicating that subspace fail to capture invariance-relevant circuit structure.

\textbf{\cref{fig:of} (Office-Home — all targets).} CAS exhibits a near-monotone relationship with OOD accuracy across Art ($\rho_S=0.81$), Clipart ($\rho_S=0.76$), Product ($\rho_S=0.82$), and Real World ($\rho_S=0.77$), with a clear ERM-to-IRM gradient in every split. Correlation is tightest on Product and Art, and modestly noisier on Clipart and Real World, where stylistic and viewpoint variation increases within-objective rank variability. Relative to PACS, the CORAL–DANN gap narrows, reflecting Office-Home’s sensitivity to second-order feature statistics. Representational baselines again fail: CKA saturates ($\approx$0.70–0.88) across accuracy levels, and SVCCA/RSA yield inconsistent orderings. On Clipart, DANN substantially outperforms ERM at nearly identical CKA values, underscoring that activation-level similarity is insensitive to circuit-level routing differences that drive generalization.

\textbf{\cref{fig:dn} (DomainNet — all targets).} CAS preserves a monotone relationship with OOD accuracy across Clipart, Infograph, Painting, Quickdraw, Real, and Sketch, despite wide accuracy variation (~5\%–70\%). The ERM-to-IRM gradient remains visible on every split, including the most difficult Infograph and Quickdraw domains. The inter-learner CAS spread is larger than on PACS or Office-Home, reflecting aggregation over more domain pairs and a richer distributional signal. Second, Quickdraw shows the weakest within-objective clustering: IRM models differ by up to 15 points in OOD accuracy at similar CAS values, likely due to the binary line-drawing regime interacting with seed-level variation. Representational baselines again fail: CKA exhibits pronounced saturation, and SVCCA often inverts IRM–ERM ordering, indicating that activation-level similarity carries no reliable circuit-level signal in the large-scale multi-domain setting.

\begin{figure}[!htbp]
    \centering
    \includegraphics[width=\linewidth]{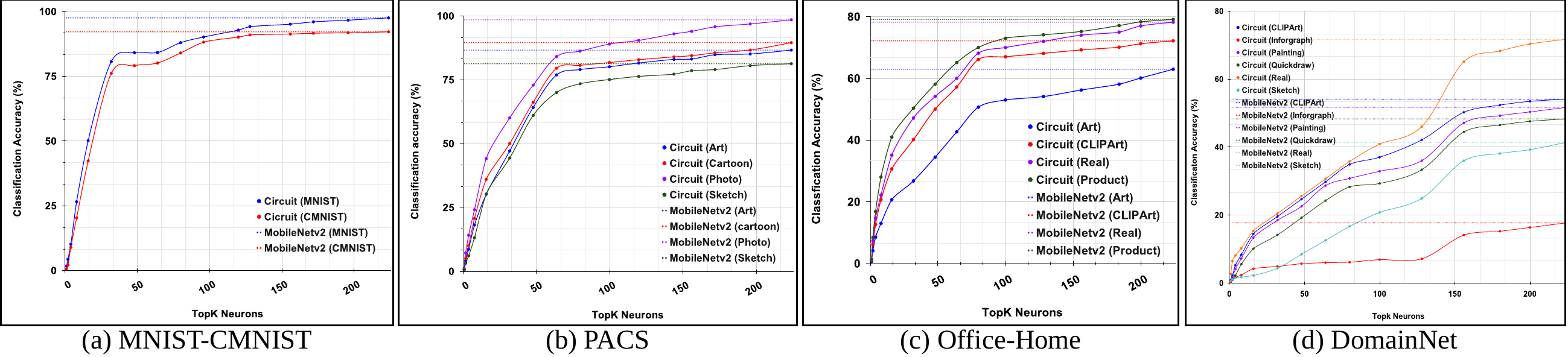}
    \caption{\textbf{Circuits are K-Minimal:} Up to TopK neurons, the performance of the circuits grows exponentially. Then, if we add more neurons, the performance grows very slowly until it matches the original network performance.}
    \label{fig:topk}
\end{figure}

\section{Circuit Minimality and TopK Selection}
\label{sec:topk}

\emph{Circuit minimality} requires retaining the smallest sufficient neuron subset that preserves near-optimal accuracy. Varying $K$ reveals a consistent two-phase pattern across benchmarks (see \cref{fig:topk}).

\myparagraph{Phase I: Rapid convergence:}
For small $K$, accuracy rises steeply as the core invariant sub-circuit is recruited. The heavy-tailed importance distribution implies that a small fraction of neurons carries most causal signal. The elbow occurs around $K\!\approx\!10$--$16$ (MNIST), $25$--$40$ (PACS), $50$--$75$ (Office-Home), and $K^*\!=\!164$ (DomainNet).

\myparagraph{Phase II: Diminishing returns:}
Beyond the elbow, additional neurons yield marginal gains ($<1$--$2\%$ per +10 neurons). These peripheral nodes lie near the noise floor and do not alter core topology. Their contributions are down-weighted by $\mathcal{K}$ and largely cancel in CAS, explaining robustness to $K$ once $K^*$ is reached.

\myparagraph{DomainNet secondary transition:}
DomainNet exhibits a second inflection ($K\!\approx\!80$--$120$), reflecting recruitment of domain-bridging mid-level features needed to span extreme stylistic gaps (e.g., Quickdraw vs.\ Real), before plateauing at $K^*$.

\myparagraph{Robustness of CAS:}
All domains within a dataset attain near-optimal accuracy at the same $K^*$, indicating task-level minimality. Evaluating CAS at $\{0.5K^*, K^*, 1.5K^*, 2K^*\}$ changes $\rho_S$ by $<0.03$. Below $0.5K^*$, circuits lose discriminative structure; above $2K^*$, noise edges accumulate, but CAS remains stable due to kernel down-weighting. Bounded node counts and edit distances ensure topological comparability.

%% file: neurips_tex/03_ACE.tex
\section{Adaptive Circuit Tracing (ACE)}
\label{sec:ace}

We introduce ACE, a framework that inserts MLP adapters into selected intermediate layers of a frozen pretrained backbone, trains only the adapters and a task-specific head, and uses causal activation \cite{zhangtowards} to construct class-specific circuits \footnote[1]{A class-specific circuit is a sparse directed graph of adapter neurons (CNNs) and attention heads (ViTs) that are jointly necessary for prediction.}. Previous circuit extraction techniques \cite{ameisen2025circuit,voss2021visualizing,olah2020an,olah2020zoom} have focused primarily on explaining prediction behavior in \textit{fixed settings}, while this framework is applicable across major backbone families like convolutional networks \cite{simonyan2014very,he2016deep,sandler2018mobilenetv2}, vision transformers \cite{dosovitskiyimage}, and deep MLPs \cite{popescu2009multilayer}. The core procedure remains unchanged across architectures except for the adapter placement as it vary with backbone.


\myparagraph{Problem Setup:}
Let $f_\theta(\mathbf x)$ denote a pretrained backbone with frozen parameters $\theta$. We augment selected intermediate layers of $f_\theta$ with trainable adapter modules $\{\mathcal A_{\psi_\ell}^{(\ell)}\}_{\ell=1}^{L},$ where $\psi_\ell$ denotes the parameters of the adapter inserted at layer $\ell$. A trainable classification head $g_\phi$ with parameters $\phi$ is attached at the output. Only the adapter parameters $\{\psi_\ell\}_{\ell=1}^L$ and head parameters $\phi$ are optimized; the backbone parameters $\theta$ remain fixed. We decompose the frozen backbone as $f_\theta = f_\theta^{(L)}\circ \cdots \circ f_\theta^{(1)}.$ and define frozen layer activations recursively by $z_\ell(\mathbf x) = f_\theta^{(\ell)}(z_{\ell-1}(\mathbf x))$, with an adapter inserted after layer $\ell$, $h_\ell(\mathbf x) = z_\ell(\mathbf x) + \mathcal A_{\psi_\ell}^{(\ell)}(z_\ell(\mathbf x))$ and the next backbone layer receives $h_\ell$ as input: $z_{\ell+1}(\mathbf x) = f_\theta^{(\ell+1)}(h_\ell(\mathbf x)).$ This residual structure ensures that if an adapter output is zero, the adapter reduces to the identity map, and the original backbone computation is recovered.

For transformer backbones (ViTs), each layer function $f_\theta^{(\ell)}$ consists of a multi-head self-attention (MHSA) sublayer followed by a feedforward network (FFN) and residual connections. We denote the output contributed by attention head $h$ in layer $\ell$ by 
$\mathbf a_\ell^{(h)}(\mathbf x)$, with the full attention output given by $\mathrm{MHSA}_\ell(\mathbf z) = W^O \big[ \mathbf a_\ell^{(1)};\dots; \mathbf a_\ell^{(h)} \big].$ In this case, the adapter is inserted after the full transformer block (MHSA~$+$~FFN~$+$~residual), meaning that both attention and adapter computations are available as circuit components.




\begin{wrapfigure}{r}{0.5\textwidth}
\vspace{-4mm}
\centering
\includegraphics[width=0.5\textwidth]{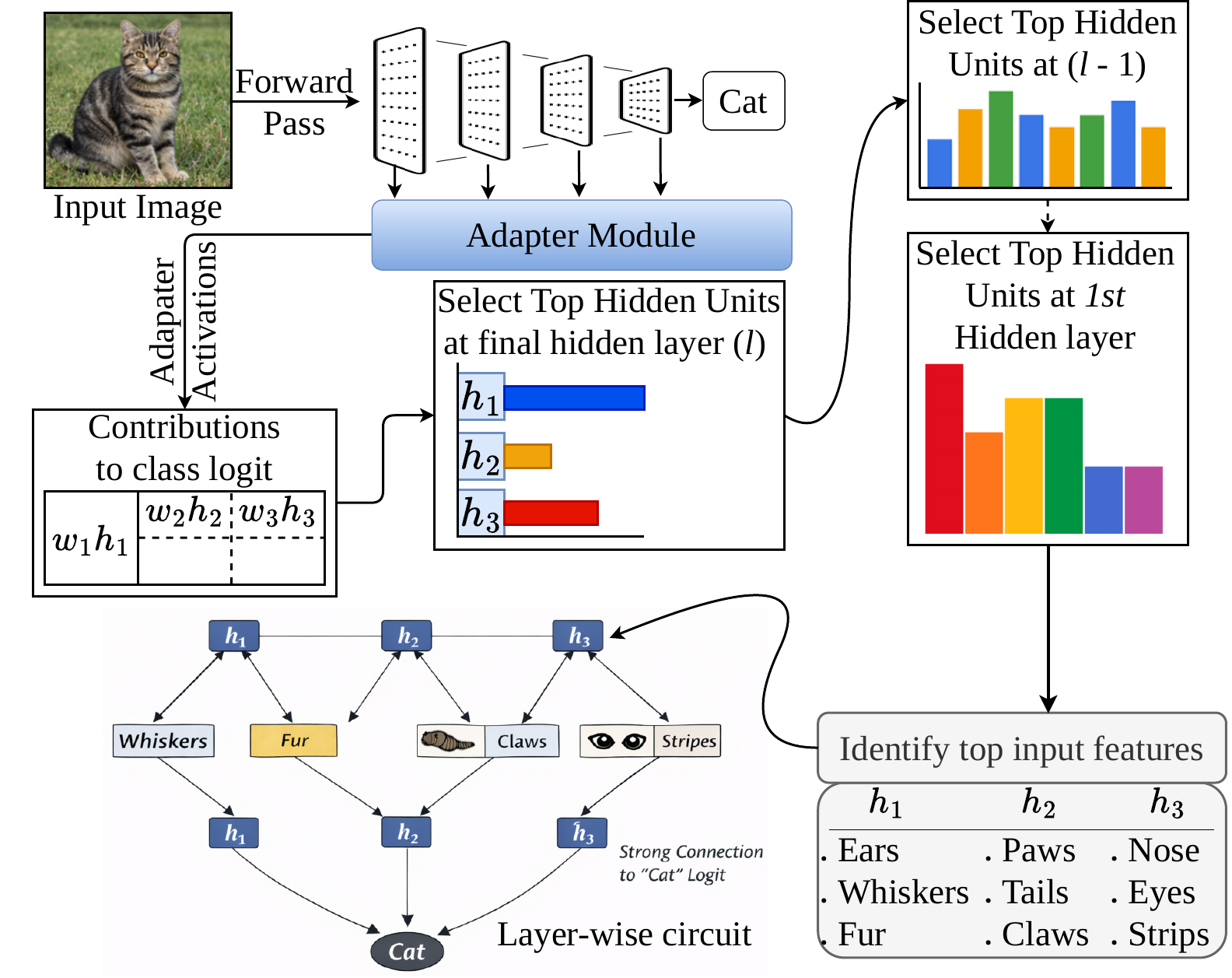}
\vspace{-4mm}
\caption{Layerwise adaptive circuit extraction.}
\label{fig:self}
\vspace{-6mm}
\end{wrapfigure}
\myparagraph{Adapter Module Design:}
Every adapter $\mathcal A_{\psi_\ell}^{(\ell)}$ is a lightweight two-layer MLP that operates on the channel (or feature) dimension of the intermediate representation: $\mathcal A_{\psi_\ell}^{(\ell)}(\mathbf z) = W_\ell^{\mathrm{up}} \sigma \big(W_\ell^{\mathrm{down}}\mathbf z
\big)$, where $W_\ell^{\mathrm{down}} \in \mathbb{R}^{d/r \times d}$ projects to a bottleneck of ratio $r$, $\sigma$ is a ReLU nonlinearity, and $W_\ell^{\mathrm{up}} \in \mathbb{R}^{d \times d/r}$ projects back. 
For convolutional backbones, the adapter acts independently at each spatial location. For transformers, it operates over token embeddings. For deep MLPs, it operates directly on hidden representations.

\myparagraph{Circuit Extraction via Activation Patching:}
After training, we extract a class-specific circuit by quantifying the causal contribution of individual computational units (i.e., adapter MLP neurons and attention head) via activation patching \cite{zhangtowards}. The procedure consists of two stages: node importance scoring and inter-layer edge estimation.
 
 
 
\myparagraph{Node Importance Scoring:}
For each unit $u$ (either an adapter neuron or an attention head) at layer~$\ell$ and each class~$k$, we compute the causal effect:
\begin{equation}
\label{eq:importance}
\Delta_\ell^{(u)}(k) = \mathbb{E}_{\mathbf{x} \sim \mathcal{D}_k}\!\Bigl[\hat{y}_k(\mathbf{x}) - \hat{y}_k\!\bigl(\mathbf{x} \mid \mathrm{do}(u \!=\! \mathbf{0})\bigr)\Bigr],
\end{equation}
 \noindent where $\mathcal{D}_k$ is the subset of inputs with label~$k$.
A positive $\Delta$ indicates the unit \emph{promotes} the correct class; a negative $\Delta$ indicates \emph{suppression}.
For each layer, we retain the $K$ units of each type with the largest $|\Delta|$. For adapter MLP neurons in high-dimensional layers, exhaustive evaluation of all $D$ dimensions may be prohibitive.
In such cases we randomly sample a candidate set of size $S \geq K$ and select the top-$K$ from this subset, providing a stochastic lower bound on the true top-$K$ importance.
 
\myparagraph{Inter-Layer Edge Estimation:}
To determine how top-$K$ units in one layer influence top-$K$ units in the next, we perform a second round of patching.
For consecutive layers $\ell_1, \ell_2$ and top-$K$ units $u_1 \in \mathcal{T}_{\ell_1},\; u_2 \in \mathcal{T}_{\ell_2}$, we ablate $u_1$ and observe the change in $u_2$'s activation:
 \begin{equation}
\label{eq:edge}
w_{u_1 \to u_2} = \mathbb{E}_{\mathbf{x} \sim \mathcal{D}_k}\!\Bigl[\bar{a}_{\ell_2}^{(u_2)}(\mathbf{x}) - \bar{a}_{\ell_2}^{(u_2)}\!\bigl(\mathbf{x} \mid \mathrm{do}(u_1 \!=\! \mathbf{0})\bigr)\Bigr],
\end{equation}
\noindent where $\bar{a}$ denotes the spatially- or token-averaged activation.
An edge from $(\ell_1, u_1)$ to $(\ell_2, u_2)$ is added to the circuit graph if $|w_{u_1 \to u_2}| > \epsilon$. For architectures with attention heads, we estimate edges within each component type separately: attention-to-attention and MLP-to-MLP across consecutive adapter layers, yielding two parallel edge sets that together form the full circuit. This separation reflects the architectural inductive bias: attention heads and MLP adapters operate at different stages of each transformer block, and their inter-layer causal pathways may carry qualitatively different information (e.g., positional routing via attention vs.\ feature refinement via MLPs).
 
\myparagraph{Circuit Graph Construction:}
 For each class~$c$, the procedure yields a sparse, weighted, directed acyclic graph $\mathcal{G}_c = (\mathcal{V}_c, \mathcal{E}_c)$ where:
\begin{equation}
\label{eq:node_set}
\mathcal{V}_c = \underbrace{\bigcup_{\ell} \{(\ell, \texttt{mlp}, d) : d \in \mathcal{T}_\ell^{\mathrm{mlp}}\}}_{\text{adapter MLP nodes}} \;\cup\; \underbrace{\bigcup_{\ell} \{(\ell, \texttt{attn}, h) : h \in \mathcal{T}_\ell^{\mathrm{attn}}\}}_{\text{attention head nodes (ViT only)}},
\end{equation}
\noindent and
\begin{equation}
\label{eq:edge_set}
\mathcal{E}_c = \bigl\{(u_1, u_2, w_{u_1 \to u_2}) : |w_{u_1 \to u_2}| > \epsilon\bigr\}.
\end{equation}
\noindent Nodes are annotated with their importance scores $\Delta$ and typed as either \texttt{mlp} or \texttt{attn}; edges carry signed weights indicating the direction and magnitude of causal influence.
For convolutional and MLP backbones the attention node set is empty, recovering the simpler adapter-only circuit.

\myparagraph{Computational Complexity and Scalability:}
Let $C$ be the maximum number of adapter neurons, $H$ the number of attention heads per layer (if applicable), $L$ the number of adapter layers, and $K$ the top-$K$ selection size. For adapter MLP neurons, this requires $\mathcal{O}(L \cdot C)$ forward passes (or $\mathcal{O}(L \cdot S)$ with random subsampling of $S$ candidates). For attention heads, it requires an additional $\mathcal{O}(L \cdot H)$ passes. Similarly, for MLP-to-MLP edges require $\mathcal{O}(L \cdot K)$ forward passes per layer pair; attention-to-attention edges require another $\mathcal{O}(L \cdot K)$. Crucially, all downstream effects on $\mathcal{T}_{\ell_2}$ are read from the cached output of a single patched forward pass per source node, so the cost scales linearly in $K$, not quadratically. \cref{tab:complexity} summarizes the per-class extraction cost across architectures.
 
\begin{table}[h]
\centering
\caption{The number of forward passes for circuit extraction per class.  For ViT, the attention column adds the cost of patching all $H{=}12$ heads per layer.}
\label{tab:complexity}
\begin{tabular}{@{}lcccc@{}}
\toprule
\textbf{Architecture} & \textbf{MLP neurons $C$} & \textbf{Stage 1 (MLP)} & \textbf{Stage 1 (Attn)} & \textbf{Stage 2} \\
\midrule
VGG-19       & 512  & $1{,}536$  & ---   & $30$ \\
ResNet-50    & 2048 & $6{,}144$  & ---   & $30$ \\
MobileNetV2  & 1280 & $3{,}840$  & ---   & $30$ \\
ViT-B/16     & 768  & $2{,}304$  & $36$  & $60$ \\
\bottomrule
\end{tabular}
\end{table}
 
For ViT, Stage~2 doubles because we estimate both MLP-to-MLP and attention-to-attention edge sets. Even so, the total cost remains dominated by Stage~1 MLP patching, and the attention head patching adds only $L \times H = 36$ forward passes with a negligible overhead.

\myparagraph{Architecture-Agnostic Abstraction:}
The framework's scalability across architectures rests on two abstractions. Every adapter, regardless of backbone, exposes a tensor of shape $(B, *, D)$ where $D$ is the feature dimension and $*$ denotes spatial or sequential axes. Ablation operates on dimension~$D$: for convolutions, $\mathbf{h}[\,:\,,c,\,:\,,\,:] = 0$; for transformers, $\mathbf{h}[\,:\,,0,d] = 0$ (CLS token); for MLPs, $\mathbf{h}[\,:\,,d] = 0$. A single \textsc{PatchMLP} subroutine handles all cases. The attention patching module (\textsc{PatchHead}) activates only when the backbone contains self-attention layers. For convolutional and MLP backbones, the attention node set $\mathcal{T}_\ell^{\texttt{attn}}$ is empty and the algorithm reduces to the adapter-only variant. For ViT, the self-attention modules are discovered by traversing the encoder's layer list, and each head is ablated by reshaping the attention output into $(B, T, H, D/H)$ and zeroing the relevant head slice. The end-to-end algorithm of the adaptive circuit extraction has been obtained in \cref{alg:ace}.

\begin{algorithm}[!htbp]
\caption{\textsc{Adapter Circuit Extraction (ACE)}}
\label{alg:ace}
\begin{algorithmic}[1]
\Require Trained model $f$ with adapters $\{\mathcal{A}_\ell\}$, attention modules $\{\mathrm{Attn}_\ell\}$ (if any), data $\mathcal{D}$, top-$K$, threshold $\epsilon$
\Ensure Class-specific circuit graphs $\{\mathcal{G}_k\}$
\State $\{\mathcal{G}_k\} \leftarrow \varnothing$
\For{each mini-batch $(\mathbf{X}, \mathbf{y}) \in \mathcal{D}$}
  \State $\mathbf{O}, \mathcal{C} \leftarrow \textsc{ForwardWithCache}(f, \mathbf{X})$ \Comment{Cache all adapter \emph{and} attention outputs}
  \For{each class $k \in \mathrm{unique}(\mathbf{y})$}
    \State $\mathcal{I}_k \leftarrow \{\,i : y_i = k\,\}$; \quad $b_k \leftarrow \mathrm{mean}(\mathbf{O}[\mathcal{I}_k, k])$
    \Statex \hspace{1.5em}\textcolor{gray}{\textit{--- Stage 1: Node importance ---}}
    \For{each adapter layer $\ell$}
      \For{each neuron $d$ in adapter $\mathcal{A}_\ell$} \Comment{or random subset}
        \State $\tilde{\mathbf{O}},\_ \leftarrow \textsc{PatchMLP}(f, \mathbf{X}, \ell, d)$
        \State $\Delta_{\ell,d}^{\texttt{mlp}} \leftarrow b_k - \mathrm{mean}(\tilde{\mathbf{O}}[\mathcal{I}_k, k])$
      \EndFor
      \State $\mathcal{T}_\ell^{\texttt{mlp}} \leftarrow \text{top-}K \text{ by } |\Delta^{\texttt{mlp}}|$; add MLP nodes to $\mathcal{G}_k$
      \If{$\mathrm{Attn}_\ell$ exists} \Comment{ViT only}
        \For{each head $h = 1, \dots, H$}
          \State $\tilde{\mathbf{O}},\_ \leftarrow \textsc{PatchHead}(f, \mathbf{X}, \ell, h)$
          \State $\Delta_{\ell,h}^{\texttt{attn}} \leftarrow b_k - \mathrm{mean}(\tilde{\mathbf{O}}[\mathcal{I}_k, k])$
        \EndFor
        \State $\mathcal{T}_\ell^{\texttt{attn}} \leftarrow \text{top-}K \text{ by } |\Delta^{\texttt{attn}}|$; add attention nodes to $\mathcal{G}_k$
      \EndIf
    \EndFor
    \Statex \hspace{1.5em}\textcolor{gray}{\textit{--- Stage 2: Inter-layer edges ---}}
    \For{each consecutive pair $(\ell_1, \ell_2)$}
      \For{each MLP neuron $d_1 \in \mathcal{T}_{\ell_1}^{\texttt{mlp}}$}
        \State $\_, \tilde{\mathcal{C}} \leftarrow \textsc{PatchMLP}(f, \mathbf{X}, \ell_1, d_1)$
        \For{each $d_2 \in \mathcal{T}_{\ell_2}^{\texttt{mlp}}$}
          \State $w \leftarrow \mathrm{mean}(\mathcal{C}[\ell_2,d_2]) - \mathrm{mean}(\tilde{\mathcal{C}}[\ell_2,d_2])$
          \State \textbf{if} $|w| > \epsilon$ \textbf{then} add edge $(\ell_1, \texttt{mlp}, d_1) \to (\ell_2, \texttt{mlp}, d_2)$ with weight $w$
        \EndFor
      \EndFor
      \If{$\mathrm{Attn}_{\ell_1}$ and $\mathrm{Attn}_{\ell_2}$ exist}
        \For{each head $h_1 \in \mathcal{T}_{\ell_1}^{\texttt{attn}}$}
          \State $\_, \tilde{\mathcal{C}} \leftarrow \textsc{PatchHead}(f, \mathbf{X}, \ell_1, h_1)$
          \For{each $h_2 \in \mathcal{T}_{\ell_2}^{\texttt{attn}}$}
            \State $w \leftarrow \mathrm{mean}(\mathcal{C}[\ell_2,h_2]) - \mathrm{mean}(\tilde{\mathcal{C}}[\ell_2,h_2])$
            \State \textbf{if} $|w| > \epsilon$ \textbf{then} add edge $(\ell_1, \texttt{attn}, h_1) \to (\ell_2, \texttt{attn}, h_2)$ with weight $w$
          \EndFor
        \EndFor
      \EndIf
    \EndFor
  \EndFor
\EndFor
\State \Return $\{\mathcal{G}_k\}$
\end{algorithmic}
\end{algorithm}

The design of \cref{alg:ace} shows that adding support for a new architecture requires only (a)~a configuration specifying adapter insertion points (fewer than five lines of code) and (b)~optionally, a function that returns the attention modules if they exist. For convolutional features ($B \times C \times H \times W$), ablation sets $\mathbf{h}[\,:\,, c, \,:\,, \,:] = 0$. For transformer hidden states ($B \times T \times D$), ablation targets the CLS token: $\mathbf{h}[\,:\,, 0, d] = 0$. For MLP activations ($B \times D$), ablation sets $\mathbf{h}[\,:\,, d] = 0$. Attention head ablation reshapes $(B, T, D) \to (B, T, H, D/H)$, zeros head~$h$, and reshapes back.

\myparagraph{Discussion:}
The inclusion of attention heads alongside adapter MLP neurons in the ViT circuit provides a more complete picture of the model's computation.
In our experiments, we observe that certain attention heads consistently appear across multiple class circuits, suggesting they perform general-purpose positional routing, while adapter neurons are more class-specific, capturing fine-grained feature refinement.
This decomposition would not be visible in an adapter-only analysis. Because the patching loop iterates over adapter neurons and attention heads rather than all backbone parameters, the computational cost remains tractable even for large backbones. For instance, moving from VGG-19 (144M parameters) to ViT-B/16 (86M parameters) changes the cost only as a function of the adapter dimensionality (512 vs.\ 768) and the small additional attention head budget ($L \times H = 36$ passes), not the total model size.
